\newtheorem{thm}{Theorem}
\newtheorem{lemma}{Lemma}
\newtheorem*{thm*}{Theorem}
\newtheorem*{remark*}{Remark}
\newtheorem*{lemma*}{Lemma}
\newtheorem*{cor*}{Corollary}
\newtheorem*{obs*}{Observation}
\newtheorem*{prop*}{Proposition}
\DeclareMathOperator{\adv}{adv}
\DeclareMathOperator{\fgm}{fgm}
\DeclareMathOperator{\erm}{erm}
\DeclareMathOperator{\pgm}{pgm}
\DeclareMathOperator{\wrm}{wrm}
\DeclareMathOperator{\lip}{lip}
\DeclareMathOperator{\sn}{SN}
\newcommand{\overbar}[1]{\mkern 1.5mu\overline{\mkern-1.5mu#1\mkern-1.5mu}\mkern 1.5mu}
\newcommand\blfootnote[1]{%
  \begingroup
  \renewcommand\thefootnote{}\footnote{#1}%
  \addtocounter{footnote}{-1}%
  \endgroup
}
\title{Generalizable Adversarial Training via Spectral Normalization}
\author{Farzan Farnia$^*$, Jesse M. Zhang$^*$\blfootnote{Contributed equally}, David N. Tse\\\\
Department of Electrical Engineering, Stanford University\\}
\date{}
\begin{document}

\maketitle

\begin{abstract}
Deep neural networks (DNNs) have set benchmarks on a wide array of supervised learning tasks. Trained DNNs, however, often lack robustness to minor adversarial perturbations to the input, which undermines their true practicality. Recent works have increased the robustness of DNNs by fitting networks using adversarially-perturbed training samples, but the improved performance can still be far below the performance seen in non-adversarial settings. A significant portion of this gap can be attributed to the decrease in generalization performance due to adversarial training. In this work, we extend the notion of margin loss to adversarial settings and bound the generalization error for DNNs trained under several well-known gradient-based attack schemes, motivating an effective regularization scheme based on spectral normalization of the DNN's weight matrices. We also provide a computationally-efficient method for normalizing the spectral norm of convolutional layers with arbitrary stride and padding schemes in deep convolutional networks. We evaluate the power of spectral normalization extensively on combinations of datasets, network architectures, and adversarial training schemes. The code is available at \url{https://github.com/jessemzhang/dl_spectral_normalization}.
\end{abstract}

\section{Introduction}
Despite their impressive performance on many supervised learning tasks, deep neural networks (DNNs) are often highly susceptible to adversarial perturbations imperceptible to the human eye \citep{szegedy2013intriguing,goodfellow2014explaining}. These ``adversarial attacks" have received enormous attention in the machine learning literature over recent years \citep{goodfellow2014explaining,moosavi2016deepfool,carlini2016defensive,
kurakin2016adversarial,papernot2016distillation,carlini2017towards,
papernot2017practical,madry2017towards,tramer2018ensemble}. 
Adversarial attack studies have mainly focused on developing effective attack and defense schemes. While attack schemes attempt to mislead a trained classifier via additive perturbations to the input, defense mechanisms aim to train classifiers robust to these perturbations. Although existing defense methods result in considerably better performance compared to standard training methods, the improved performance can still be far below the performance in non-adversarial settings \citep{pmlr-v80-athalye18a,schmidt2018adversarially}.

A standard adversarial training scheme involves fitting a classifier using adversarially-perturbed samples \citep{szegedy2013intriguing,goodfellow2014explaining} with the intention of producing a trained classifier with better robustness to attacks on future (i.e. test) samples. \cite{madry2017towards} provides a robust optimization interpretation of the adversarial training approach, demonstrating that this strategy finds the optimal classifier minimizing the average worst-case loss over an adversarial ball centered at each training sample. This minimax interpretation can also be extended to distributionally-robust training methods \citep{sinha2017certifiable} where the offered robustness is over a Wasserstein-ball around the empirical distribution of training data.

Recently, \cite{schmidt2018adversarially} have shown that standard adversarial training produces networks that generalize poorly. The performance of adversarially-trained DNNs over test samples can be significantly worse than their training performance, and this gap can be far greater than the generalization gap achieved using standard empirical risk minimization (ERM). This discrepancy suggests that the overall adversarial test performance can be improved by applying effective regularization schemes during adversarial training. 

\begin{figure}[h!]
\begin{center}
\includegraphics[scale=0.65]{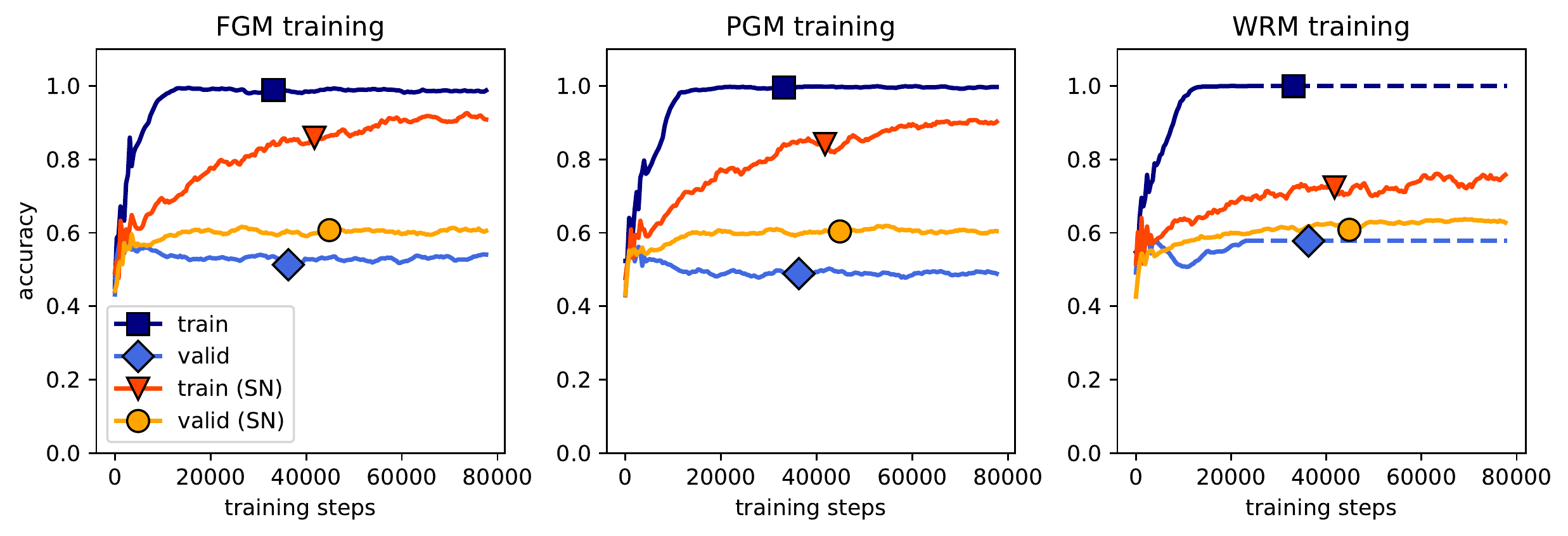}
\end{center}
\caption{Adversarial training performance with and without spectral normalization (SN) for AlexNet fit on CIFAR10. The gain in the final test accuracies for FGM, PGM, and WRM after spectral normalization are 0.09, 0.11, and 0.04, respectively (see Table \ref{table:results} in the Appendix). For FGM and PGM, perturbations have $\ell_2$ magnitude 2.44.}
\label{fig:train_alexnet}
\end{figure}
In this work, we propose using spectral normalization (SN) \citep{miyato2018spectral} as a computationally-efficient and statistically-powerful regularization scheme for adversarial training of DNNs. SN has been successfully implemented and applied for DNNs in the context of generative adversarial networks (GANs) \citep{goodfellow2014generative}, resulting in state-of-the-art deep generative models for several benchmark tasks \citep{miyato2018spectral}. Moreover, SN \citep{tsuzuku2018lipschitz} and other similar Lipschitz regularization techniques \citep{cisse2017parseval} have been successfully applied in non-adversarial training settings to improve the robustness of  ERM-trained networks to adversarial attacks. The theoretical results in \citep{bartlett2017spectrally,neyshabur2017pac} and empirical results in \citep{yoshida2017spectralnorm} also suggest that SN can close the generalization gap for DNNs in non-adversarial ERM setting.

On the theoretical side, we first extend the standard notion of margin loss to adversarial settings. We then leverage the PAC-Bayes generalization framework \citep{mcallester1999pac} to prove generalization bounds for spectrally-normalized DNNs in terms of our defined adversarial margin loss. Our approach parallels the approach used by \cite{neyshabur2017pac} to derive generalization bounds in non-adversarial settings. We obtain adversarial generalization error bounds for three well-known gradient-based attack schemes: fast gradient method (FGM) \citep{goodfellow2014explaining}, projected gradient method (PGM) \citep{kurakin2016adversarial,madry2017towards}, and Wasserstein risk minimization (WRM) \citep{sinha2017certifiable}. Our theoretical analysis shows that the adversarial generalization component will vanish by applying SN to all layers for sufficiently small spectral norm values.

On the empirical side, we show that SN can significantly improve test performance after adversarial training. We perform numerical experiments for three standard datasets (MNIST, CIFAR-10, SVHN) and various standard DNN architectures (including AlexNet \citep{krizhevsky2012imagenet}, Inception \citep{szegedy2015going}, and ResNet \citep{he2016identity}); in almost all of the experiments we obtain a better test performance after applying SN. Figure \ref{fig:train_alexnet} shows the training and validation performance for AlexNet fit on the CIFAR10 dataset using FGM, PGM, and WRM, resulting in adversarial test accuracy improvements of $9$, $11$, and $4$ percent, respectively. Furthermore, we numerically validate the correlation between the spectral-norm capacity term in our bounds and the actual generalization performance. 
To perform our numerical experiments, we develop a computationally-efficient approach for normalizing the spectral norm of convolution layers with arbitrary stride and padding schemes. We provide the TensorFlow code as spectral normalization of convolutional layers can also be useful for other deep learning tasks. 
To summarize, the main contributions of this work are:
\begin{enumerate}
\item Proposing SN as a regularization scheme for adversarial training of DNNs,
\item Extending concepts of margin-based generalization analysis to adversarial settings and proving margin-based generalization bounds for three gradient-based adversarial attack schemes,
\item Developing an efficient method for normalizing the spectral norm of convolutional layers in deep convolution networks,
\item Numerically demonstrating the improved test and generalization performance of DNNs trained with SN.
\end{enumerate}

\section{Preliminaries}
In this section, we first review some standard concepts of margin-based generalization analysis in learning theory. We then extend these notions to adversarial training settings. 
\subsection{Supervised learning, Deep neural networks, Generalization error}
Consider samples $\{(\mathbf{x}_1,y_1),\ldots , (\mathbf{x}_n,y_n) \}$ drawn i.i.d from underlying distribution $P_{\mathbf{X},Y}$. We suppose $\mathbf{X}\in\mathcal{X}$ and $Y\in\{1,2,\ldots ,m\}$ where $m$ represents the number of different labels. Given loss function $\ell$ and function class $\mathcal{F}=\{f_{\mathbf{w}},\, \mathbf{w}\in \mathcal{W}\}$ parameterized by $\mathbf{w}$, a supervised learner aims to find the optimal function in $\mathcal{F}$ minimizing the expected loss (risk) averaged over the underlying distribution $P$. 

We consider $\mathcal{F}_{\text{nn}}$ as the class of $d$-layer neural networks with $h$ hidden units per layer and activation functions $\sigma: \mathbb{R}\rightarrow \mathbb{R}$. Each $f_\mathbf{w}:\mathcal{X}\rightarrow \mathbf{R}^m $ in $\mathcal{F}_{\text{nn}}$ maps a data point $\mathbf{x}$ to an $m$-dimensional vector. Specifically, we can express each $f_\mathbf{w}\in\mathcal{F}_{\text{nn}}$ as $f_\mathbf{w}(\mathbf{x})=\mathbf{W}_d \sigma(\mathbf{W}_{d-1}\cdots\sigma(\mathbf{W}_1\mathbf{x})\cdots))$. We use $\Vert\mathbf{W}_i{\Vert}_2$ to denote the spectral norm of matrix $\mathbf{W}_i$, defined as the largest singular value of  $\mathbf{W}_i$, and $\Vert\mathbf{W}_i{\Vert}_F$ to denote $\mathbf{W}_i$'s Frobenius norm.

%\begin{equation}\label{True Risk Minimization}
%\underset{\mathbf{w}\in \mathcal{W}}{\arg\!\min}\; \mathbb{E}_P\bigl[ \, \ell\bigl(f_{\mathbf{w}}(\mathbf{X}),Y\bigr)\, \bigr].
%\end{equation} 
A classifier $f_{\mathbf{w}}$'s performance over the true distribution of data can be different from the training performance over the empirical distribution of training samples $\hat{P}$. The difference between the empirical and true averaged losses,  evaluated on respectively training and test samples, is called the generalization error. Similar to \cite{neyshabur2017pac}, we evaluate a DNN's generalization performance using its expected margin loss defined for margin parameter $\gamma > 0$ as 
\begin{equation}\label{Margin loss: def}
L_{\gamma}(f_\mathbf{w}) := P\biggl( f_{\mathbf{w}}(\mathbf{X})[Y] \le \gamma + \max_{j\neq Y} f_{\mathbf{w}}(\mathbf{X})[j] \biggr ), 
\end{equation}
where $f_{\mathbf{w}}(\mathbf{X})[j]$ denotes the $j$th entry of $f_{\mathbf{w}}(\mathbf{X})\in\mathbb{R}^m$. For a given data point $\mathbf{X}$, we predict the label corresponding to the maximum entry of $f_{\mathbf{w}}(\mathbf{X})$. Also, we use $\widehat{L}_{\gamma}(f_\mathbf{w})$ to denote the empirical margin loss averaged over the training samples. The goal of margin-based generalization analysis is to provide theoretical comparison between the true and empirical margin risks.

\subsection{Adversarial attacks, Adversarial training}
A supervised learner observes only the training samples and hence does not know the true distribution of data. Then, a standard approach to train a classifier is to minimize the empirical expected loss $\ell$ over function class $\mathcal{F}=\{ f_{\mathbf{w}}:\, \mathbf{w}\in\mathcal{W} \}$, which is
\begin{equation}\label{True Risk Minimization}
\underset{\mathbf{w}\in \mathcal{W}}{\min}\; \frac{1}{n}\sum_{i=1}^{n}\, \ell\bigl(f_{\mathbf{w}}(\mathbf{x}_i),y_i\bigr).
\end{equation} 
This approach is called empirical risk minimization (ERM). For better optimization performance, the loss function $\ell$ is commonly chosen to be smooth. Hence, 0-1 and margin losses are replaced by smooth surrogate loss functions such as the cross-entropy loss. However, we still use the margin loss as defined in (\ref{Margin loss: def}) for evaluating the test and generalization performance of DNN classifiers.

While ERM training usually achieves good performance over DNNs, several recent observations reveal that adding some adversarially-chosen perturbation to each sample can significantly drop the trained DNN's performance. Given norm function $\Vert\cdot\Vert$ and adversarial noise power $\epsilon >0$, the adversarial additive noise for sample $(\mathbf{x},y)$ and classifier $f_{\mathbf{w}}$ is defined to be
\begin{equation} \label{adv attack: original}
\delta^{\adv}_{\mathbf{w}}(\mathbf{x})\, :=\,\underset{\Vert\boldsymbol{\delta} \Vert\le \epsilon}{\arg\!\max}\; \ell\bigl( f_{\mathbf{w}}(\mathbf{x}+\boldsymbol{\delta}),y  \bigr).
\end{equation}   
To provide adversarial robustness against the above attack scheme, a standard technique, which is called adversarial training, follows ERM training over the adversarially-perturbed samples by solving 
\begin{equation}\label{adv training ERM}
\min_{\mathbf{w}\in \mathcal{W}} \frac{1}{n}\sum_{i=1}^n  \ell\left( f_{\mathbf{w}}\bigl(\, \mathbf{x}_i + \delta^{\adv}_{\mathbf{w}} (\mathbf{x}_i) \bigr)\, , \, y_i \right). 
\end{equation}
Nevertheless, (\ref{adv attack: original}) and hence (\ref{adv training ERM}) are generally non-convex and intractable optimization problems. Therefore, several schemes have been proposed in the literature to approximate the optimal solution of  (\ref{adv attack: original}). In this work, we analyze the generalization performance of the following three gradient-based methods for approximating the solution to (\ref{adv attack: original}). We note that several other attack schemes such as DeepFool \citep{moosavi2016deepfool}, CW attacks \citep{carlini2017towards}, target and least-likely attacks \citep{kurakin2016adversarial} have been introduced and examined in the literature, which can lead to interesting future directions for this work.
\begin{enumerate}[wide, labelwidth=!, labelindent=15pt]
\item \textbf{Fast Gradient Method (FGM)} \citep{goodfellow2014explaining}:
FGM approximates the solution to (\ref{adv attack: original}) by considering a linearized DNN loss around a given data point. Hence, FGM perturbs $(\mathbf{x},y)$ by adding the following noise vector:
\begin{equation}\label{def: fgm attack}
\delta^{\fgm}_{\mathbf{w}}(\mathbf{x})\, := \underset{\Vert\boldsymbol{\delta} \Vert\le \epsilon}{\arg\!\max} \; \boldsymbol{\delta}^T{\nabla}_{\mathbf{x}}\,\ell\bigl( f_{\mathbf{w}}(\mathbf{x}),y  \bigr).
\end{equation}
For the special case of $\ell_{\infty}$-norm $\Vert\cdot\Vert_{\infty}$, the above representation of FGM recovers the fast gradient sign method (FGSM) where each data point $(\mathbf{x},y)$ is perturbed by the $\epsilon$-normalized sign vector of the loss's gradient. For $\ell_2$-norm $\Vert\cdot \Vert_2$, we similarly normalize the loss's gradient vector to have $\epsilon$ Euclidean norm.

\item \textbf{Projected Gradient Method (PGM)} \citep{kurakin2016adversarial}: PGM is the iterative version of FGM and applies projected gradient descent to solve (\ref{adv attack: original}). PGM follows the following update rules for a given $r$ number of steps:
\begin{align} \label{def: pgm attack}
\forall  1\le i \le r:\;\; \delta^{\pgm,i+1}_{\mathbf{w}}(\mathbf{x})\, &:=  \prod_{\mathcal{B}_{\epsilon,\Vert\cdot\Vert}(\mathbf{0})} \bigl\{ \delta^{\pgm,i}_{\mathbf{w}}(\mathbf{x}) +  \alpha \,\nu^{(i)}_\mathbf{w} \, \bigr\} ,\\
\nu^{(i)}_\mathbf{w}&:= \underset{\Vert\boldsymbol{\delta}\Vert\le 1}{\arg\!\max} \; \boldsymbol{\delta}^T{\nabla}_{\mathbf{x}}\,\ell\bigl( f_{\mathbf{w}}(\mathbf{x} + \delta^{\pgm,i}_{\mathbf{w}}(\mathbf{x}) ),y  \bigr). \nonumber
\end{align}
Here, we first find the direction $\nu^{(i)}_\mathbf{w}$ along which the loss at the $i$th perturbed point changes the most, and then we move the perturbed point along this direction by stepsize $\alpha$ followed by projecting the resulting perturbation onto the set $\{\boldsymbol{\delta}:\, \Vert \boldsymbol{\delta}\Vert\le \epsilon \}$ with $\epsilon$-bounded norm.

\item \textbf{Wasserstein Risk Minimization (WRM)} \citep{sinha2017certifiable}: WRM solves the following variant of (\ref{adv attack: original}) for data-point $(\mathbf{x},y)$ where the norm constraint in (\ref{adv attack: original}) is replaced by a norm-squared Lagrangian penalty term:
\begin{equation}\label{WRM: perturbation}
\delta^{\wrm}_{\mathbf{w}}(\mathbf{x})\, := \underset{\boldsymbol{\delta}}{\arg\!\max} \; \ell\bigl( f_{\mathbf{w}}(\mathbf{x}+\boldsymbol{\delta}),y  \bigr) - \frac{\lambda}{2} \Vert\boldsymbol{\delta} \Vert^2. 
\end{equation}
As discussed earlier, the optimization problem (\ref{adv attack: original}) is generally intractable. However, in the case of Euclidean norm $\Vert\cdot\Vert_2$, if we assume ${\nabla}_{\mathbf{x}}\ell(f_{\mathbf{w}}(\mathbf{x}),y)$'s Lipschitz constant is upper-bounded by $\lambda$, then WRM optimization (\ref{WRM: perturbation}) results in solving a convex optimization problem and can be efficiently solved using gradient methods.
\end{enumerate}  
To obtain efficient adversarial defense schemes, we can substitute $\delta^{\fgm}_{\mathbf{w}}$, $\delta^{\pgm}_{\mathbf{w}}$, or $\delta^{\wrm}_{\mathbf{w}}$ for $\delta^{\adv}_{\mathbf{w}}$ in (\ref{adv training ERM}). Instead of fitting the classifier over true adversarial examples, which are NP-hard to obtain, we can instead train the DNN over FGM, PGM, or WRM-adversarially perturbed samples.
\subsection{Adversarial generalization error}
The goal of adversarial training is to improve the robustness against adversarial attacks on not only the training samples but also on test samples; however, the adversarial training problem (\ref{adv training ERM}) focuses only on the training samples. To evaluate the adversarial generalization performance, we extend the notion of margin loss defined earlier in (\ref{Margin loss: def}) to adversarial training settings by defining the \emph{adversarial margin loss} as 
\begin{equation}\label{Margin loss adversarial: def}
L^{\adv}_{\gamma}(f_\mathbf{w}) = P\biggl( f_{\mathbf{w}}(\mathbf{X}+\delta^{\adv}_{\mathbf{w}}\bigl(\mathbf{X})\bigr)[Y] \le \gamma + \max_{j\neq Y}\, f_{\mathbf{w}}\bigl(\, \mathbf{X}+\delta^{\adv}_{\mathbf{w}}(\mathbf{X})\bigr)[j] \biggr).
\end{equation}
Here, we measure the margin loss over adversarially-perturbed samples, and we use $\widehat{L}^{\adv}_{\gamma}(f_\mathbf{w})$ to denote the empirical adversarial margin loss. We also use $L^{\fgm}_{\gamma}(f_\mathbf{w})$, $L^{\pgm}_{\gamma}(f_\mathbf{w})$, and $L^{\wrm}_{\gamma}(f_\mathbf{w})$ to denote the adversarial margin losses with FGM (\ref{def: fgm attack}), PGM (\ref{def: pgm attack}), and WRM (\ref{WRM: perturbation}) attacks, respectively.

\section{Margin-based adversarial Generalization bounds} \label{section:bounds}
As previously discussed, generalization performance can be different between adversarial and non-adversarial settings. In this section, we provide generalization bounds for DNN classifiers under adversarial attacks in terms of the spectral norms of the trained DNN's weight matrices. The bounds motivate regularizing these spectral norms in order to limit the DNN's capacity and improve its generalization performance under adversarial attacks.

We use the PAC-Bayes framework \citep{mcallester1999pac,mcallester2003simplified} to prove our main results. To derive adversarial generalization error bounds for DNNs with smooth activation functions $\sigma$, we first extend a recent result on the margin-based generalization bound for the ReLU activation function \citep{neyshabur2017pac} to general $1$-Lipschitz activation functions.
\begin{thm}\label{Thm 1}
Consider $\mathcal{F}_{\text{nn}}=\{f_{\mathbf{w}}: \mathbf{w}\in\mathbf{W} \}$ the class of $d$ hidden-layer neural networks with $h$ units per hidden-layer with $1$-Lipschitz activation $\sigma$ satisfying $\sigma(0)=0$. Suppose that $\mathcal{X}$, $\mathbf{X}$'s support set, is norm-bounded as $\Vert \mathbf{x}\Vert_2\le B,\, \forall \mathbf{x}\in\mathcal{X}$. Also assume for constant $M\ge 1$ any  $f_{\mathbf{w}} \in \mathcal{F}_{\text{nn}}$ satisfies 
$$\forall i:\; \frac{1}{M} \le \frac{\Vert\mathbf{W}_i{\Vert}_2}{\beta_{\mathbf{w}}} \le M,\quad  \beta_{\mathbf{w}} := \bigl(\, \prod_{i=1}^d \Vert\mathbf{W}_i{\Vert}_2 \, \bigr)^{1/d}. $$ 
Here $\beta_{\mathbf{w}}$ denotes the geometric mean of $f_\mathbf{w}$'s spectral norms across all layers. Then, for any $\eta , \gamma >0$, with probability at least $1-\eta$ for any $f_{\mathbf{w}}\in \mathcal{F}_{\text{nn}}$ we have:
\begin{equation*}
L_0(f_{\mathbf{w}}) \le \widehat{L}_{\gamma}(f_{\mathbf{w}}) + \mathcal{O}\biggl( \sqrt{\frac{B^2d^2h\log(dh)\Phi^{\erm}(f_\mathbf{w}) + d\log \frac{dn\log M }{\eta}}{\gamma^2 n }} \biggr),
\end{equation*} 
where we define complexity score $\Phi^{\erm}(f_\mathbf{w}):= \prod_{i=1}^d \Vert\mathbf{W}_i \Vert^2_2\sum_{i=1}^d\frac{\Vert\mathbf{W}_i {\Vert}_F^2}{\Vert\mathbf{W}_i {\Vert}_2^2 }$.
\end{thm}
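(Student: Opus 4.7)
The plan is to follow the PAC-Bayes strategy of \cite{neyshabur2017pac}, replacing the ReLU-specific steps in their analysis with arguments that use only $1$-Lipschitzness of $\sigma$ together with $\sigma(0)=0$. Concretely, I would first reprove two forward-pass inequalities that are the only places where the activation enters: (i) the magnitude bound $\Vert f_\mathbf{w}(\mathbf{x})\Vert_2 \le \prod_{i=1}^d \Vert\mathbf{W}_i\Vert_2\,\Vert\mathbf{x}\Vert_2$, which holds because $|\sigma(z)|=|\sigma(z)-\sigma(0)|\le|z|$ implies the layerwise activations are nonexpansive; and (ii) the perturbation bound stating that if $\mathbf{U}_i$ are perturbations with $\Vert\mathbf{U}_i\Vert_2\le \frac{1}{d}\Vert\mathbf{W}_i\Vert_2$, then
\begin{equation*}
\Vert f_{\mathbf{w}+\mathbf{u}}(\mathbf{x})-f_\mathbf{w}(\mathbf{x})\Vert_2 \;\le\; e\,B\,\prod_{i=1}^d\Vert\mathbf{W}_i\Vert_2\sum_{i=1}^d\frac{\Vert\mathbf{U}_i\Vert_2}{\Vert\mathbf{W}_i\Vert_2}.
\end{equation*}
This goes through by induction on layer depth in the same way as in Lemma~2 of \cite{neyshabur2017pac}; the only activation-dependent ingredient is the Lipschitz bound $\Vert\sigma(\mathbf{z})-\sigma(\mathbf{z}')\Vert_2\le\Vert\mathbf{z}-\mathbf{z}'\Vert_2$, which by hypothesis holds for the general $\sigma$ under consideration.

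Second, I would instantiate the PAC-Bayes theorem with isotropic Gaussian prior $P=\mathcal{N}(\mathbf{0},\sigma^2 I)$ and posterior $Q=\mathcal{N}(\mathbf{w},\sigma^2 I)$ on the concatenated weight vector. Choosing $\sigma^2$ proportional to $\gamma^2/(d^2 h \beta_\mathbf{w}^{2d-2} B^2 \log(dh))$, a standard Gaussian spectral-norm tail bound (from \cite{tropp2012user}) guarantees that $\max_i\Vert\mathbf{U}_i\Vert_2 \le \frac{1}{d}\beta_\mathbf{w}$ with high probability. The perturbation bound above then ensures $\Vert f_{\mathbf{w}+\mathbf{u}}(\mathbf{x})-f_\mathbf{w}(\mathbf{x})\Vert_\infty<\gamma/4$ uniformly over $\mathbf{x}\in\mathcal{X}$, which translates the stochastic margin loss $\mathbb{E}_\mathbf{u}[\widehat{L}_{\gamma/2}(f_{\mathbf{w}+\mathbf{u}})]$ back to the deterministic margin losses $L_0$ and $\widehat{L}_\gamma$ on both sides of the PAC-Bayes inequality.

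Third, I would compute $\mathrm{KL}(Q\Vert P)=\Vert\mathbf{w}\Vert_2^2/(2\sigma^2)=\sum_i\Vert\mathbf{W}_i\Vert_F^2/(2\sigma^2)$ and substitute the chosen $\sigma$. Using the identity $\beta_\mathbf{w}^{2d-2}=\prod_i\Vert\mathbf{W}_i\Vert_2^2\,/\,\beta_\mathbf{w}^2$ and the two-sided constraint $\Vert\mathbf{W}_i\Vert_2\asymp \beta_\mathbf{w}$ (up to factor $M$), the KL rearranges to the capacity term $\Phi^{\erm}(f_\mathbf{w})=\prod_i\Vert\mathbf{W}_i\Vert_2^2\sum_i\Vert\mathbf{W}_i\Vert_F^2/\Vert\mathbf{W}_i\Vert_2^2$, multiplied by $B^2 d^2 h\log(dh)$. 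Because the choice of $\sigma$ depends on the unknown $\beta_\mathbf{w}$, I would finish with the standard covering-net argument: discretize $\beta_\mathbf{w}\in[1/(Mn)^{1/d},M^{1/d}\cdot\text{something}]$ on a geometric grid of size $O(d\log(Mn))$, apply the PAC-Bayes bound with $\eta/\vert\text{grid}\vert$ at each grid point, and union-bound. This produces the additive $d\log(dn\log M/\eta)$ term inside the square root.

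The main obstacle is really the bookkeeping in the third step: matching the precise exponents of $d$, $h$, $B$, and $\beta_\mathbf{w}$ while tracking the two-sided bounds imposed by $M$, so that the KL term reduces exactly to $\Phi^{\erm}(f_\mathbf{w})$ rather than a cruder product of Frobenius norms. The perturbation lemma's extension to general $1$-Lipschitz $\sigma$ is conceptually clean, and the covering argument is mechanical; the delicate balance is choosing $\sigma^2$ so that the Gaussian tail event and the PAC-Bayes KL term simultaneously yield a bound governed by $\Phi^{\erm}$, which is precisely the capacity term that will be normalized by the spectral normalization scheme.
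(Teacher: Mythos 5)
Your overall strategy (PAC-Bayes with a Gaussian perturbation, the Lipschitz extension of the perturbation lemma, KL computation, covering over the unknown scale) is the same skeleton the paper uses, and your treatment of the activation-dependent steps is exactly right: the paper's Lemma~\ref{lemma ERM perturbation} is precisely the statement you propose to reprove, and it does go through using only $\Vert\sigma(\mathbf{z})-\sigma(\mathbf{z}')\Vert_2\le\Vert\mathbf{z}-\mathbf{z}'\Vert_2$ and $\sigma(0)=0$. The gap is in your second and third steps: you take the posterior to be an \emph{isotropic} Gaussian $\mathcal{N}(\mathbf{w},\sigma^2 I)$ with a single variance across all layers, whereas the paper assigns the $i$th layer the standard deviation $\xi_i=\frac{\Vert\widetilde{\mathbf{W}}_i\Vert_2}{\beta_{\widetilde{\mathbf{w}}}}\xi$. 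This is not a cosmetic difference. First, your high-probability event $\max_i\Vert\mathbf{U}_i\Vert_2\le\frac{1}{d}\beta_{\mathbf{w}}$ does not satisfy the hypothesis of the perturbation lemma, which requires $\Vert\mathbf{U}_i\Vert_2\le\frac{1}{d}\Vert\mathbf{W}_i\Vert_2$ for \emph{every} layer; under the theorem's assumption a layer may have $\Vert\mathbf{W}_i\Vert_2=\beta_{\mathbf{w}}/M$, so you would have to shrink $\sigma$ by a factor of $M$ to cover the worst layer. Second, even granting that, the KL becomes $\sum_i\Vert\mathbf{W}_i\Vert_F^2/(2\sigma^2)$ with a common denominator, and converting $\sum_i\Vert\mathbf{W}_i\Vert_F^2/\beta_{\mathbf{w}}^2$ into $\sum_i\Vert\mathbf{W}_i\Vert_F^2/\Vert\mathbf{W}_i\Vert_2^2$ costs another factor of $M^2$. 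Your phrase ``up to factor $M$'' thus hides an $M^2$ (or $M^4$) multiplying the dominant capacity term $B^2d^2h\log(dh)\Phi^{\erm}(f_{\mathbf{w}})$, which is strictly weaker than the stated theorem, where $M$ enters only doubly-logarithmically through the $d\log\frac{dn\log M}{\eta}$ term.

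The reason the paper's layer-dependent variance is essential here, and not in \cite{neyshabur2017pac}, is that ReLU's positive homogeneity lets one rebalance the weights so all layers share the spectral norm $\beta_{\mathbf{w}}$ without changing the function, after which isotropic noise is fine; for general $1$-Lipschitz activations this rebalancing is unavailable, which is exactly why the $M$-condition appears in the hypothesis. The price of the layer-dependent variance is that the prior depends on each $\Vert\widetilde{\mathbf{W}}_i\Vert_2$ separately, so the covering net must discretize every layer's spectral norm within $[\beta_{\mathbf{w}}/M,\,M\beta_{\mathbf{w}}]$ (a cover of size $\mathcal{O}((d\log M)^d)$ on top of the grid over $\beta_{\mathbf{w}}$), not just the single scalar $\beta_{\mathbf{w}}$ as in your plan; this per-layer cover is what produces the $d\log(dn\log M/\eta)$ additive term in the statement. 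To close the gap you should replace the isotropic posterior by the anisotropic one and enlarge your cover accordingly.
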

\begin{proof}
We defer the proof to the Appendix. The proof is a slight modification of \cite{neyshabur2017pac}'s proof of the same result for ReLU activation.
\end{proof}
We now generalize this result to adversarial settings where the DNN's performance is evaluated under adversarial attacks. We prove three separate adversarial generalization error bounds for FGM, PGM, and WRM attacks. 

For the following results, we consider $\mathcal{F}_{\text{nn}}$, the class of neural nets defined in Theorem \ref{Thm 1}. Moreover, we assume that the training loss $\ell(\hat{y},y)$ and its first-order derivative are $1$-Lipschitz. Similar to \cite{sinha2017certifiable}, we assume the activation $\sigma$ is smooth and its derivative $\sigma'$ is $1$-Lipschitz. This class of activations include ELU \citep{clevert2015fast} and tanh functions but not the ReLU function. However, our numerical results in Table \ref{table:results} from the Appendix suggest similar generalization performance between ELU and ReLU activations. 
%Before presenting the bounds, we will list the common assumptions supposed for all these results. In these assumptions, $\ell$ denotes the training loss function used in the training loss. Later we show these assumptions can be enforced through spectral normalization and provide bounds based on the spectral norm of layers.
\begin{thm}\label{Thm: fgm}
Consider $\mathcal{F}_{\text{nn}},\, \mathcal{X}$ in Theorem \ref{Thm 1} and training loss function $\ell$ satisfying the assumptions stated above. We consider an FGM attack with noise power $\epsilon$ according to Euclidean norm $\Vert\cdot \Vert_2$. For any $f_\mathbf{w}\in \mathcal{F}_{\text{nn}}$ assume $\kappa \le \Vert \nabla_{\mathbf{x}} \ell \bigl( f_\mathbf{w} (\mathbf{x}), y\bigr) \Vert_2$ holds for constant $\kappa >0$, any $y\in\mathcal{Y}$, and any  $ \mathbf{x}\in \mathcal{B}_{\epsilon, \Vert\cdot \Vert_2}(\mathcal{X})$ $\epsilon$-close to $\mathbf{X}$'s support set. Then, for any $\eta, \gamma >0$ with probability $1-\eta$ the following bound holds for the FGM margin loss of any $f_{\mathbf{w}}\in\mathcal{F}_{\text{nn}}$
\begin{align*}
 L^{\fgm}_0(f_{\mathbf{w}}) \, \le \, \widehat{L}^{\fgm}_{\gamma}(f_{\mathbf{w}}) + \mathcal{O}\biggl( \sqrt{\frac{(B+\epsilon)^2d^2h\log(dh) \, \Phi^{\fgm}_{\epsilon , \kappa}(f_\mathbf{w}) + d\log \frac{dn\log M }{\eta}}{\gamma^2n }} \biggr),
\end{align*} 
where $\Phi^{\fgm}_{\epsilon , \kappa}(f_\mathbf{w}) := \bigl\{ \prod_{i=1}^d \Vert\mathbf{W}_i {\Vert}_2(1+(\epsilon/\kappa)(\prod_{i=1}^d \Vert\mathbf{W}_i {\Vert}_2)\sum_{i=1}^d\prod_{j=1}^i \Vert\mathbf{W}_j {\Vert}_2)\bigr\}^2 \sum_{i=1}^d\frac{\Vert\mathbf{W}_i {\Vert}_F^2}{\Vert\mathbf{W}_i {\Vert}_2^2 }$.
\end{thm}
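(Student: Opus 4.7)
The plan is to adapt the PAC-Bayes argument of Theorem 1 so that a Gaussian perturbation $\mathbf{U}=(\mathbf{U}_1,\dots,\mathbf{U}_d)$ added to the weights controls the adversarially-shifted output $f_{\mathbf{w}}(\mathbf{x}+\delta^{\fgm}_{\mathbf{w}}(\mathbf{x}))$ rather than just $f_{\mathbf{w}}(\mathbf{x})$. The key new ingredient is a \emph{double} perturbation bound: when $\mathbf{w}\to\mathbf{w}+\mathbf{u}$, both the classifier and the adversarial shift change, so I need to control
\[
\Delta(\mathbf{x}) := f_{\mathbf{w}+\mathbf{u}}\bigl(\mathbf{x}+\delta^{\fgm}_{\mathbf{w}+\mathbf{u}}(\mathbf{x})\bigr) - f_{\mathbf{w}}\bigl(\mathbf{x}+\delta^{\fgm}_{\mathbf{w}}(\mathbf{x})\bigr).
\]
I would split $\Delta$ into (i) the change in $f$ at a fixed input, which is handled exactly as in Theorem 1 and contributes the familiar factor $\prod_i\Vert\mathbf{W}_i\Vert_2 \sum_i\Vert\mathbf{U}_i\Vert_2/\Vert\mathbf{W}_i\Vert_2$ with $B$ replaced by $B+\epsilon$, and (ii) the change of the adversarial shift $\delta^{\fgm}_{\mathbf{w}}(\mathbf{x})$, which is the genuinely new piece.

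For (ii), I use the explicit formula $\delta^{\fgm}_{\mathbf{w}}(\mathbf{x}) = \epsilon\,g_{\mathbf{w}}(\mathbf{x})/\Vert g_{\mathbf{w}}(\mathbf{x})\Vert_2$ with $g_{\mathbf{w}}(\mathbf{x}):=\nabla_{\mathbf{x}}\ell(f_{\mathbf{w}}(\mathbf{x}),y)$. Because $v\mapsto v/\Vert v\Vert_2$ is $(2/\Vert v\Vert_2)$-Lipschitz away from zero, the assumption $\Vert g_{\mathbf{w}}\Vert_2\ge\kappa$ on the $\epsilon$-neighborhood of $\mathcal{X}$ yields $\Vert\delta^{\fgm}_{\mathbf{w}+\mathbf{u}}(\mathbf{x})-\delta^{\fgm}_{\mathbf{w}}(\mathbf{x})\Vert_2 \le (2\epsilon/\kappa)\Vert g_{\mathbf{w}+\mathbf{u}}(\mathbf{x})-g_{\mathbf{w}}(\mathbf{x})\Vert_2$. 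For the right-hand side I would apply the chain rule $g_{\mathbf{w}}(\mathbf{x})=J_{\mathbf{w}}(\mathbf{x})^T\nabla_{\hat y}\ell(f_{\mathbf{w}}(\mathbf{x}),y)$ and bound two sub-pieces: the change of the input-Jacobian $J_{\mathbf{w}}$, obtained by backpropagating the weight perturbation through the network using the $1$-Lipschitzness of $\sigma'$ and naturally producing partial products $\prod_{j=1}^i\Vert\mathbf{W}_j\Vert_2$ summed over the layer $i$ where the perturbation enters; and the change of $\nabla_{\hat y}\ell(f_{\mathbf{w}}(\mathbf{x}),y)$, which by $1$-Lipschitzness of $\nabla\ell$ reduces to the already-bounded change of $f_{\mathbf{w}}(\mathbf{x})$. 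Assembling these pieces yields precisely the extra factor $1+(\epsilon/\kappa)(\prod_i\Vert\mathbf{W}_i\Vert_2)\sum_i\prod_{j=1}^i\Vert\mathbf{W}_j\Vert_2$ appearing in $\Phi^{\fgm}_{\epsilon,\kappa}$, multiplied by $\prod_i\Vert\mathbf{W}_i\Vert_2$ and a factor linear in $\Vert\mathbf{U}_i\Vert_2$.

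With $\Vert\Delta\Vert_\infty$ controlled on the event $\{\Vert\mathbf{U}_i\Vert_2\le\beta_{\mathbf{w}}/d,\;\forall i\}$, the remainder is a direct replay of Theorem 1: choose independent Gaussian prior and posterior on the weights with common variance $\sigma_u^2$ calibrated so that $\Vert\Delta\Vert_\infty\le\gamma/2$ with probability at least $1/2$ (using the standard concentration of $\Vert\mathbf{U}_i\Vert_2$ that contributes the $\log(dh)$ factor), apply the PAC-Bayes inequality at the adversarial margin, use $\Vert\mathbf{x}+\delta^{\fgm}_{\mathbf{w}}(\mathbf{x})\Vert_2\le B+\epsilon$ to produce the $(B+\epsilon)^2$ prefactor, substitute the calibrated $\sigma_u$ into the KL term $\sum_i\Vert\mathbf{W}_i\Vert_F^2/\sigma_u^2$, and union-bound over an $\mathcal{O}(\log M)$ cover of the geometric means $\beta_{\mathbf{w}}$.

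The hard part will be the Jacobian-difference bound inside (ii): the weight perturbation must be propagated through a product of $d$ terms each of the form $\mathbf{W}_i^T\mathrm{diag}(\sigma'(\cdot))$, and both factors must be perturbed simultaneously. This is exactly where the smoothness of $\sigma'$ and of $\nabla\ell$ are needed to keep the result in the tidy product-and-sum form that lands in $\Phi^{\fgm}_{\epsilon,\kappa}$, while the lower bound $\kappa$ is essential to prevent the normalization inside the FGM formula from blowing up. Every other step is a direct adaptation of Theorem 1 and of the technique in \cite{neyshabur2017pac}.
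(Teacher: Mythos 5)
Your proposal is correct and follows essentially the same route as the paper: the paper likewise splits the output difference into a fixed-input weight-perturbation term (Lemma \ref{lemma ERM perturbation} with $B+\epsilon$) plus a term from the shift of $\delta^{\fgm}$, bounds the latter by the $(2\epsilon/\kappa)$-Lipschitzness of the normalization map (its Lemma \ref{lemma normalized norm diff}) composed with an inductive bound on the change of $\nabla_{\mathbf{x}}\ell(f_{\mathbf{w}}(\mathbf{x}),y)$ under weight perturbation (its Lemma \ref{Lemma fgm perturbation}), and then replays the PAC-Bayes and covering machinery of Theorem \ref{Thm 1}. The only slight imprecision is that the paper scales the Gaussian perturbation variance per layer as $\xi_i \propto \Vert\widetilde{\mathbf{W}}_i\Vert_2/\beta_{\widetilde{\mathbf{w}}}$ rather than using a common variance, which is what makes the KL term land exactly on $\sum_i\Vert\mathbf{W}_i\Vert_F^2/\Vert\mathbf{W}_i\Vert_2^2$ without extra factors of $M$.
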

\begin{proof}
We defer the proof to the Appendix.
\end{proof}
Note that the above theorem assumes that the change rate for the loss function around test samples is at least $\kappa$, which gives a baseline for measuring the attack power $\epsilon$. In our numerical experiments, we validate this assumption over standard image recognition tasks.
 Next, we generalize this result to adversarial settings with PGM attack, i.e. the iterative version of FGM attack.
\begin{thm}\label{thm: pgm}
Consider $\mathcal{F}_{\text{nn}}, \mathcal{X}$ and training loss function $\ell$ for which the assumptions in Theorem \ref{Thm: fgm} hold. We consider a PGM attack with noise power $\epsilon$ given Euclidean norm $\Vert\cdot \Vert_2$, $r$ iterations for attack, and stepsize $\alpha$. Then, for any $\eta, \gamma >0$ with probability $1-\eta$ the following bound applies to the PGM margin loss of any $f_{\mathbf{w}}\in\mathcal{F}_{\text{nn}}$
\begin{align*}
 L^{\pgm}_0(f_{\mathbf{w}}) \, \le \, \widehat{L}^{\pgm}_{\gamma}(f_{\mathbf{w}}) + \mathcal{O}\biggl( \sqrt{\frac{(B+\epsilon)^2d^2h\log(dh) \, \Phi^{\pgm}_{\epsilon , \kappa, r,\alpha}(f_\mathbf{w}) + d\log \frac{rdn\log M }{\eta}}{\gamma^2n }} \biggr).
\end{align*} 
Here we define $\Phi^{\pgm}_{\epsilon , \kappa, r,\alpha}(f_\mathbf{w})$ as the following expression
\begin{align*}
\biggl\{ \prod_{i=1}^d \Vert\mathbf{W}_i {\Vert}_2\,\bigl(1 +(\alpha/\kappa) \frac{1-(2\alpha/\kappa)^r\overbar{\lip}(\nabla \ell \circ f_\mathbf{w})^r }{1-(2\alpha/\kappa)\overbar{\lip}(\nabla \ell \circ f_\mathbf{w})}\bigl(\prod_{i=1}^d \Vert\mathbf{W}_i {\Vert}_2\bigr)\sum_{i=1}^d\prod_{j=1}^i \Vert\mathbf{W}_j {\Vert}_2\bigr)\biggr\}^2 \sum_{i=1}^d\frac{\Vert\mathbf{W}_i {\Vert}_F^2}{\Vert\mathbf{W}_i {\Vert}_2^2 },
\end{align*}
where $\overbar{\lip}(\nabla \ell \circ f_\mathbf{w}) := \bigl( \prod_{i=1}^d \Vert\mathbf{W}_i {\Vert}_2 \bigr) \sum_{i=1}^d\prod_{j=1}^i \Vert\mathbf{W}_j {\Vert}_2$ provides an upper-bound on the Lipschitz constant of $ \nabla_{\mathbf{x}} \ell(f_\mathbf{w}\big(\mathbf{x}),y\big)$.
\end{thm}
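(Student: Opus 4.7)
My plan is to follow the same PAC-Bayes template used for Theorem~\ref{Thm: fgm}, extended to handle the $r$-step iterative perturbation. First I would set up a Gaussian prior and posterior over the weights (with the variance structure of \cite{neyshabur2017pac} scaled by the geometric mean $\beta_{\mathbf{w}}$), then bound the expected change in the classifier output at the PGM-perturbed point when the weights are drawn from the posterior. Everything reduces to controlling $\bigl\Vert f_{\mathbf{w}}(\mathbf{x}+\delta^{\pgm}_{\mathbf{w}}(\mathbf{x})) - f_{\tilde{\mathbf{w}}}(\mathbf{x}+\delta^{\pgm}_{\tilde{\mathbf{w}}}(\mathbf{x}))\bigr\Vert_2$ where $\tilde{\mathbf{w}} = \mathbf{w}+\mathbf{U}$ is the posterior sample; splitting this via the triangle inequality into a change-in-input piece and a change-in-weights-at-fixed-input piece makes the FGM-style argument applicable, provided one has a Lipschitz bound for the map $\mathbf{w}\mapsto \delta^{\pgm,r}_{\mathbf{w}}(\mathbf{x})$.

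The central new ingredient is that Lipschitz bound on the iterated perturbation. Let $\Delta_i := \Vert \delta^{\pgm,i}_{\mathbf{w}}(\mathbf{x}) - \delta^{\pgm,i}_{\tilde{\mathbf{w}}}(\mathbf{x})\Vert_2$. Using non-expansiveness of the Euclidean projection onto $\mathcal{B}_{\epsilon,\Vert\cdot\Vert_2}(\mathbf{0})$ together with Lipschitz continuity of the normalized gradient direction $\nu^{(i)}_{\mathbf{w}}$ --- which, under the hypotheses $\Vert\nabla_{\mathbf{x}}\ell\Vert_2\ge\kappa$ and $\sigma'$ being $1$-Lipschitz, has input-Lipschitz constant at most $\tfrac{2}{\kappa}\overbar{\lip}(\nabla\ell\circ f_\mathbf{w})$ --- I would derive a recursion of the form
\[
\Delta_{i+1} \;\le\; \Delta_i + \frac{2\alpha}{\kappa}\,\overbar{\lip}(\nabla\ell\circ f_\mathbf{w})\bigl(\Delta_i + \Theta_\mathbf{w}(\mathbf{U})\bigr),
\]
where $\Theta_\mathbf{w}(\mathbf{U})$ captures the sensitivity of $\nu^{(i)}_{\cdot}(\mathbf{x}+\delta)$ to $\mathbf{U}$ at a fixed input and is controlled by the usual product-of-spectral-norms expression $\prod_i\Vert\mathbf{W}_i\Vert_2\sum_i\prod_{j\le i}\Vert\mathbf{W}_j\Vert_2$. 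Unrolling this finite geometric recursion from $\Delta_0 = 0$ across $r$ iterations produces the prefactor $\tfrac{1-(2\alpha/\kappa)^r\overbar{\lip}^r}{1-(2\alpha/\kappa)\overbar{\lip}}$ that multiplies $\prod_i\Vert\mathbf{W}_i\Vert_2\sum_i\prod_{j\le i}\Vert\mathbf{W}_j\Vert_2$ inside $\Phi^{\pgm}_{\epsilon,\kappa,r,\alpha}(f_\mathbf{w})$.

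With this sensitivity bound on $\delta^{\pgm,r}_{\mathbf{w}}$ in hand, the rest of the proof parallels Theorem~\ref{Thm: fgm}. The first term of the triangle split contributes $\lip(f_\mathbf{w})\Delta_r \le \bigl(\prod_i\Vert\mathbf{W}_i\Vert_2\bigr)\Delta_r$, while the second is bounded by the per-layer weight-perturbation lemma of \cite{neyshabur2017pac} applied at the point $\mathbf{x}+\delta^{\pgm}_{\tilde{\mathbf{w}}}(\mathbf{x})$, whose norm is at most $B+\epsilon$. Combining both contributions and squaring gives the $(B+\epsilon)^2 d^2 h\log(dh)\Phi^{\pgm}$ numerator in the theorem. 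Finally, since the resulting bound depends on both $\beta_{\mathbf{w}}$ and the iteration count $r$, a covering over the $\log M$-range of $\beta_{\mathbf{w}}$ and a union bound over the $r$ iterations contributes the extra $\log r$ inside $d\log(rdn\log M/\eta)$.

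The principal obstacle will be the careful Lipschitz bookkeeping in the recursion above: one must ensure that the compounding of weight perturbations across the $r$ PGM steps closes with exactly the Lipschitz expression $\overbar{\lip}(\nabla\ell\circ f_\mathbf{w})$ and does not pick up additional powers of $\prod_i\Vert\mathbf{W}_i\Vert_2$ at each iteration. This relies crucially on both assumptions inherited from Theorem~\ref{Thm: fgm} ($\sigma'$ being $1$-Lipschitz so that $\nabla_{\mathbf{x}}\ell\circ f_\mathbf{w}$ is smooth, and $\Vert\nabla_{\mathbf{x}}\ell\Vert_2\ge\kappa$ so that normalizing the gradient is a $\tfrac{2}{\kappa}$-Lipschitz operation). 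Once this step is executed cleanly, the remainder is a direct extension of the FGM argument.
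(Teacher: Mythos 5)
Your overall architecture matches the paper's: a PAC-Bayes argument with a layerwise Gaussian posterior scaled by $\beta_{\mathbf{w}}$, a triangle-inequality split of $\Vert f_{\mathbf{w}+\mathbf{u}}(\mathbf{x}+\delta^{\pgm,r}_{\mathbf{w}+\mathbf{u}}(\mathbf{x})) - f_{\mathbf{w}}(\mathbf{x}+\delta^{\pgm,r}_{\mathbf{w}}(\mathbf{x}))\Vert_2$ into a weight-perturbation piece at a fixed input of norm at most $B+\epsilon$ plus a $\lip(f_{\mathbf{w}})$-times-input-shift piece, an induction over the $r$ PGM iterations using the $2/\kappa$-Lipschitzness of gradient normalization, and a final covering of the spectral norms. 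Two of your steps, however, do not close as written. First, your recursion $\Delta_{i+1}\le\Delta_i+\frac{2\alpha}{\kappa}\,\overbar{\lip}(\nabla\ell\circ f_{\mathbf{w}})\bigl(\Delta_i+\Theta_{\mathbf{w}}(\mathbf{U})\bigr)$ unrolls from $\Delta_0=0$ to $\Theta_{\mathbf{w}}(\mathbf{U})\bigl((1+s)^r-1\bigr)$ with $s=(2\alpha/\kappa)\overbar{\lip}(\nabla\ell\circ f_{\mathbf{w}})$, i.e.\ a binomial sum $\sum_{k=1}^r\binom{r}{k}s^k$, not the geometric ratio $s\,\frac{1-s^r}{1-s}$ that appears in $\Phi^{\pgm}_{\epsilon,\kappa,r,\alpha}$. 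The paper's induction (Lemma \ref{lemma PGM pertubation points}) gets the geometric ratio because it bounds $\Vert\delta^{\pgm,k+1}_{\mathbf{w}+\mathbf{u}}(\mathbf{x})-\delta^{\pgm,k+1}_{\mathbf{w}}(\mathbf{x})\Vert_2$ directly by $\frac{2\alpha}{\kappa}$ times the difference of the gradients evaluated at the two $k$-th iterates, so the recursion has the homogeneous form $\Delta_{k+1}\le c+s\,\Delta_k$ with no leading $\Delta_k$ term; your extra additive $\Delta_i$ (from carrying the previous perturbation through the projection) changes the answer. Relatedly, in the paper the inhomogeneous term $c$ is $\frac{2\alpha}{\kappa}$ times the weight-perturbation bound on $\nabla_{\mathbf{x}}\ell$ itself (Lemma \ref{Lemma fgm perturbation} combined with Lemma \ref{lemma normalized norm diff}), not $\frac{2\alpha}{\kappa}\overbar{\lip}$ times a separate sensitivity $\Theta$, so even after fixing the unrolling your recursion would not reproduce the stated capacity term.

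Second, the $\log r$ inside $d\log\frac{rdn\log M}{\eta}$ does not come from a union bound over the $r$ iterations. It comes from the covering step: because $\overbar{\lip}(\nabla\ell\circ f_{\mathbf{w}})$ enters the bound raised to the power $r$, the paper must cover each spectral norm at the finer granularity $\bigl|\Vert\mathbf{W}_i\Vert_2-a_i\bigr|\le a_i/(rd)$ so that the ratio $\frac{1-s^r}{1-s}$ changes only by a constant factor within a covering cell; this inflates the cover to size $O\bigl((2rd\log M)^d\,dn^{1/2d}\bigr)$, whose logarithm gives the $d\log(rdn\log M)$ term. With the coarse $1/d$-granularity cover you inherit from the FGM proof, the $r$-th power terms are not controlled.
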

\begin{proof}
We defer the proof to the Appendix.
\end{proof}
In the above result, notice that if $\overbar{\lip}(\nabla \ell \circ f_\mathbf{w}) / \kappa < 1/(2\alpha)$ then for any number of gradient steps the PGM margin-based generalization bound will grow the FGM generalization error bound in Theorem \ref{Thm: fgm} by factor $1/\bigl(1-(2\alpha/\kappa)\overbar{\lip}(\nabla \ell \circ f_\mathbf{w})\bigr)$. We next extend our adversarial generalization analysis to WRM attacks.
\begin{thm}\label{Thm WRM}
For neural net class $\mathcal{F}_{\text{nn}}$ and training loss $\ell$ satisfying Theorem \ref{Thm: fgm}'s assumptions, consider a WRM attack with Lagrangian  coefficient $\lambda$ and Euclidean norm $\Vert\cdot\Vert_2$. Given parameter $0<\tau<1$, assume $ \overbar{\lip}(\nabla \ell \circ f_\mathbf{w})$ defined in Theorem \ref{thm: pgm} is upper-bounded by $\lambda(1-\tau)$ for any $f_\mathbf{w}\in \mathcal{F}_{\text{nn}}$. For any $\eta > 0$, the following WRM margin-based generalization bound holds with probability  $1-\eta$ for any $f_\mathbf{w} \in \mathcal{F}_{\text{nn}}$:
\begin{align*}
 L^{\wrm}_0(f_{\mathbf{w}})  \le \widehat{L}^{\wrm}_{\gamma}(f_{\mathbf{w}}) + \mathcal{O}\biggl( \sqrt{\frac{(B+\frac{1}{\lambda}\prod_{i=1}^d \Vert\mathbf{W}_i\Vert_2 )^2d^2h\log(dh)  \Phi^{\wrm}_{\lambda}(f_\mathbf{w}) + d\log \frac{dn\log M }{\tau\eta}}{\gamma^2n }} \biggr)
\end{align*} 
where we define
$$\Phi^{\wrm}_{\lambda}(f_\mathbf{w}) := \bigl\{ \prod_{i=1}^d \Vert\mathbf{W}_i {\Vert}_2\bigl(1+\frac{1}{\lambda - \overbar{\lip}(\nabla \ell \circ f_\mathbf{w})}(\prod_{i=1}^d \Vert\mathbf{W}_i {\Vert}_2) \sum_{i=1}^d\prod_{j=1}^i \Vert\mathbf{W}_j {\Vert}_2\bigr)\bigr\}^2 \sum_{i=1}^d\frac{\Vert\mathbf{W}_i {\Vert}_F^2}{\Vert\mathbf{W}_i {\Vert}_2^2 }.$$
\end{thm}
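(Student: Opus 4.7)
The plan is to follow the PAC-Bayes template used for Theorems \ref{Thm 1}--\ref{thm: pgm}: instantiate McAllester's bound with a Gaussian prior $P=\mathcal{N}(0,\sigma^2 I)$ and Gaussian posterior $Q=\mathcal{N}(\mathbf{w},\sigma^2 I)$ over the weights, choose $\sigma$ so that the weight noise $\mathbf{u}\sim\mathcal{N}(0,\sigma^2 I)$ changes the WRM-evaluated output by at most $\gamma/8$ with high probability, and then read off the Frobenius-norm KL term together with a union bound over a grid of spectral norms. The attack-specific work reduces to controlling $\sup_{\mathbf{x}\in\mathcal{X}}\bigl\Vert f_{\mathbf{w}+\mathbf{u}}(\mathbf{x}+\delta^{\wrm}_{\mathbf{w}+\mathbf{u}}(\mathbf{x}))-f_\mathbf{w}(\mathbf{x}+\delta^{\wrm}_\mathbf{w}(\mathbf{x}))\bigr\Vert_2$, which I decompose into a pure weight-sensitivity piece and an input-shift piece.

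Step 1 is to bound $\Vert\delta^{\wrm}_\mathbf{w}(\mathbf{x})\Vert_2$. The first-order optimality condition for (\ref{WRM: perturbation}) gives $\lambda\,\delta^{\wrm}_\mathbf{w}(\mathbf{x})=\nabla_\mathbf{x}\ell\bigl(f_\mathbf{w}(\mathbf{x}+\delta^{\wrm}_\mathbf{w}(\mathbf{x})),y\bigr)$, and since $\ell$ has a $1$-Lipschitz first derivative and $f_\mathbf{w}$ is $\prod_i\Vert\mathbf{W}_i\Vert_2$-Lipschitz, the right-hand side has Euclidean norm at most $\prod_{i=1}^d\Vert\mathbf{W}_i\Vert_2$. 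Thus $\Vert\delta^{\wrm}_\mathbf{w}(\mathbf{x})\Vert_2\le\tfrac{1}{\lambda}\prod_i\Vert\mathbf{W}_i\Vert_2$, which is exactly why the input radius $B$ from Theorem \ref{Thm 1} gets replaced by $B+\tfrac{1}{\lambda}\prod_i\Vert\mathbf{W}_i\Vert_2$: the Theorem \ref{Thm 1} output-perturbation lemma will be reapplied on the enlarged set $\mathcal{B}_{\epsilon,\Vert\cdot\Vert_2}(\mathcal{X})$ for this enlarged radius.

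Step 2, the central new ingredient, bounds the sensitivity of the WRM perturbation to weight noise. Writing the optimality conditions at $\mathbf{w}$ and $\mathbf{w}+\mathbf{u}$, subtracting, and using the triangle inequality yields
\begin{align*}
\lambda\bigl\Vert\delta^{\wrm}_{\mathbf{w}+\mathbf{u}}(\mathbf{x})-\delta^{\wrm}_\mathbf{w}(\mathbf{x})\bigr\Vert_2 &\le \bigl\Vert\nabla_\mathbf{x}\ell\bigl(f_{\mathbf{w}+\mathbf{u}}(\mathbf{x}+\delta^{\wrm}_{\mathbf{w}+\mathbf{u}}),y\bigr)-\nabla_\mathbf{x}\ell\bigl(f_\mathbf{w}(\mathbf{x}+\delta^{\wrm}_{\mathbf{w}+\mathbf{u}}),y\bigr)\bigr\Vert_2\\
&\quad+\overbar{\lip}(\nabla\ell\circ f_\mathbf{w})\,\bigl\Vert\delta^{\wrm}_{\mathbf{w}+\mathbf{u}}(\mathbf{x})-\delta^{\wrm}_\mathbf{w}(\mathbf{x})\bigr\Vert_2.
\end{align*}
The first term on the right is the pure weight-sensitivity of $\nabla_\mathbf{x}\ell\circ f_\mathbf{w}$, which is exactly the quantity already computed in the proof of Theorem \ref{Thm: fgm} and scales per unit $\Vert\mathbf{u}\Vert$ as $\bigl(\prod_i\Vert\mathbf{W}_i\Vert_2\bigr)\sum_i\prod_{j\le i}\Vert\mathbf{W}_j\Vert_2$. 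Using the hypothesis $\overbar{\lip}(\nabla\ell\circ f_\mathbf{w})\le\lambda(1-\tau)$ to absorb the second term on the left produces the multiplicative factor $\tfrac{1}{\lambda-\overbar{\lip}(\nabla\ell\circ f_\mathbf{w})}$, which is precisely the coefficient appearing inside $\Phi^{\wrm}_\lambda$.

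Step 3 assembles the pieces. Writing the output change as $f_{\mathbf{w}+\mathbf{u}}(\tilde{\mathbf{x}}')-f_\mathbf{w}(\tilde{\mathbf{x}})=\bigl[f_{\mathbf{w}+\mathbf{u}}(\tilde{\mathbf{x}}')-f_\mathbf{w}(\tilde{\mathbf{x}}')\bigr]+\bigl[f_\mathbf{w}(\tilde{\mathbf{x}}')-f_\mathbf{w}(\tilde{\mathbf{x}})\bigr]$ with $\tilde{\mathbf{x}}=\mathbf{x}+\delta^{\wrm}_\mathbf{w}(\mathbf{x})$ and $\tilde{\mathbf{x}}'=\mathbf{x}+\delta^{\wrm}_{\mathbf{w}+\mathbf{u}}(\mathbf{x})$, I bound the first bracket by the Theorem \ref{Thm 1} weight-perturbation lemma on inputs of norm at most $B+\tfrac{1}{\lambda}\prod_i\Vert\mathbf{W}_i\Vert_2$, and the second by $\prod_i\Vert\mathbf{W}_i\Vert_2$ times the Step~2 estimate. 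Plugging into PAC-Bayes and taking a union bound over an $O(\log M)$ grid of spectral norms per layer---refined by an extra $O(1/\tau)$ factor so that $\overbar{\lip}(\nabla\ell\circ f_\mathbf{w})\le\lambda(1-\tau)$ is maintained at every grid point---yields the $d\log\tfrac{dn\log M}{\tau\eta}$ term. The main obstacle I anticipate is Step 2: the implicit-function contraction requires that $f_{\mathbf{w}+\mathbf{u}}$ also satisfy $\overbar{\lip}(\nabla\ell\circ f_{\mathbf{w}+\mathbf{u}})<\lambda$ so that the perturbed WRM objective remains strongly concave with a unique maximizer. This is handled exactly as in the proofs of Theorems \ref{Thm: fgm}--\ref{thm: pgm} by choosing $\sigma$ small enough that $\Vert\mathbf{U}_i\Vert_2\le\tfrac{1}{d}\Vert\mathbf{W}_i\Vert_2$ for all $i$ with probability $1-\eta/2$, with the $\tau$ slack in the hypothesis absorbing the resulting perturbation in $\overbar{\lip}(\nabla\ell\circ f_{\mathbf{w}+\mathbf{u}})$.
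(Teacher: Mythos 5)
Your proposal is correct and follows essentially the same route as the paper's proof: the fixed-point characterization $\lambda\,\delta^{\wrm}_{\mathbf{w}}(\mathbf{x})=\nabla_{\mathbf{x}}\ell\bigl(f_{\mathbf{w}}(\mathbf{x}+\delta^{\wrm}_{\mathbf{w}}(\mathbf{x})),y\bigr)$ to bound the attack radius by $\tfrac{1}{\lambda}\prod_i\Vert\mathbf{W}_i\Vert_2$, the same subtract-and-absorb contraction argument yielding the $\tfrac{1}{\lambda-\overbar{\lip}(\nabla\ell\circ f_{\mathbf{w}})}$ factor via the FGM weight-sensitivity lemma, the same two-term output decomposition, and the same $O((d/\tau)\log M)$-per-layer cover producing the $d\log\tfrac{dn\log M}{\tau\eta}$ term. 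The stability concern you flag for $f_{\mathbf{w}+\mathbf{u}}$ is indeed handled in the paper exactly as you describe, via the $\Vert\mathbf{U}_i\Vert_2\le\tfrac{1}{d}\Vert\mathbf{W}_i\Vert_2$ constraint and the $\tau$ guard band.
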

\begin{proof}
We defer the proof to the Appendix.
\end{proof}
As discussed by \cite{sinha2017certifiable}, the condition $ \lip(\nabla \ell \circ f_\mathbf{w}) < \lambda$ for the actual Lipschitz constant of $\nabla\ell \circ f_\mathbf{w}$  is in fact required to guarantee WRM's convergence to the global solution. Notice that the WRM generalization error bound in Theorem \ref{Thm WRM} is bounded by the product of $\frac{1}{\lambda - \overbar{\lip}(\nabla \ell \circ f_\mathbf{w})}$ and the FGM generalization bound in Theorem \ref{Thm: fgm}.

\section{Spectral normalization of convolutional layers} \label{section:snconv}

To control the Lipschitz constant of our trained network, we need to ensure that the spectral norm associated with each linear operation in the network does not exceed some pre-specified $\beta$. For fully-connected layers (i.e. regular matrix multiplication), please see Appendix \ref{snfc}. For a general class of linear operations including convolution,   \cite{tsuzuku2018lipschitz} propose to compute the operation's spectral norm through computing the gradient of the Euclidean norm of the operation's output. Here, we leverage the deconvolution operation to further simplify and accelerate computing the spectral norm of the convolution operation. Additionally, \cite{sedghi2018singular} develop a method for computing all the singular values including the largest one, i.e. the spectral norm. While elegant, the method only applies to convolution filters with stride $1$ and zero-padding. However, in practice the normalization factor depends on the stride size and padding scheme governing the convolution operation. Here we develop an efficient approach for computing the maximum singular value, i.e. spectral norm, of convolutional layers with arbitary stride and padding schemes.  Note that, as also discussed by \cite{gouk2018regularisation}, the $i$th convolutional layer output feature map $\psi_i$ is a linear operation of the input $X$:
\begin{align*}
\psi_i(X) = \sum_{j=1}^{M} F_{i, j} \star X_j ,
\end{align*}
where $X$ has $M$ feature maps, $F_{i,j}$ is a filter, and $\star$ denotes the convolution operation (which also encapsulates stride size and padding scheme). For simplicity, we ignore the additive bias terms here. By vectorizing $X$ and letting $V_{i,j}$ represent the overall linear operation associated with $F_{i,j}$, we see that $$
\psi_i(X) = 
\begin{bmatrix}
V_{1, 1} & \dots & V_{1, M}
\end{bmatrix} X, 
$$ 
and therefore the overall convolution operation can be described using
\begin{align*}
\psi(X) = 
\begin{bmatrix}
V_{1, 1} & \dots & V_{1, M} \\
\vdots & \ddots & \vdots \\ 
V_{N, 1} & \dots & V_{N, M}
\end{bmatrix} X = WX. 
\end{align*}

While explicitly reconstructing $W$ is expensive, we can still compute $\sigma(W)$, the spectral norm of $W$, by leveraging the convolution transpose operation implemented by several modern-day deep learning packages. This allows us to efficiently performs matrix multiplication with $W^T$ without explicitly constructing $W$. Therefore we can approximate $\sigma(W)$ using a modified version of power iteration (Algorithm \ref{cpi}), wrapping the appropriate stride size and padding arguments into the convolution and convolution transpose operations. After obtaining $\sigma(W)$, we compute $W_{\sn}$ in the same manner as for the fully-connected layers. Like Miyato et al., we exploit the fact that SGD only makes small updates to $W$ from training step to training step, reusing the same $\tilde{\mathbf{u}}$ and running only one iteration per step. Unlike Miyato et al., rather than enforcing $\sigma(W) = \beta$, we instead enforce the looser constraint $\sigma(W) \leq \beta$: 
\begin{align}
\label{eq:sn}
W_{\sn} = W/\max(1, \sigma(W)/\beta),
\end{align}
which we observe to result in faster training for supervised learning tasks.

\begin{algorithm}
\caption{Convolutional power iteration}
\begin{algorithmic}
\label{cpi}
\STATE Initialize $\tilde{\mathbf{u}}$ with a random vector matching the shape of the convolution input
\FOR{$t=0,...,T-1$}
\STATE
$\tilde{\mathbf{v}} \leftarrow \text{\texttt{conv}} (W, \tilde{\mathbf{u}}) / \|\text{\texttt{conv}} (W, \tilde{\mathbf{u}})\|_2 $ \\
$\tilde{\mathbf{u}} \leftarrow \text{\texttt{conv\_transpose}} (W, \tilde{\mathbf{v}}) / \|\text{\texttt{conv\_transpose}} (W, \tilde{\mathbf{v}})\|_2 $
\ENDFOR \\
\STATE
$\sigma \leftarrow \tilde{\mathbf{v}} \cdot  \text{\texttt{conv}} (W, \tilde{\mathbf{u}})$
\end{algorithmic}
\end{algorithm}

\section{Numerical Experiments}

In this section we provide an array of empirical experiments to validate both the bounds we derived in Section \ref{section:bounds} and our implementation of spectral normalization described in section \ref{section:snconv}. We show that spectral normalization improves both test accuracy and generalization for a variety of adversarial training schemes, datasets, and network architectures. 

All experiments are implemented in TensorFlow \citep{abadi2016tensorflow}. For each experiment, we cross validate 4 to 6 values of $\beta$ (see (\ref{eq:sn})) using a fixed validation set of 500 samples. For PGM, we used $r = 15$ iterations and $\alpha = 2\epsilon/r$. Additionally, for FGM and PGM we used $\ell_2$-type attacks (unless specified) with magnitude $\epsilon = 0.05 \mathbb{E}_{\hat{P}}[\|\mathbf{X}\|_2]$ (this value was approximately 2.44 for CIFAR10). For WRM, we implemented gradient ascent as discussed by \cite{sinha2017certifiable}. Additionally, for WRM training we used a Lagrangian coefficient of $0.002\mathbb{E}_{\hat{P}}[\|\mathbf{X}\|_2]$ for CIFAR10 and SVHN and a Lagrangian coefficient of $0.04\mathbb{E}_{\hat{P}}[\|\mathbf{X}\|_2]$ for MNIST in a similar manner to \cite{sinha2017certifiable}. The code will be made readily available.

%%%%%
\subsection{Validation of spectral normalization implementation and bounds}

We first demonstrate the effect of the proposed spectral normalization approach on the final DNN weights by comparing the $\ell_2$ norm of the input $\mathbf{x}$  to that of the output $f_{\mathbf{w}}(\mathbf{x})$. As shown in Figure \ref{fig:gain_kappa}(a), without spectral normalization ($\beta=\infty$ in (\ref{eq:sn})), the norm gain can be large. Additionally, because we are using cross-entropy loss, the weights (and therefore the norm gain) can grow arbitrarily high if we continue training as reported by \cite{neyshabur2017exploring}. As we decrease $\beta$, however, we produce more constrained networks, resulting in a decrease in norm gain. At $\beta=1$, the gain of the network cannot be greater than 1, which is consistent with what we observe. Additionally, we provide a comparison of our method to that of \cite{miyato2018spectral} in Appendix \ref{appendix:miyato}, empirically demonstrating that Miyato et al.'s method does not properly control the spectral norm of convolutional layers, resulting in worse generalization performance.

Figure \ref{fig:gain_kappa}(b) shows that the $\ell_2$ norms of the gradients with respect to the training samples are nicely distributed after spectral normalization. Additionally, this figure suggests that the minimum gradient $\ell_2$-norm assumption (the $\kappa$ condition in Theorems 2 and 3) holds for spectrally-normalized networks. 

The first column of Figure \ref{fig:margins} shows that, as observed by \cite{bartlett2017spectrally}, AlexNet trained using ERM generates similar margin distributions for both random and true labels on CIFAR10 unless we normalize the margins appropriately. We see that even without further correction, ERM training with SN allows AlexNet to have distinguishable performance between the two datasets. This observation suggests that SN as a regularization scheme enforces the generalization error bounds shown for spectrally-normalized DNNs by \cite{bartlett2017spectrally} and \cite{neyshabur2017pac}. Additionally, the margin normalization factor (the capacity norm $\Phi$ in Theorems 1-4) is much smaller for networks trained with SN. As demonstrated by the other columns in Figure \ref{fig:margins}, a smaller normalization factor results in larger normalized margin values and much tighter margin-based generalization bounds (a factor of $10^2$ for ERM and a factor of $10^5$ for FGM and PGM) (see Theorems 1-4).

\begin{figure}[h]
\centering
\begin{subfigure}{.5\textwidth}
  \centering
  \includegraphics[scale=0.475]{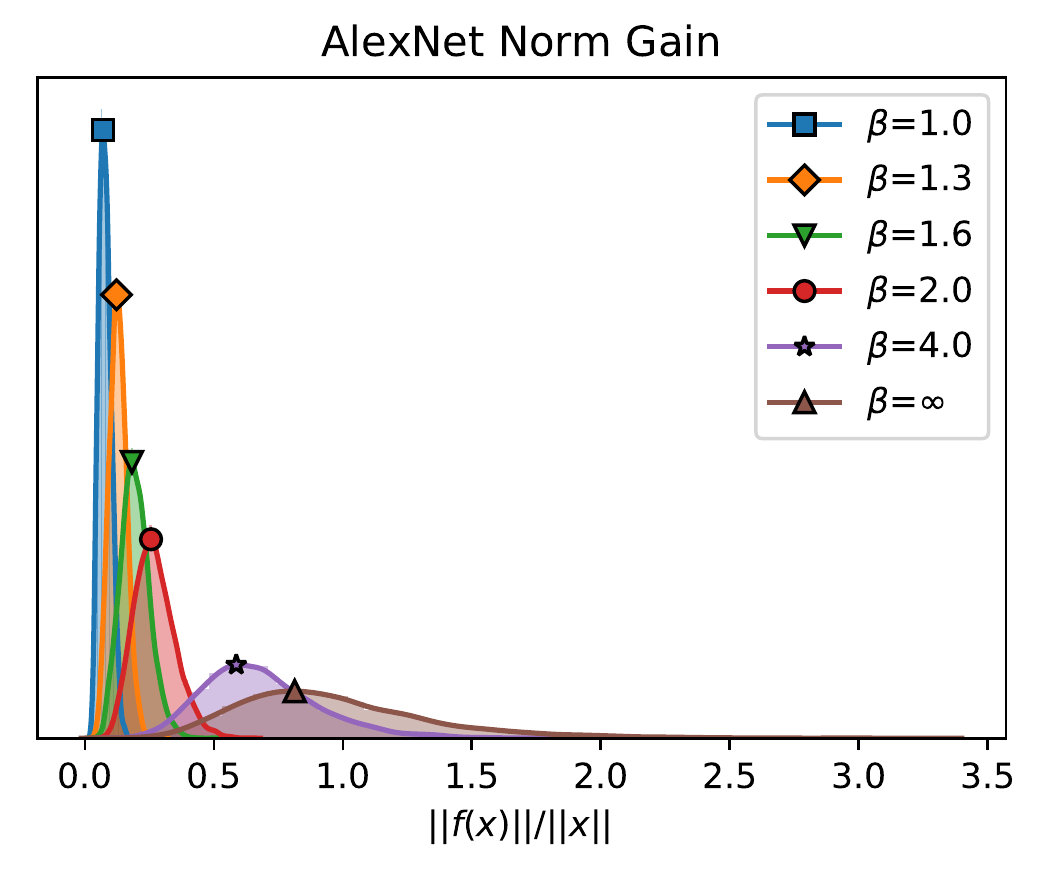}
  \captionsetup{width=.8\linewidth}
  \caption{$\ell_2$ norm gain due to the network $f$ trained using ERM.}
  \label{fig:sub1}
\end{subfigure}%
\begin{subfigure}{.5\textwidth}
  \centering
  \includegraphics[scale=0.475]{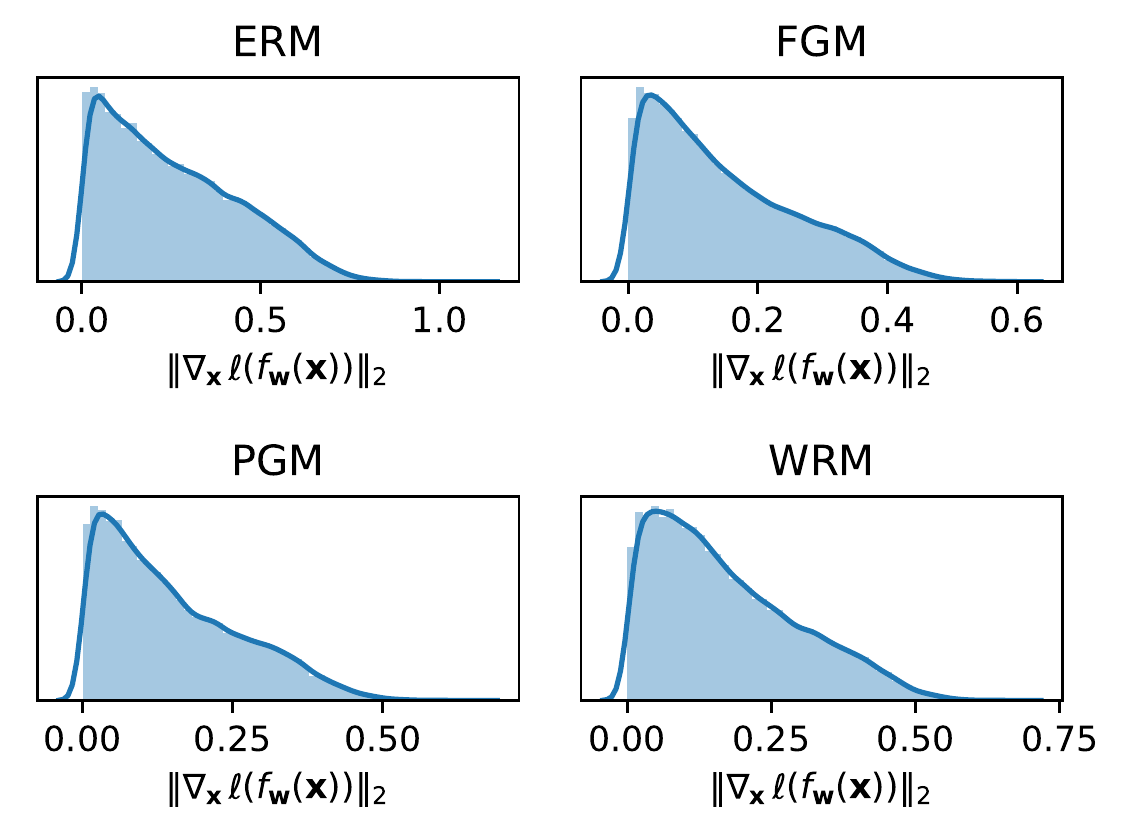}
  \captionsetup{width=.8\linewidth}
  \caption{Distributions of the $\ell_2$ norms of the gradients with respect to training samples. Training regularized with SN.}
  \label{fig:sub2}
\end{subfigure}
\caption{Validation of SN implementation and distribution of the gradient norms using AlexNet trained on CIFAR10.}
\label{fig:gain_kappa}
\end{figure}

\begin{figure}[h]
\begin{center}
\includegraphics[scale=0.45]{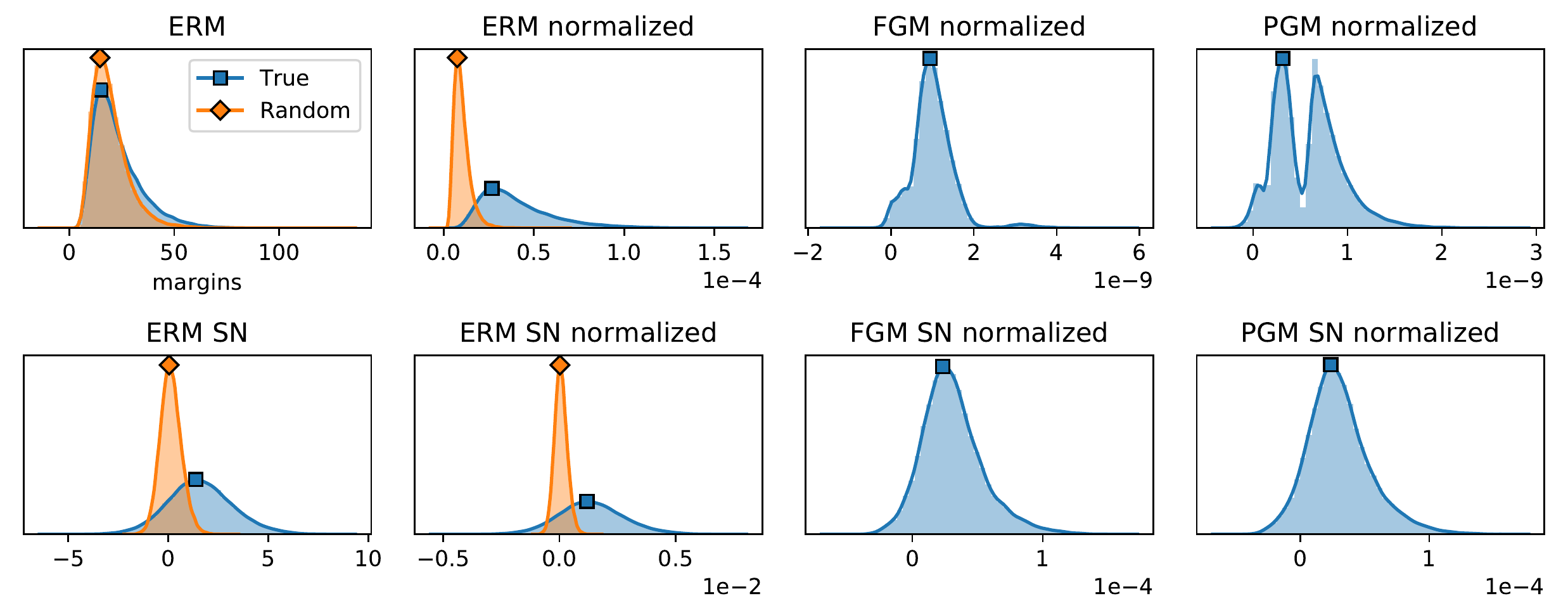}
\end{center}
\caption{Effect of SN on distributions of unnormalized (leftmost column) and normalized (other three columns) margins for AlexNet fit on CIFAR10. The normalization factor is described by the capacity norm $\Phi$ reported in Theorems 1-4.}
\label{fig:margins}
\end{figure}

\vspace*{-3mm}
\subsection{Spectral normalization improves generalization and adversarial robustness} \label{title:randlabels}

The phenomenon of overfitting random labels described by \cite{zhang2016understanding} can be observed even for adversarial training methods. Figure \ref{fig:overfit} shows how the FGM, PGM, or WRM adversarial training schemes only slightly delay the rate at which AlexNet fits random labels on CIFAR10, and therefore the generalization gap can be quite large without proper regularization. After introducing spectral normalization, however, we see that the network has a much harder time fitting both the random and true labels. With the proper amount of SN (chosen via cross validation), we can obtain networks that struggle to fit random labels while still obtaining the same or better test performance on true labels.

We also observe that training schemes regularized with SN result in networks more robust to adversarial attacks. Figure \ref{fig:advrobust} shows that even without adversarial training, AlexNet with SN becomes more robust to FGM, PGM, and WRM attacks. Adversarial training improves adversarial robustness more than SN by itself; however we see that we can further improve the robustness of the trained networks significantly by combining SN with adversarial training.

\begin{figure}[h]
\begin{center}
\includegraphics[scale=0.425]{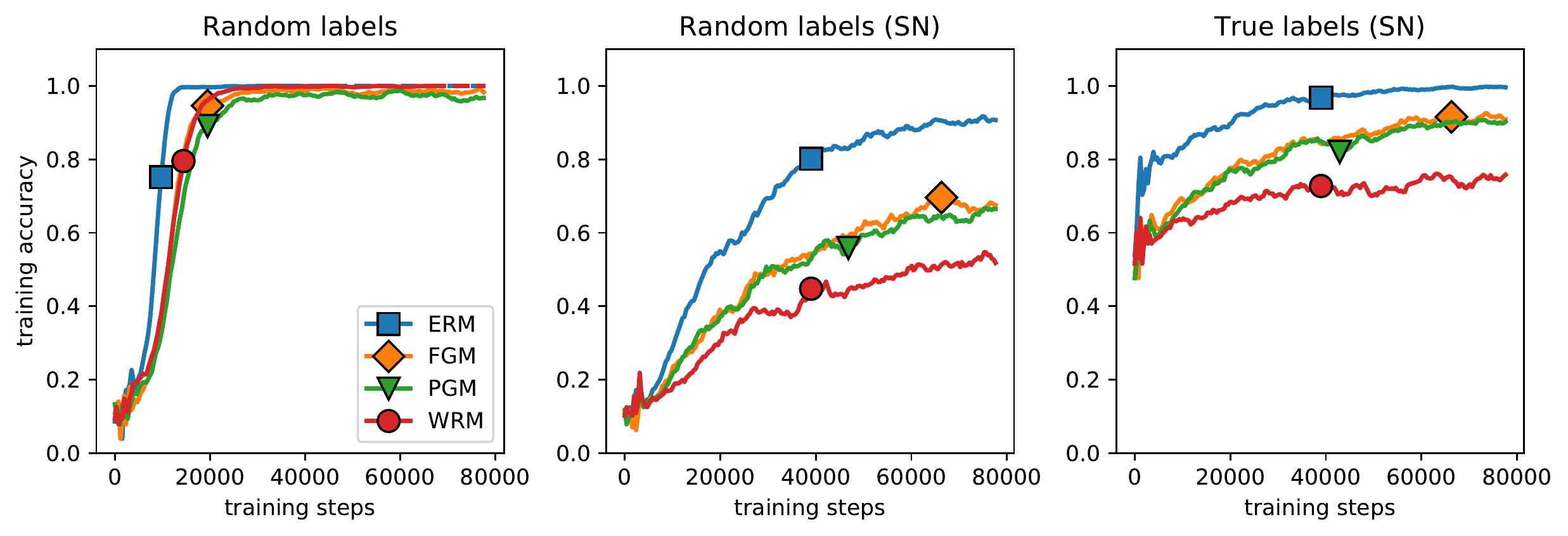}
\end{center}
\caption{Fitting random and true labels on CIFAR10 with AlexNet using adversarial training.}
\label{fig:overfit}
\end{figure}
\begin{figure}[h]
\begin{center}
\includegraphics[scale=0.41]{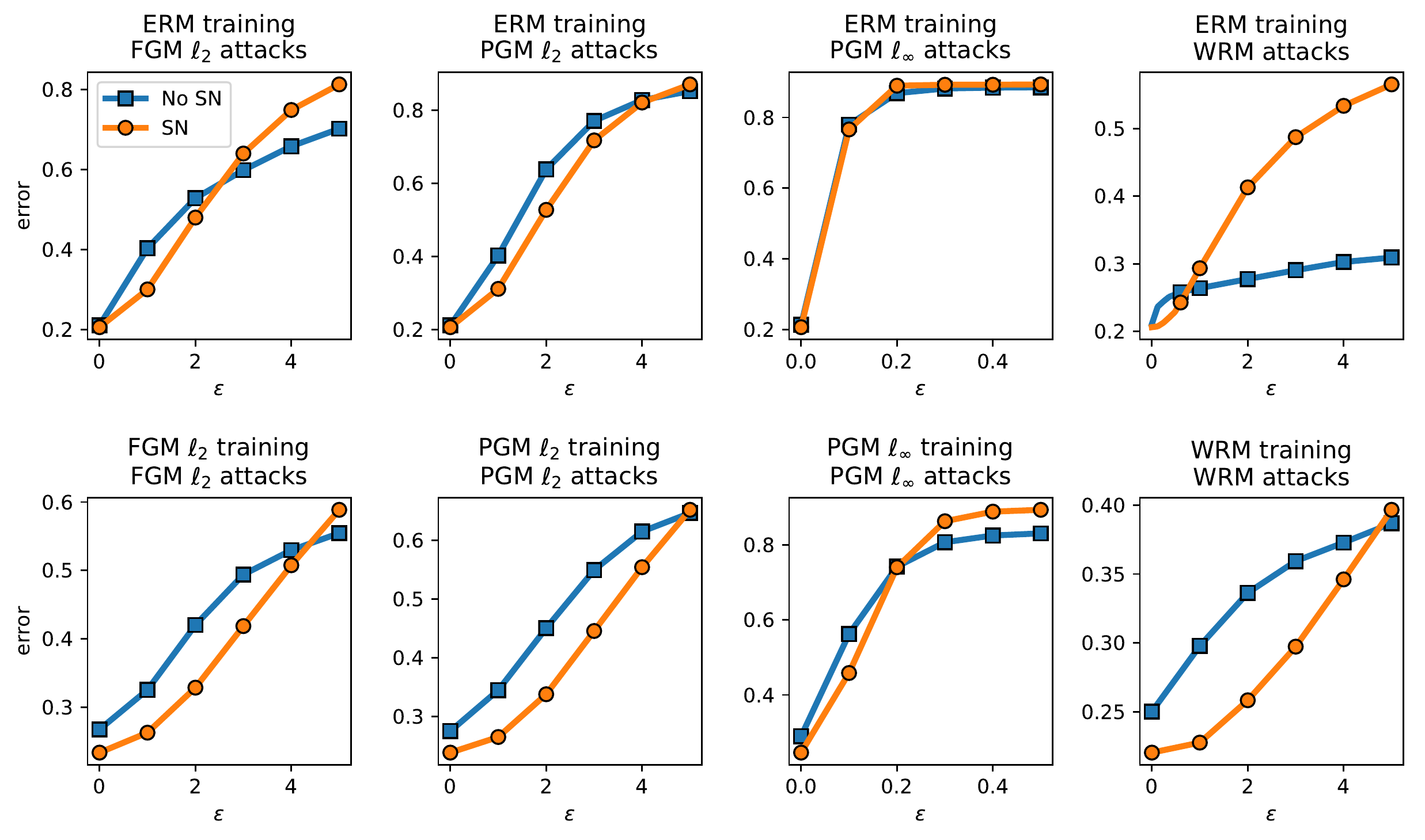}
\end{center}
\caption{Robustness of AlexNet trained on CIFAR10 to various adversarial attacks.}
\label{fig:advrobust}
\end{figure}
\vspace{-5mm}
\subsection{Other datasets and architectures}

We demonstrate the power of regularization via SN on several combinations of datasets, network architectures, and adversarial training schemes. The datasets we evaluate are CIFAR10, MNIST, and SVHN. We fit CIFAR10 using the AlexNet and Inception networks described by \cite{zhang2016understanding}, 1-hidden-layer and 2-hidden-layer multi layer perceptrons (MLPs) with ELU activation and 512 hidden nodes in each layer, and the ResNet architecture (\cite{he2016identity}) provided in TensorFlow for fitting CIFAR10. We fit MNIST using the ELU network described by \cite{sinha2017certifiable} and the 1-hidden-layer and 2-hidden-layer MLPs. Finally, we fit SVHN using the same AlexNet architecture we used to fit CIFAR10. Our implementations do not use any additional regularization schemes including weight decay, dropout \citep{srivastava2014dropout}, and batch normalization \citep{ioffe2015batch} as these approaches are not motivated by the theory developed in this work; however, we provide numerical experiments comparing the proposed approach with weight decay, dropout, and batch normalization in Appendix \ref{appendix:regcompare}.

Table \ref{table:results} in the Appendix reports the pre and post-SN test accuracies for all 42 combinations evaluated. Figure \ref{fig:train_alexnet} in the Introduction and Figures \ref{fig:train_alex_linf}-\ref{fig:train_alexnet_elu} in the Appendix show examples of training and validation curves on some of these combinations. We see that the validation curve generally improves after regularization with SN, and the observed improvements in validation accuracy are confirmed by the test accuracies reported in Table \ref{table:results}. Figure \ref{fig:scatter} visually summarizes Table \ref{table:results}, showing how SN can often significantly improve the test accuracy (and therefore decrease the generalization gap) for several of the combinations. %We see that networks fitted on MNIST generally do quite well even in the presence of adversarial attacks, and therefore improvements to these networks are more modest compared to networks fitted on SVHN or CIFAR10. 
We also provide Table \ref{table:runtimes} in the Appendix which shows the proportional increase in training time after introducing SN with our TensorFlow implementation.

\begin{figure}[h]
\begin{center}
\includegraphics[scale=0.45]{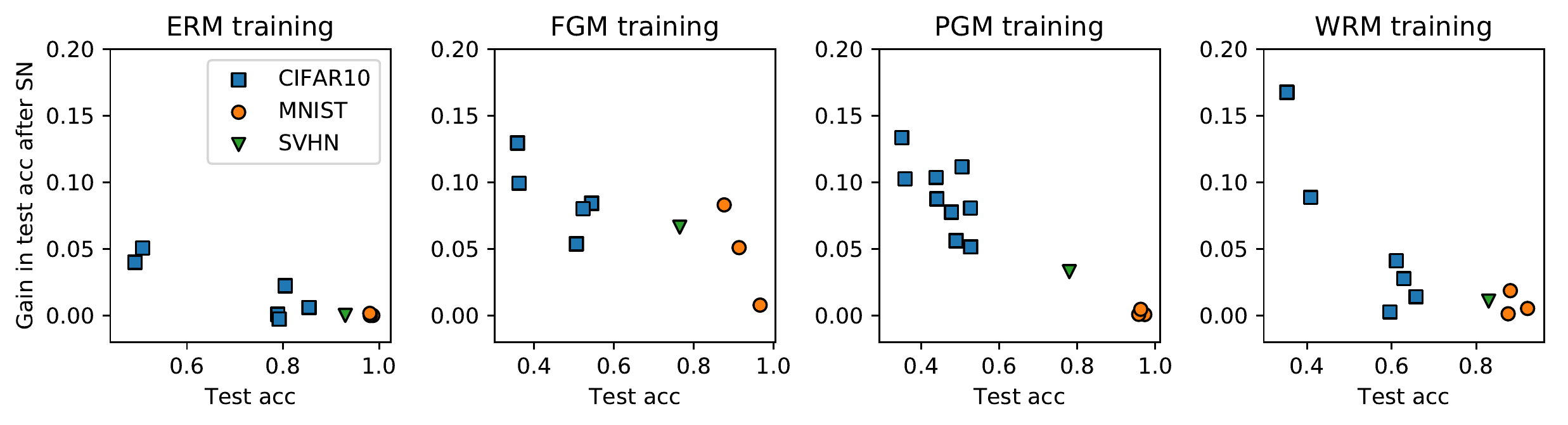}
\end{center}
\caption{Test accuracy improvement after SN for various datasets and network architectures. Please see Table \ref{table:results} in the Appendix for more details. }
\label{fig:scatter}
\end{figure}
\vspace*{-5mm}
\section{Related Works}
Providing theoretical guarantees for adversarial robustness of various classifiers has been studied in multiple works. \cite{wang2017analyzing} targets analyzing the adversarial robustness of the nearest neighbor approach. \cite{gilmer2018adversarial} studies the effect of the complexity of the data-generating manifold on the final adversarial robustness for a specific trained model. \cite{fawzi2018adversarial} proves lower-bounds for the complexity of robust learning in adversarial settings, targeting the population distribution of data.
\cite{xu2009robustness} shows that the regularized support vector machine (SVM) can be interpreted via robust optimization. \cite{fawzi2016robustness} analyzes the robustness of a fixed classifier to random and adversarial perturbations of the input data. While all of these works seek to understand the robustness properties of different classification function classes, unlike our work they do not focus on the generalization aspects of learning over DNNs under adversarial attacks.

Concerning the generalization aspect of adversarial training, \cite{sinha2017certifiable} provides optimization and generalization guarantees for WRM under the assumptions discussed after Theorem \ref{Thm WRM}. However, their generalization guarantee only applies to the Wasserstein cost function, which is different from the 0-1 or margin loss and does not explicitly suggest a regularization scheme. In a recent related work, \cite{schmidt2018adversarially} numerically shows the wide generalization gap in PGM adversarial training and theoretically establishes lower-bounds on the sample complexity of linear classifiers in Gaussian settings. While our work does not provide sample complexity lower-bounds, we study the broader function class of DNNs where we provide upper-bounds on adversarial generalization error and suggest an explicit regularization scheme for adversarial training over DNNs.  

Generalization in deep learning has been a topic of great interest in machine learning \citep{zhang2016understanding}. In addition to margin-based bounds \citep{bartlett2017spectrally,neyshabur2017pac}, various other tools including VC dimension \citep{anthony2009neural}, norm-based capacity scores \citep{bartlett2002rademacher,neyshabur2015norm}, and flatness of local minima \citep{keskar2016large,neyshabur2017exploring} have been used to analyze generalization properties of DNNs. Recently, \cite{arora2018stronger} introduced a compression approach to further improve the margin-based bounds presented by \cite{bartlett2017spectrally, neyshabur2017pac}. The PAC-Bayes bound has also been considered and computed by \cite{dziugaite2017entropy}, resulting in non-vacuous bounds for MNIST.

\bibliographystyle{unsrt}

\clearpage

\begin{appendices}
\section{Further experimental results} \label{a_moreexp}
\begin{figure}[h]
\begin{center}
\includegraphics[scale=0.43]{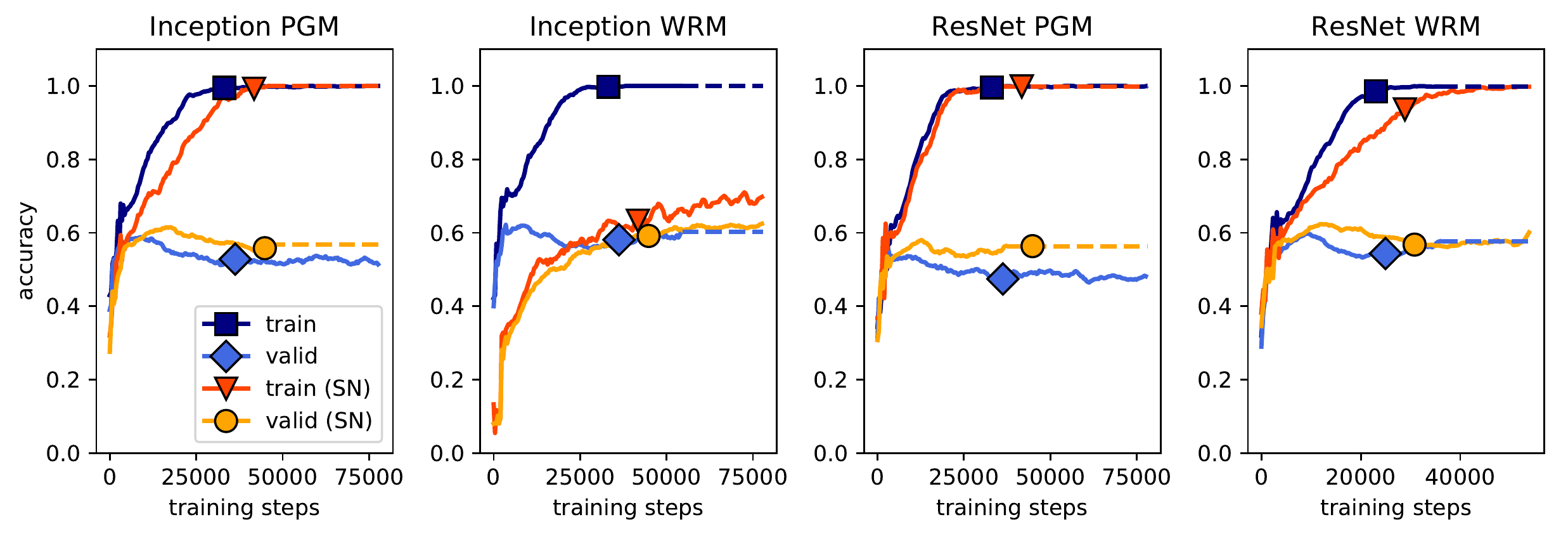}
\end{center}
\caption{Adversarial training performance with and without spectral normalization for Inception and ResNet fit on CIFAR10.}
\label{fig:train_inception_resnet}
\end{figure}
\begin{figure}[h]
\begin{center}
\includegraphics[scale=0.43]{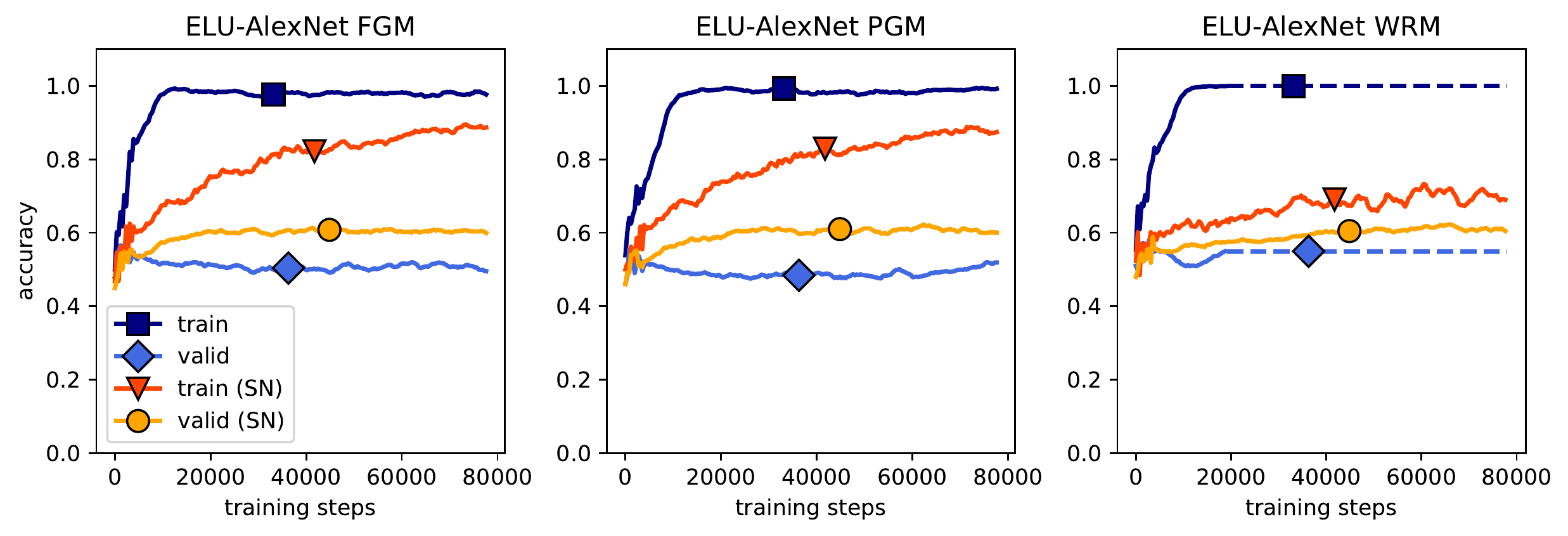}
\end{center}
\caption{Adversarial training performance with and without spectral normalization for AlexNet with ELU activation functions fit on CIFAR10.}
\label{fig:train_alexnet_elu}
\end{figure}
\begin{figure}[h]
\begin{center}
\includegraphics[scale=0.43]{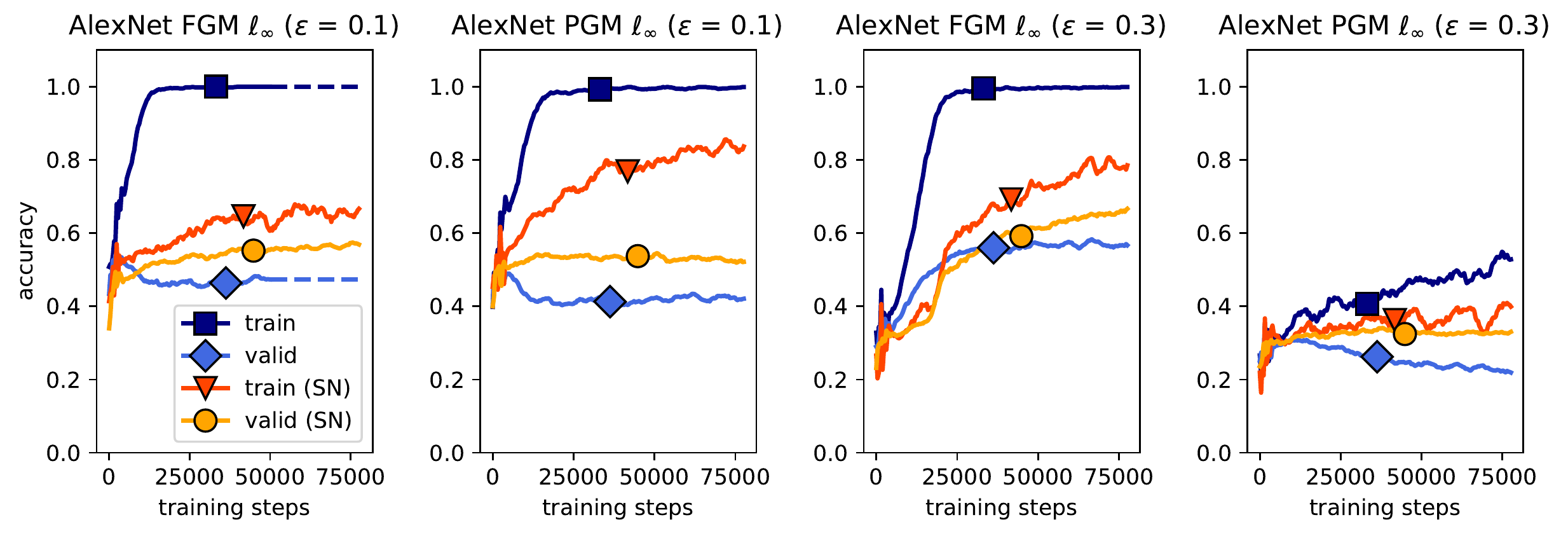}
\end{center}
\caption{Adversarial training performance with and without spectral normalization for AlexNet fit on CIFAR10.}
\label{fig:train_alex_linf}
\end{figure}
\clearpage
\begin{table}[h]
\caption{Train and test accuracies before and after spectral normalization}
\label{table:results}
\begin{tabular}{lllrrr>{\bfseries}r}
\toprule
 Dataset & Architecture & Training &  Train acc &  Test acc &  Train acc (SN) &  \normalfont{Test acc (SN)} \\
\midrule
 CIFAR10 &      AlexNet &      ERM &       1.00 &      \textbf{0.79} &            1.00 &           0.79 \\
 CIFAR10 &      AlexNet &      FGM $\ell_2$ &       0.98 &      0.54 &            0.93 &           0.63 \\
 CIFAR10 &      AlexNet &      FGM $\ell_\infty$ &       1.00 &      0.51 &            0.67 &           0.56 \\
 CIFAR10 &      AlexNet &      PGM $\ell_2$ &       0.99 &      0.50 &            0.92 &           0.62 \\
 CIFAR10 &      AlexNet &      PGM $\ell_\infty$ &       0.99 &      0.44 &            0.86 &           0.54 \\
 CIFAR10 &      AlexNet &      WRM &       1.00 &      0.61 &            0.76 &           0.65 \\
 CIFAR10 &  ELU-AlexNet &      ERM &       1.00 &      \textbf{0.79} &            1.00 &           0.79 \\
 CIFAR10 &  ELU-AlexNet &      FGM $\ell_2$ &       0.97 &      0.52 &            0.68 &           0.60 \\
 CIFAR10 &  ELU-AlexNet &      PGM $\ell_2$ &       0.98 &      0.53 &            0.88 &           0.61 \\
 CIFAR10 &  ELU-AlexNet &      WRM &       1.00 &      \textbf{0.60} &            1.00 &           0.60 \\
 CIFAR10 &    Inception &      ERM &       1.00 &      0.85 &            1.00 &           0.86 \\
 CIFAR10 &    Inception &      PGM $\ell_2$ &       0.99 &      0.53 &            1.00 &           0.58 \\
 CIFAR10 &    Inception &      PGM $\ell_\infty$ &       0.98 &      0.48 &            0.62 &           0.56 \\
 CIFAR10 &    Inception &      WRM &       1.00 &      0.66 &            1.00 &           0.67 \\
 CIFAR10 &  1-layer MLP &      ERM &       0.98 &      0.49 &            0.68 &           0.53 \\
 CIFAR10 &  1-layer MLP &      FGM $\ell_2$ &       0.60 &      0.36 &            0.60 &           0.46 \\
 CIFAR10 &  1-layer MLP &      PGM $\ell_2$ &       0.57 &      0.36 &            0.55 &           0.46 \\
 CIFAR10 &  1-layer MLP &      WRM &       0.60 &      0.41 &            0.62 &           0.50 \\
 CIFAR10 &  2-layer MLP &      ERM &       0.99 &      0.51 &            0.79 &           0.56 \\
 CIFAR10 &  2-layer MLP &      FGM $\ell_2$ &       0.57 &      0.36 &            0.66 &           0.49 \\
 CIFAR10 &  2-layer MLP &      PGM $\ell_2$ &       0.93 &      0.35 &            0.66 &           0.48 \\
 CIFAR10 &  2-layer MLP &      WRM &       0.87 &      0.35 &            0.73 &           0.52 \\
 CIFAR10 &       ResNet &      ERM &       1.00 &      0.80 &            1.00 &           0.83 \\
 CIFAR10 &       ResNet &      PGM $\ell_2$ &       0.99 &      0.49 &            1.00 &           0.55 \\
 CIFAR10 &       ResNet &      PGM $\ell_\infty$ &       0.98 &      0.44 &            0.72 &           0.53 \\
 CIFAR10 &       ResNet &      WRM &       1.00 &      0.63 &            1.00 &           0.66 \\
   MNIST &       ELU-Net &      ERM &       1.00 &      \textbf{0.99} &            1.00 &           0.99* \\
   MNIST &       ELU-Net &      FGM $\ell_2$ &       0.98 &      \textbf{0.97} &            1.00 &           0.97 \\
   MNIST &       ELU-Net &      PGM $\ell_2$ &       0.99 &      \textbf{0.97} &            1.00 &           0.97 \\
   MNIST &       ELU-Net &      WRM &       0.95 &      0.92 &            0.95 &           0.93 \\
   MNIST &  1-layer MLP &      ERM &       1.00 &      \textbf{0.98} &            1.00 &           0.98* \\
   MNIST &  1-layer MLP &      FGM $\ell_2$ &       0.88 &      0.88 &            1.00 &           0.96 \\
   MNIST &  1-layer MLP &      PGM $\ell_2$ &       1.00 &      \textbf{0.96} &            1.00 &           0.96 \\
   MNIST &  1-layer MLP &      WRM &       0.92 &      \textbf{0.88} &            0.92 &           0.88 \\
   MNIST &  2-layer MLP &      ERM &       1.00 &      \textbf{0.98} &            1.00 &           0.98 \\
   MNIST &  2-layer MLP &      FGM $\ell_2$ &       0.97 &      0.91 &            1.00 &           0.96 \\
   MNIST &  2-layer MLP &      PGM $\ell_2$ &       1.00 &      0.96 &            1.00 &           0.97 \\
   MNIST &  2-layer MLP &      WRM &       0.97 &      0.88 &            0.98 &           0.90 \\
    SVHN &      AlexNet &      ERM &       1.00 &      \textbf{0.93} &            1.00 &           0.93* \\
    SVHN &      AlexNet &      FGM $\ell_2$ &       0.97 &      0.76 &            0.95 &           0.83 \\
    SVHN &      AlexNet &      PGM $\ell_2$ &       1.00 &      0.78 &            0.85 &           0.81 \\
    SVHN &      AlexNet &      WRM &       1.00 &      0.83 &            0.87 &           0.84 \\
\bottomrule
\end{tabular}
\\ * $\beta = \infty$ (i.e. no spectral normalization) achieved the highest validation accuracy.
\end{table}
\clearpage
\begin{table}[h]
\centering
\caption{Runtime increase after introducing spectral normalization for various datasets, network architectures, and training schemes. These ratios were obtained by running the experiments on one NVIDIA Titan Xp GPU for 40 epochs.}
\label{table:runtimes}
\begin{tabular}{lllrrr>{\bfseries}r}
\toprule
 Dataset & Architecture & Training & no SN runtime & SN runtime &  ratio \\
\midrule
 CIFAR10 &      AlexNet &      ERM &   229 s &  283 s &  1.24\\
 CIFAR10 &      AlexNet &      FGM $\ell_2$ & 407 s & 463 s &    1.14\\
 CIFAR10 &      AlexNet &      FGM $\ell_\infty$ &   408 s & 465 s &    1.14\\
 CIFAR10 &      AlexNet &      PGM $\ell_2$ &   2917 s & 3077 s &   1.05\\
 CIFAR10 &      AlexNet &      PGM $\ell_\infty$ &    2896 s & 3048 s &  1.05\\
 CIFAR10 &      AlexNet &      WRM &    3076 s & 3151 s &   1.02\\
 CIFAR10 &  ELU-AlexNet &      ERM &   231 s & 283 s &   1.23\\
 CIFAR10 &  ELU-AlexNet &      FGM $\ell_2$ &    410 s & 466 s &   1.14\\
 CIFAR10 &  ELU-AlexNet &      PGM $\ell_2$ &    2939 s & 3093 s &   1.05\\
 CIFAR10 &  ELU-AlexNet &      WRM &   3094 s & 3150 s &  1.02 \\
 CIFAR10 &    Inception &      ERM &   632 s & 734 s &  1.16 \\
 CIFAR10 &    Inception &      PGM $\ell_2$ & 9994 s & 6082 s & 0.61  \\
 CIFAR10 &    Inception &      PGM $\ell_\infty$ & 9948 s & 6063 s & 0.61 \\
 CIFAR10 &    Inception &      WRM &  10247 s & 6356 s & 0.62 \\
 CIFAR10 &  1-layer MLP &      ERM &   22 s & 31 s & 1.42 \\
 CIFAR10 &  1-layer MLP &      FGM $\ell_2$ &  25 s & 35 s & 1.43 \\
 CIFAR10 &  1-layer MLP &      PGM $\ell_2$ &  79 s & 93 s & 1.18 \\
 CIFAR10 &  1-layer MLP &      WRM &  73 s & 86 s & 1.18  \\
 CIFAR10 &  2-layer MLP &      ERM &  23 s & 37 s &  1.59 \\
 CIFAR10 &  2-layer MLP &      FGM $\ell_2$ &   27 s & 41 s &  1.51  \\
 CIFAR10 &  2-layer MLP &      PGM $\ell_2$ &   91 s & 108 s &   1.19 \\
 CIFAR10 &  2-layer MLP &      WRM &   85 s & 103 s &  1.21 \\
 CIFAR10 &       ResNet &      ERM &  315 s & 547 s &   1.73  \\
 CIFAR10 &       ResNet &      PGM $\ell_2$ &  2994 s & 3300 s & 1.10  \\
 CIFAR10 &       ResNet &      PGM $\ell_\infty$ & 2980 s & 3300 s &  1.11  \\
 CIFAR10 &       ResNet &      WRM &   3187 s & 3457 s &  1.08 \\
   MNIST &       ELU-Net &      ERM &      55 s & 97 s & 1.76 \\
   MNIST &       ELU-Net &      FGM $\ell_2$ &   91 s  & 136 s &  1.49   \\
   MNIST &       ELU-Net &      PGM $\ell_2$ &   614 s & 676 s &   1.10  \\
   MNIST &       ELU-Net &      WRM &    635 s & 670 s &   1.06 \\
   MNIST &  1-layer MLP &      ERM &     15 s & 24 s &  1.60 \\
   MNIST &  1-layer MLP &      FGM $\ell_2$ &   17 s & 27 s &  1.57  \\
   MNIST &  1-layer MLP &      PGM $\ell_2$ &   57 s & 71 s &   1.24 \\
   MNIST &  1-layer MLP &      WRM &   51 s & 63 s &    1.24 \\
   MNIST &  2-layer MLP &      ERM &  17 s & 31 s &    1.84 \\
   MNIST &  2-layer MLP &      FGM $\ell_2$ &      20 s & 35 s & 1.77 \\
   MNIST &  2-layer MLP &      PGM $\ell_2$ &   67 s & 89 s &   1.32  \\
   MNIST &  2-layer MLP &      WRM &  62 s & 81 s &    1.30  \\
    SVHN &      AlexNet &      ERM &   334 s & 412 s &   1.23 \\
    SVHN &      AlexNet &      FGM $\ell_2$ &   596 s & 676 s &  1.13   \\
    SVHN &      AlexNet &      PGM $\ell_2$ &  4270 s & 4495 s &    1.05 \\
    SVHN &      AlexNet &      WRM &    4501 s & 4572 s &  1.02  \\
\bottomrule
\end{tabular}
\end{table}
\clearpage

\subsection{Comparison of proposed method to \cite{miyato2018spectral}'s method} \label{appendix:miyato}

For the optimal $\beta$ chosen when fitting AlexNet to CIFAR10 with PGM, we repeat the experiment using the spectral normalization approach suggested by \cite{miyato2018spectral}. This approach performs spectral normalization on convolutional layers by scaling the convolution kernel by the spectral norm of the \emph{kernel} rather than the spectral norm of the overall convolution operation. Because it does not account for how the kernel can amplify perturbations in a single pixel multiple times (see Section \ref{section:snconv}), it does not properly control the spectral norm.

In Figure \ref{fig:miyatovproposed}, we see that for the optimal $\beta$ reported in the main text, using \cite{miyato2018spectral}'s SN method results in worse generalization performance. This is because although we specified that $\beta=1.6$, the actual $\beta$ obtained using \cite{miyato2018spectral}'s method can be much greater for convolutional layers, resulting in overfitting (hence the training curve quickly approaches 1.0 accuracy). The AlexNet architecture used has two convolutional layers. For the proposed method, the final spectral norms of the convolutional layers were both 1.60; for \cite{miyato2018spectral}'s method, the final spectral norms of the convolutional layers were 7.72 and 7.45 despite the corresponding convolution kernels having spectral norms of 1.60. 

Our proposed method is less computationally efficient in comparison to \cite{miyato2018spectral}'s approach because each power iteration step requires a convolution operation rather than a division operation. As shown in Table \ref{table:runtimesmiyato}, the proposed approach is not significantly less efficient with our TensorFlow implementation. 

\begin{figure}[h]
\begin{center}
\includegraphics[scale=0.40]{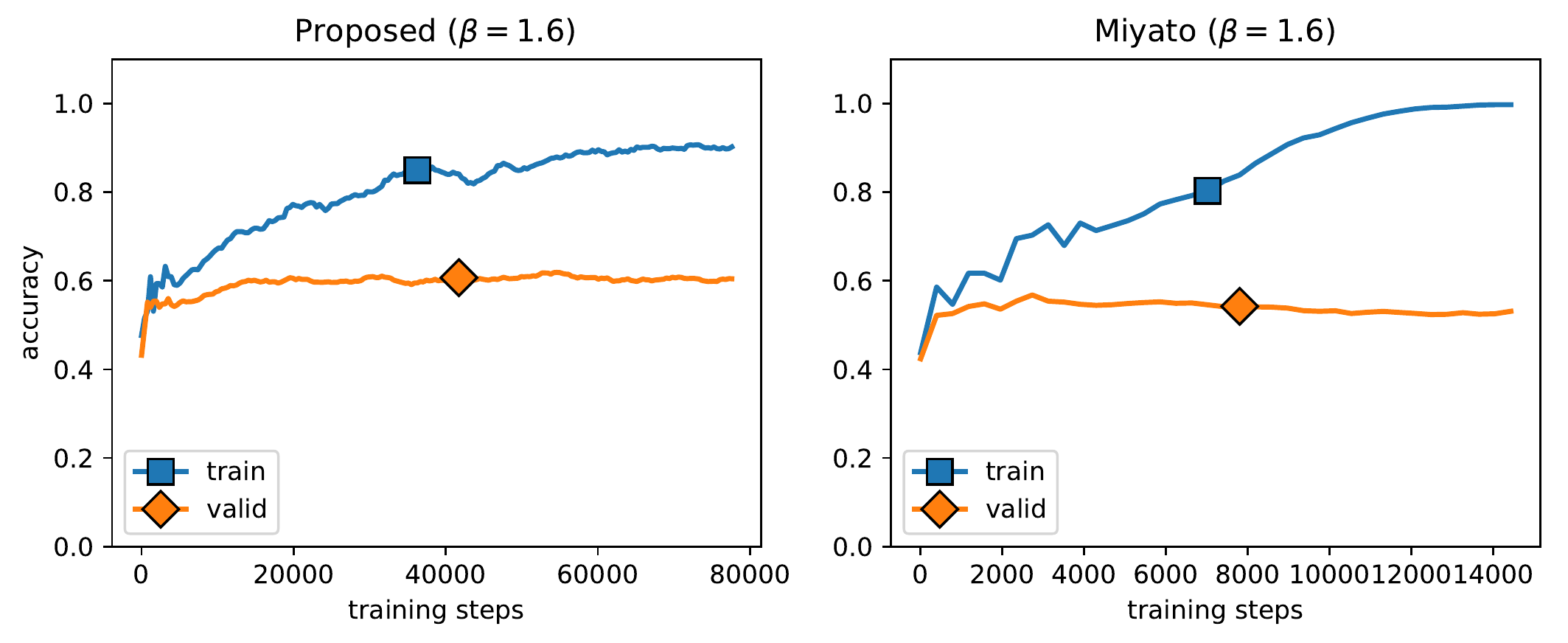}
\end{center}
\caption{Adversarial training performance with proposed SN versus \cite{miyato2018spectral}'s SN for AlexNet fit on CIFAR10 using PGM. The final train and validation accuracies for the proposed method are 0.92 and 0.60. The final train and validation accuracies for \cite{miyato2018spectral}'s are 1.00 and 0.55.}
\label{fig:miyatovproposed}
\end{figure}

\begin{table}[h]
\centering
\caption{Runtime increase of the proposed spectral normalization approach compared to \cite{miyato2018spectral}'s approach for CIFAR10 and various network architectures and training schemes. These ratios were obtained by running the experiments on one NVIDIA Titan Xp GPU for 40 epochs.}
\label{table:runtimesmiyato}
\begin{tabular}{lllrrr>{\bfseries}r}
\toprule
 Dataset & Architecture & Training & $\frac{\text{proposed SN runtime}}{\text{Miyato SN runtime}} $ \\
\midrule
 CIFAR10 &      AlexNet &      ERM &    1.11\\
 CIFAR10 &      AlexNet &      FGM $\ell_2$ &  1.06\\
 CIFAR10 &      AlexNet &      FGM $\ell_\infty$ &     1.11\\
 CIFAR10 &      AlexNet &      PGM $\ell_2$ &   1.01\\
 CIFAR10 &      AlexNet &      PGM $\ell_\infty$ &    1.11\\
 CIFAR10 &      AlexNet &      WRM &    1.02\\
 CIFAR10 &    Inception &      ERM &    0.98 \\
 CIFAR10 &    Inception &      PGM $\ell_2$ & 1.04  \\
 CIFAR10 &    Inception &      PGM $\ell_\infty$ &  1.06 \\
 CIFAR10 &    Inception &      WRM &  1.03 \\
\bottomrule
\end{tabular}
\end{table}

\subsection{Comparison of proposed method to weight decay, dropout, and batch normalization} \label{appendix:regcompare}
\begin{figure}[h]
\begin{center}
\includegraphics[scale=0.55]{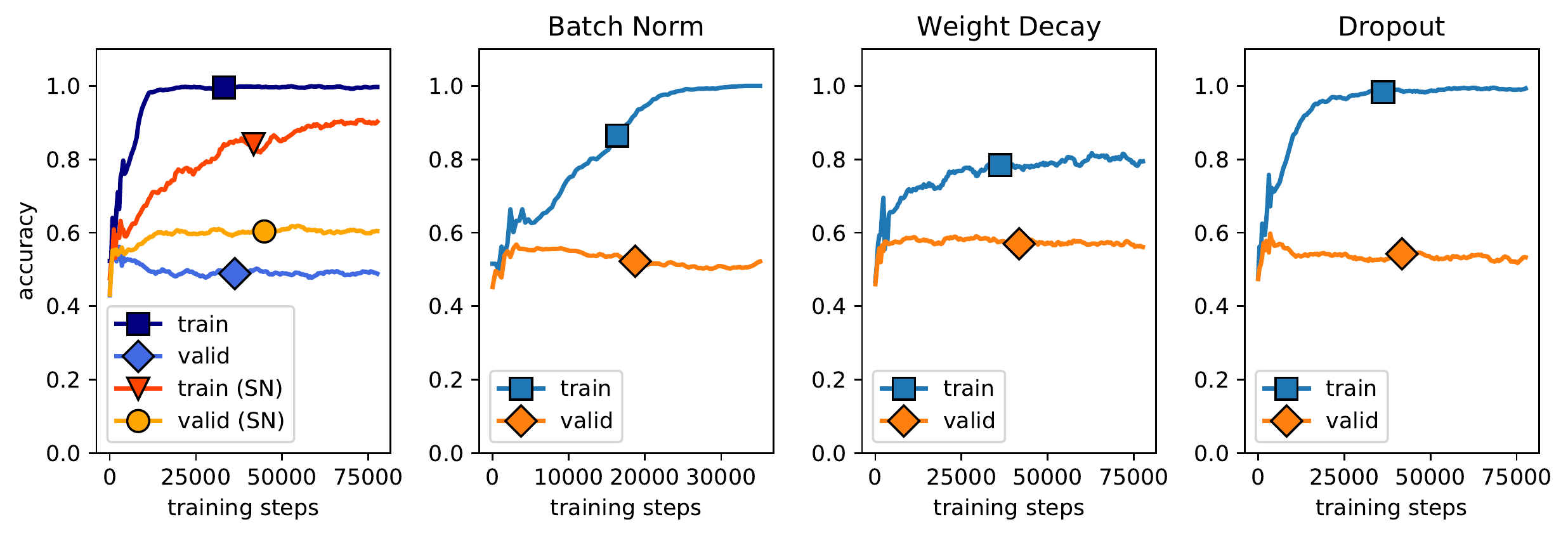}
\end{center}
\caption{Adversarial training performance with proposed SN versus batch normalization, weight decay, and dropout for AlexNet fit on CIFAR10 using PGM. The dropout rate was 0.8, and the amount of weight decay was 5e-4 for all weights. The leftmost plot is from Figure \ref{fig:train_alexnet} and compares final performance of no regularization (train accuracy 1.00, validation accuracy 0.48) to that of SN (train accuracy 0.92, validation accuracy 0.60). The final train and validation accuracies for batch normalization are 1.00 and 0.54; the final train and validation accuracies for weight decay are 0.84 and 0.55; and the final train and validation accuracies for dropout are 0.99 and 0.52.}
\label{fig:otherreg}
\end{figure}
 
\section{Spectral normalization of fully-connected layers} \label{snfc}

For fully-connected layers, we approximate the spectral norm of a given matrix $W$ using the approach described by \cite{miyato2018spectral}: the power iteration method. For each $W$, we randomly initialize a vector $\tilde{\mathbf{u}}$ and approximate both the left and right singular vectors by iterating the update rules
\begin{align*}
\tilde{\mathbf{v}} & \leftarrow W\tilde{\mathbf{u}} / \|W\tilde{\mathbf{u}}\|_2 \\
\tilde{\mathbf{u}} & \leftarrow W^T\tilde{\mathbf{v}} / \|W^T\tilde{\mathbf{v}}\|_2.
\end{align*}

The final singular value can be approximated with $ \sigma(W) \approx \tilde{\mathbf{v}}^T W \tilde{\mathbf{u}}$. Like Miyato et al., we exploit the fact that SGD only makes small updates to $W$ from training step to training step, reusing the same $\tilde{\mathbf{u}}$ and running only one iteration per step. Unlike Miyato et al., rather than enforcing $\sigma(W) = \beta$, we instead enforce the looser constraint $\sigma(W) \leq \beta$ as described by \cite{gouk2018regularisation}: 
\begin{align*}
W_{\sn} = W/\max(1, \sigma(W)/\beta),
\end{align*}

which we observe to result in faster training in practice for supervised learning tasks.

\section{Proofs}
\subsection{Proof of Theorem \ref{Thm 1}}
First let us quote the following two lemmas from \citep{neyshabur2017pac}.
\begin{lemma}[\cite{neyshabur2017pac}] \label{lemma Pac-Bayes}
Consider $\mathcal{F}_{\text{nn}} = \{f_{\mathbf{w}}:\, \mathbf{w}\in\mathcal{W} \}$ as the class of neural nets parameterized by $\mathbf{w}$ where each $f_{\mathbf{w}} $ maps input $\mathbf{x}\in\mathcal{X}$ to $\mathbb{R}^m$. Let $Q$ be a distribution on parameter vector chosen independently from the $n$ training samples. Then, for each $\eta >0$ with probability at least $1-\eta$ for any $\mathbf{w}$ and any random perturbation $\mathbf{u}$ satisfying $\Pr_{\mathbf{u}}\bigl( \max_{\mathbf{x}\in\mathcal{X}} \Vert f_{\mathbf{w+u}}(\mathbf{x}) - f_{\mathbf{w}}(\mathbf{x}) {\Vert}_{\infty} \le \frac{\gamma}{4} \bigr) \ge \frac{1}{2}$ we have
\begin{equation}
L_0 (f_\mathbf{w}) \le \widehat{L}_{\gamma} (f_\mathbf{w}) + 4 \sqrt{\frac{KL(P_{\mathbf{w}+\mathbf{u}} \Vert Q) + \log \frac{6n}{\eta}}{n-1}}.
\end{equation}
\end{lemma}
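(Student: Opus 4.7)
The plan is to combine the classical McAllester PAC-Bayes bound with a deterministic ``margin transfer'' argument that swaps the $0$-margin loss of the unperturbed predictor $f_\mathbf{w}$ for the $\gamma/2$-margin loss of the perturbed predictor $f_{\mathbf{w}+\mathbf{u}}$ on the high-probability event
\[
A \;:=\; \bigl\{\mathbf{u}:\, \max_{\mathbf{x}\in\mathcal{X}} \Vert f_{\mathbf{w}+\mathbf{u}}(\mathbf{x}) - f_{\mathbf{w}}(\mathbf{x}) \Vert_\infty \le \gamma/4\bigr\},
\]
which by hypothesis satisfies $\Pr(\mathbf{u}\in A) \ge 1/2$.

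First I would record the elementary margin-transfer fact: on $A$, each pairwise score gap $f(\mathbf{x})[y] - f(\mathbf{x})[j]$ changes by at most $\gamma/2$ between $f_\mathbf{w}$ and $f_{\mathbf{w}+\mathbf{u}}$, so for every pair $(\mathbf{x},y)$ one gets the pointwise indicator dominations $\mathbf{1}\{\text{$f_\mathbf{w}$ misclassifies $(\mathbf{x},y)$}\} \le \mathbf{1}\{\text{$f_{\mathbf{w}+\mathbf{u}}$ violates $\gamma/2$-margin}\}$ and $\mathbf{1}\{\text{$f_{\mathbf{w}+\mathbf{u}}$ violates $\gamma/2$-margin}\} \le \mathbf{1}\{\text{$f_\mathbf{w}$ violates $\gamma$-margin}\}$. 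Letting $\widetilde{P}$ be the conditional law of $\mathbf{w}+\mathbf{u}$ given $A$ (well-defined since $\Pr(A)\ge 1/2$), averaging these inequalities over $\widetilde{P}$ and over the data distribution yields the sandwich $L_0(f_\mathbf{w}) \le \mathbb{E}_{\mathbf{v}\sim\widetilde{P}}[L_{\gamma/2}(f_\mathbf{v})]$ and $\mathbb{E}_{\mathbf{v}\sim\widetilde{P}}[\widehat{L}_{\gamma/2}(f_\mathbf{v})] \le \widehat{L}_\gamma(f_\mathbf{w})$. (The first uses that $L_0(f_\mathbf{w})$ does not depend on $\mathbf{v}$, so it equals its own $\widetilde{P}$-average.)

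Second, I would apply the standard McAllester PAC-Bayes inequality for the bounded $[0,1]$-valued $\gamma/2$-margin loss with prior $Q$ and posterior $\widetilde{P}$: with probability at least $1-\eta$ over the training set,
\[
\mathbb{E}_{\widetilde{P}}\bigl[L_{\gamma/2}(f_\mathbf{v})\bigr] \;\le\; \mathbb{E}_{\widetilde{P}}\bigl[\widehat{L}_{\gamma/2}(f_\mathbf{v})\bigr] + 2\sqrt{\frac{2\bigl(KL(\widetilde{P}\Vert Q) + \log(2n/\eta)\bigr)}{n-1}},
\]
and then chain the two sandwich inequalities through this bound to eliminate $\widetilde{P}$ from everything except the $KL$ term.

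The central obstacle is the final step of relating $KL(\widetilde{P}\Vert Q)$ to the unconditional $KL(P_{\mathbf{w}+\mathbf{u}}\Vert Q)$ that appears in the statement. The identity $KL(\widetilde{P}\Vert Q) = \Pr(A)^{-1}\!\int_A p\log(p/q)\,d\mathbf{u} - \log\Pr(A)$ does not reduce directly to a multiple of the unconditional divergence, because the integrand $p\log(p/q)$ can be negative on $A^c$. My plan is to use the fact that $\Pr(A)\ge 1/2$ and argue, via a splitting-and-Jensen step (or, equivalently, a variant of PAC-Bayes that allows a ``restricted posterior'' of probability at least $1/2$), that $KL(\widetilde{P}\Vert Q) \le 2\,KL(P_{\mathbf{w}+\mathbf{u}}\Vert Q) + O(1)$. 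The factor $2$ here collapses with the $\sqrt{2}$ inside the PAC-Bayes term to produce the multiplicative constant $4$ outside the square root in the statement, while a small union-bound rescaling of $\eta$ together with the additive constants from the conditioning turns $\log(2n/\eta)$ into the stated $\log(6n/\eta)$, completing the proof.
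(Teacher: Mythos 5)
Your plan is correct and is essentially the proof of this lemma given in the cited source \citep{neyshabur2017pac}; the paper itself quotes the lemma without proof. In particular, the margin-transfer sandwich via the conditional (restricted) posterior $\widetilde{P}$ and the bound $KL(\widetilde{P}\Vert Q)\le \tfrac{1}{\Pr(A)}\int_A p\log(p/q)-\log\Pr(A)\le 2\,KL(P_{\mathbf{w}+\mathbf{u}}\Vert Q)+O(1)$ (using $\int_{A^c}p\log(p/q)\ge \Pr(A^c)\log\Pr(A^c)\ge -1/e$) are exactly the steps used there, and the constants collapse to the stated $4$ and $\log\frac{6n}{\eta}$ as you describe.
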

\begin{lemma}[\cite{neyshabur2017pac}] \label{lemma ERM perturbation}
Consider a $d$-layer neural net $f_{\mathbf{w}}$ with $1$-Lipschitz activation function $\sigma$ where $\sigma(0)=0$. Then for any norm-bounded input $\Vert\mathbf{x}{\Vert}_2\le B$ and weight perturbation $\mathbf{u}:\, \Vert\mathbf{U}_i{\Vert}_2\le \frac{1}{d}\Vert\mathbf{W}_i{\Vert}_2$, we have the following perturbation bound:
\begin{equation}
\Vert f_{\mathbf{w}+\mathbf{u}}(\mathbf{x}) - f_{\mathbf{w}}(\mathbf{x})  {\Vert}_2 \le eB\biggl(\prod_{i=1}^d \Vert\mathbf{W}_i{\Vert}_2 \biggr) \sum_{i=1}^d \frac{\Vert\mathbf{U}_i{\Vert}_2}{\Vert\mathbf{W}_i{\Vert}_2}.
\end{equation}
\end{lemma}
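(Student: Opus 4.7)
The plan is to prove this layer-wise perturbation bound by tracking the discrepancy between the intermediate representations of $f_{\mathbf{w}}$ and $f_{\mathbf{w}+\mathbf{u}}$ as we propagate the input through the network. Concretely, for $0 \le i \le d$ let $\phi_i(\mathbf{x})$ and $\phi'_i(\mathbf{x})$ denote the pre-activation (or input) at layer $i$ under the two parameter settings, so that $\phi_0 = \phi'_0 = \mathbf{x}$, $\phi_i = \sigma(\mathbf{W}_i \phi_{i-1})$ and $\phi'_i = \sigma((\mathbf{W}_i + \mathbf{U}_i)\phi'_{i-1})$ for the hidden layers, and the final layer is linear. Let $\Delta_i := \Vert \phi'_i - \phi_i \Vert_2$.

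First I would establish the forward norm bound on the unperturbed activations. Using $\sigma(0)=0$ and $\sigma$ being $1$-Lipschitz, $\Vert \phi_i \Vert_2 = \Vert \sigma(\mathbf{W}_i \phi_{i-1}) - \sigma(0) \Vert_2 \le \Vert \mathbf{W}_i \Vert_2 \Vert \phi_{i-1} \Vert_2$, so by induction $\Vert \phi_i \Vert_2 \le B \prod_{j=1}^{i} \Vert \mathbf{W}_j \Vert_2$. Next, I would set up the recurrence for $\Delta_i$ by adding and subtracting $\sigma(\mathbf{W}_i \phi'_{i-1})$ (or equivalently by applying the $1$-Lipschitz property of $\sigma$ directly) and then using the triangle inequality:
\begin{equation*}
\Delta_i \;\le\; \bigl\Vert (\mathbf{W}_i + \mathbf{U}_i)\phi'_{i-1} - \mathbf{W}_i \phi_{i-1} \bigr\Vert_2 \;\le\; (\Vert \mathbf{W}_i \Vert_2 + \Vert \mathbf{U}_i \Vert_2)\, \Delta_{i-1} + \Vert \mathbf{U}_i \Vert_2 \, \Vert \phi_{i-1} \Vert_2.
\end{equation*}
The same inequality holds at the last (linear) layer since the activation is identity there.

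Then I would plug in the hypothesis $\Vert \mathbf{U}_i \Vert_2 \le \Vert \mathbf{W}_i \Vert_2 / d$ and the forward bound, giving
\begin{equation*}
\Delta_i \;\le\; \bigl(1 + \tfrac{1}{d}\bigr) \Vert \mathbf{W}_i \Vert_2 \, \Delta_{i-1} \,+\, B\, \Vert \mathbf{U}_i \Vert_2 \prod_{j=1}^{i-1} \Vert \mathbf{W}_j \Vert_2,
\end{equation*}
and unroll the recurrence starting from $\Delta_0 = 0$. This yields
\begin{equation*}
\Delta_d \;\le\; B \sum_{i=1}^{d} \bigl(1 + \tfrac{1}{d}\bigr)^{d-i} \Vert \mathbf{U}_i \Vert_2 \,\frac{\prod_{j=1}^{d} \Vert \mathbf{W}_j \Vert_2}{\Vert \mathbf{W}_i \Vert_2}.
\end{equation*}
Finally, bounding $(1 + 1/d)^{d-i} \le (1+1/d)^d \le e$ factors out the desired constant and reproduces the stated inequality.

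The argument is essentially bookkeeping, so the main (mild) obstacle is making sure the perturbation assumption is strong enough to absorb the spurious factors: we need $(1+\Vert \mathbf{U}_i\Vert_2/\Vert \mathbf{W}_i \Vert_2)$ terms to accumulate into a single $e$, which is exactly why the hypothesis takes the form $\Vert \mathbf{U}_i \Vert_2 \le \Vert \mathbf{W}_i \Vert_2/d$ and why the classical bound $(1+1/d)^d \le e$ is essential. A minor care point is handling the last (linear) layer uniformly with the hidden layers, which is immediate because $\sigma$ is only used via its $1$-Lipschitz property and $\sigma(0)=0$, both of which the identity satisfies.
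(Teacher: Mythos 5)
Your proof is correct and is essentially the standard argument: the paper itself does not reprove this lemma but imports it from \cite{neyshabur2017pac}, whose proof is exactly this induction on the layer-wise discrepancy $\Delta_i$, with the recurrence $\Delta_i \le (1+\frac{1}{d})\Vert\mathbf{W}_i\Vert_2\Delta_{i-1} + \Vert\mathbf{U}_i\Vert_2\Vert\phi_{i-1}\Vert_2$ unrolled and the accumulated factors absorbed via $(1+1/d)^d\le e$. The same inductive bookkeeping is also what the paper's own proof of Lemma \ref{Lemma fgm perturbation} mimics for the gradient map, so your approach matches the intended one.
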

To prove Theorem \ref{Thm 1}, consider $f_{\widetilde{\mathbf{w}}}$ with weights $\widetilde{\mathbf{w}}$. Since $(1+\frac{1}{d})^d \le e$ and $\frac{1}{e}\le (1-\frac{1}{d})^{d-1}$, for any weight vector $\mathbf{w}$ such that $\bigl|  \Vert\mathbf{W}_i {\Vert}_2 - \Vert\widetilde{\mathbf{W}}_i {\Vert}_2 \bigr| \le \frac{1}{d} \Vert\widetilde{\mathbf{W}}_i {\Vert}_2$ for every $i$ we have:
\begin{equation}
(1/e)^{\frac{d}{d-1}} \prod_{i=1}^d \Vert\widetilde{\mathbf{W}}_i {\Vert}_2 \le \prod_{i=1}^d \Vert\mathbf{W}_i {\Vert}_2 \le e \prod_{i=1}^d \Vert\widetilde{\mathbf{W}}_i {\Vert}_2.
\end{equation}
We apply Lemma \ref{lemma Pac-Bayes}, choosing $Q$ to be a zero-mean multivariate Gaussian distribution with diagonal covariance matrix, where each entry of the $i$th layer $\mathbf{U}_i$ has standard deviation $\xi_i = \frac{\Vert\widetilde{\mathbf{W}}_i{\Vert}_2}{\beta_{\widetilde{\mathbf{w}}}} \xi$ with $\xi$ chosen later in the proof. Note that $\beta_{\mathbf{w}}$ defined earlier in the theorem is the geometric average of spectral norms across all layers. Then for the $i$th layer's random perturbation vector $\mathbf{u}_i \sim \mathcal{N}(0,\xi_i^2 I)$, we get the following bound from \citep{tropp2012user} with $h$ representing the width of the $i$th hidden layer:
\begin{equation}
\Pr \bigl(\beta_{\widetilde{\mathbf{w}}}\frac{\Vert\mathbf{U}_i{\Vert}_2}{\Vert\widetilde{\mathbf{W}}_i{\Vert}_2} > t\bigr) \le 2h \exp(-\frac{t^2}{2h\xi^2}).
\end{equation}
We now use a union bound over all layers for a maximum union probability of $1/2$,  which implies the normalized $\beta_{\widetilde{\mathbf{w}}}\frac{\Vert\mathbf{U}_i{\Vert}_2}{\Vert\widetilde{\mathbf{W}}_i{\Vert}_2}$ for each layer is upper-bounded by $\xi \sqrt{2h\log(4hd)}$. Then for any $\mathbf{w}$ satisfying $\bigl|  \Vert\mathbf{W}_i {\Vert}_2 - \Vert\widetilde{\mathbf{W}}_i {\Vert}_2 \bigr| \le \frac{1}{d} \Vert\widetilde{\mathbf{W}}_i {\Vert}_2$ for all $i$'s
\begin{align}
\max_{\Vert\mathbf{x}{\Vert}_2\le B} \Vert f_{\mathbf{w}+\mathbf{u}}(\mathbf{x}) - f_{\mathbf{w}}(\mathbf{x})  {\Vert}_2 & \le eB\biggl(\prod_{i=1}^d \Vert\mathbf{W}_i{\Vert}_2 \biggr) \sum_{i=1}^d \frac{\Vert\mathbf{U}_i{\Vert}_2}{\Vert\mathbf{W}_i{\Vert}_2} \nonumber \\
&\stackrel{(a)}{\le} e^2 B\biggl(\prod_{i=1}^d \Vert\widetilde{\mathbf{W}}_i{\Vert}_2 \biggr) \sum_{i=1}^d \frac{\Vert\mathbf{U}_i{\Vert}_2}{\Vert\widetilde{\mathbf{W}}_i{\Vert}_2} \nonumber  \\
& =  e^2 B {\beta}_{\widetilde{\mathbf{w}}}^{d-1}\sum_{i=1}^d  \beta_{\widetilde{\mathbf{w}}} \frac{\Vert\mathbf{U}_i{\Vert}_2}{\Vert\widetilde{\mathbf{W}}_i{\Vert}_2}  \nonumber \\
& \le e^2 dB  {\beta}_{\widetilde{\mathbf{w}}}^{d-1} \xi \sqrt{2h\log(4hd)}.
\end{align} 
Here (a) holds, since $\frac{1}{\Vert\mathbf{W}_j\Vert}\prod_{i=1}^d \Vert\mathbf{W}_i\Vert_2 \le \frac{e}{\Vert\widetilde{\mathbf{W}}_j\Vert}\prod_{i=1}^d \Vert\widetilde{\mathbf{W}}_i\Vert_2 $ is true for each $j$. Hence we choose $\xi = \frac{\gamma}{30 dB{\beta}_{\widetilde{\mathbf{w}}}^{d-1} \sqrt{h\log(4hd)} }$ for which the perturbation vector satisfies the assumptions of Lemma \ref{lemma ERM perturbation}. Then, we bound the KL-divergence term in Lemma \ref{lemma Pac-Bayes} as
\begin{align*}
KL(P_{\mathbf{w} + \mathbf{u}} \Vert Q) &\le \sum_{i=1}^d \frac{\Vert\mathbf{W}_i{\Vert}_F^2}{2\xi_i^2} \\
&= \frac{30^2 d^2 B^2 {\beta}_{\widetilde{\mathbf{w}}}^{2d}h\log(4hd)}{2\gamma^2}\sum_{i=1}^d \frac{\Vert\mathbf{W}_i{\Vert}_F^2}{\Vert\widetilde{\mathbf{W}}_i{\Vert}_2^2} \\
& \stackrel{(b)}{\le} \frac{30^2 e^2 d^2 B^2 \prod_{i=1}^d \Vert\mathbf{W}_i\Vert^2_2 h\log(4hd)}{2\gamma^2}\sum_{i=1}^d \frac{\Vert\mathbf{W}_i{\Vert}_F^2}{\Vert\mathbf{W}_i{\Vert}_2^2} \\
& = \mathcal{O}\biggl( d^2 B^2 h\log(hd) \frac{ \prod_{i=1}^d \Vert\mathbf{W}_i\Vert^2_2 }{\gamma^2}\sum_{i=1}^d \frac{\Vert\mathbf{W}_i{\Vert}_F^2}{\Vert\mathbf{W}_i{\Vert}_2^2} \biggr).
\end{align*}
Note that (b) holds, because we assume $\bigl|  \Vert\mathbf{W}_i {\Vert}_2 - \Vert\widetilde{\mathbf{W}}_i {\Vert}_2 \bigr| \le \frac{1}{d} \Vert\widetilde{\mathbf{W}}_i {\Vert}_2$ implying $\frac{1}{\Vert\widetilde{\mathbf{W}}_j\Vert}\prod_{i=1}^d \Vert\widetilde{\mathbf{W}}_i\Vert_2 \le (1-\frac{1}{d})^{-(d-1)}\frac{1}{\Vert\mathbf{W}_j\Vert}\prod_{i=1}^d \Vert\mathbf{W}_i\Vert_2 \le \frac{e}{\Vert\mathbf{W}_j\Vert}\prod_{i=1}^d \Vert\mathbf{W}_i\Vert_2 $ for each $j$. Therefore, Lemma \ref{lemma Pac-Bayes} implies with probability $1-\eta$ we have the following bound hold  for any $\mathbf{w}$ satisfying $\bigl|  \Vert\mathbf{W}_i {\Vert}_2 - \Vert\widetilde{\mathbf{W}}_i {\Vert}_2 \bigr| \le \frac{1}{d} \Vert\widetilde{\mathbf{W}}_i {\Vert}_2$ for all $i$'s,
\begin{equation}\label{thm1 proof}
L_0(f_{\mathbf{w}}) \le \widehat{L}_{\gamma}(f_{\mathbf{w}}) + \mathcal{O}\biggl( \sqrt{\frac{B^2d^2h\log(dh)\Phi^{\erm}(f_\mathbf{w}) + \log \frac{n }{\widetilde{\eta}}}{\gamma^2 n }} \biggr).
\end{equation}
Then, we can give an upper-bound over all the functions in $\mathcal{F}_{\text{nn}}$ by finding the covering number of the set of $\widetilde{\mathbf{w}}$'s where for each feasible $\mathbf{w}$ we have the mentioned condition satisfied for at least one of $\widetilde{\mathbf{w}}$'s. We only need to form the bound for $(\frac{\gamma}{2B})^{1/d}\le {\beta}_{\mathbf{w}} \le (\frac{\gamma\sqrt{n}}{2B})^{1/d}$ which can be covered using a cover of size $dn^{1/2d}$ as discussed in \citep{neyshabur2017pac}. Then, from the theorem's assumption we know each $\Vert\mathbf{W}_i{\Vert}_2$ will be in the interval $[\frac{1}{M} {\beta}_{\mathbf{w}} , M {\beta}_{\mathbf{w}} ]$ which we want to cover such that for any $\beta$ in the interval there exists a $\widetilde{\beta}$ satisfying $|\beta - \widetilde{\beta} |\le \widetilde{\beta}/d$. For this purpose we can use a cover of size $2\log_{1+1/d} M \le 2(d+1)\log M$,\footnote{Note that $\log\frac{1}{x} \ge 1 - x$ implying $\log\frac{1}{1-\frac{1}{d+1}} \ge \frac{1}{d+1}$ and hence $(\log(1+1/d))^{-1} \le d+1$.} which combined for all $i$'s gives a cover with size $\mathcal{O}( (d\log M)^d )$ whose logarithm is growing as $d\log(d\log M)$. This together with (\ref{thm1 proof}) completes the proof.
\subsection{Proof of Theorem \ref{Thm: fgm}}
We start by proving the following lemmas providing perturbation bound for FGM attacks.
\begin{lemma}\label{Lemma fgm perturbation}
Consider a $d$-layer neural net $f_{\mathbf{w}}$ with $1$-Lipschitz and $1$-smooth ($1$-Lipschitz derivative) activation $\sigma$ where $\sigma(0) = 0$. Let training loss $\ell: (\mathbb{R}^m,\mathcal{Y}) \rightarrow \mathbb{R}$ also be $1$-Lipschitz and $1$-smooth for any fixed label $y\in\mathcal{Y}$. Then, for any input $\mathbf{x}$, label $y$, and perturbation vector $\mathbf{u}$ satisfying $\forall i:\, \Vert\mathbf{U}_i\Vert_2\le \frac{1}{d} \Vert\mathbf{W}_i\Vert_2$  we have
\begin{align}
&\bigl\Vert {\nabla}_{\mathbf{x}} \ell\bigl( f_{\mathbf{w}+\mathbf{u}}(\mathbf{x}),y) - {\nabla}_{\mathbf{x}} \ell\bigl(f_\mathbf{w}(\mathbf{x}),y) \bigr{\Vert}_2 \\
\le\;  & e^2  \bigl( \prod_{i=1}^d \Vert\mathbf{W}_i{\Vert}_2\bigr) \sum_{i=1}^{d} \biggl[ \frac{\Vert\mathbf{U}_i{\Vert}_2}{\Vert\mathbf{W}_i {\Vert}_2}  + \Vert\mathbf{x}{\Vert}_2\bigl(\prod_{j=1}^{i} \Vert\mathbf{W}_j{\Vert}_2\bigr) \sum_{j=1}^{i} \frac{\Vert\mathbf{U}_j{\Vert}_2}{\Vert\mathbf{W}_j {\Vert}_2} \biggr]. \nonumber
\end{align}
\end{lemma}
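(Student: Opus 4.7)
The plan is to decompose the gradient via the chain rule. Writing $\nabla_{\mathbf{x}}\ell(f_{\mathbf{w}}(\mathbf{x}),y) = J_{\mathbf{w}}(\mathbf{x})^{\top} g_{\mathbf{w}}(\mathbf{x})$, where $J_{\mathbf{w}}(\mathbf{x})$ is the Jacobian of $f_{\mathbf{w}}$ with respect to $\mathbf{x}$ and $g_{\mathbf{w}}(\mathbf{x}) := \nabla_{\hat{y}}\ell(f_{\mathbf{w}}(\mathbf{x}),y)$, I would split
$$\bigl\Vert J_{\mathbf{w}+\mathbf{u}}^{\top} g_{\mathbf{w}+\mathbf{u}} - J_{\mathbf{w}}^{\top} g_{\mathbf{w}}\bigr\Vert_2 \;\le\; \bigl\Vert J_{\mathbf{w}+\mathbf{u}} - J_{\mathbf{w}}\bigr\Vert_2\,\Vert g_{\mathbf{w}+\mathbf{u}}\Vert_2 + \Vert J_{\mathbf{w}}\Vert_2\,\Vert g_{\mathbf{w}+\mathbf{u}} - g_{\mathbf{w}}\Vert_2.$$
The second summand is immediate: $\Vert g_{\mathbf{w}+\mathbf{u}}-g_{\mathbf{w}}\Vert_2 \le \Vert f_{\mathbf{w}+\mathbf{u}}(\mathbf{x})-f_{\mathbf{w}}(\mathbf{x})\Vert_2$ by $1$-smoothness of $\ell$, which Lemma~\ref{lemma ERM perturbation} bounds by $e\Vert\mathbf{x}\Vert_2(\prod_i\Vert\mathbf{W}_i\Vert_2)\sum_i \Vert\mathbf{U}_i\Vert_2/\Vert\mathbf{W}_i\Vert_2$; coupled with $\Vert J_{\mathbf{w}}\Vert_2\le \prod_i\Vert\mathbf{W}_i\Vert_2$ (since $\sigma$ is $1$-Lipschitz), this contributes exactly the type of term appearing in the stated bound.

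The main work is bounding $\Vert J_{\mathbf{w}+\mathbf{u}}-J_{\mathbf{w}}\Vert_2$. I would write $J_{\mathbf{w}}(\mathbf{x}) = \mathbf{W}_d D_{d-1}\mathbf{W}_{d-1}\cdots D_1\mathbf{W}_1$, where $D_i = \mathrm{diag}(\sigma'(\mathbf{z}_i))$ and $\mathbf{z}_i$ is the $i$th pre-activation, and similarly for $\mathbf{w}+\mathbf{u}$ with $D_i'$ and $\mathbf{W}_i+\mathbf{U}_i$. A standard telescoping identity converts the difference of two such products into a sum of $2d-1$ single-factor swaps, each sandwiched between a prefix of perturbed factors (operator norm $\le e\prod_j\Vert\mathbf{W}_j\Vert_2$ under $\Vert\mathbf{U}_i\Vert_2\le \Vert\mathbf{W}_i\Vert_2/d$) and a suffix of unperturbed ones. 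The weight-swap at layer $i$ contributes $\Vert\mathbf{U}_i\Vert_2$, and dividing by $\Vert\mathbf{W}_i\Vert_2$ produces the first family of terms in the target bound, with an $e$ factor from each of prefix and suffix, yielding $e^2\prod_j\Vert\mathbf{W}_j\Vert_2\sum_i\Vert\mathbf{U}_i\Vert_2/\Vert\mathbf{W}_i\Vert_2$.

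For the activation-matrix swap $D_i\to D_i'$, I would use that $\sigma'$ is $1$-Lipschitz (by $1$-smoothness of $\sigma$), so $\Vert D_i-D_i'\Vert_2 \le \Vert\mathbf{z}_i-\mathbf{z}_i'\Vert_\infty \le \Vert\mathbf{z}_i-\mathbf{z}_i'\Vert_2$. The pre-activations $\mathbf{z}_i,\mathbf{z}_i'$ are the outputs of the first $i-1$ hidden layers followed by $\mathbf{W}_i$ (or $\mathbf{W}_i+\mathbf{U}_i$), so applying Lemma~\ref{lemma ERM perturbation} to the truncated $i$-layer net gives $\Vert\mathbf{z}_i-\mathbf{z}_i'\Vert_2 \le e\Vert\mathbf{x}\Vert_2(\prod_{j=1}^i\Vert\mathbf{W}_j\Vert_2)\sum_{j=1}^i \Vert\mathbf{U}_j\Vert_2/\Vert\mathbf{W}_j\Vert_2$. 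Sandwiching with prefix and suffix operator-norm bounds and summing over $i$ produces precisely the $\Vert\mathbf{x}\Vert_2$-dependent family in the target bound, again with an $e^2$ prefactor.

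The only real obstacle is bookkeeping: organizing the telescoping so that every swap is wrapped by the correct prefix/suffix of operator norms, and then showing both the residual $\ell$-smoothness term from the chain-rule split and the $e$-factors from using the perturbed-weight bound $\prod_j(1+1/d)\le e$ all collapse into the clean $e^2$ constant claimed. Once this bookkeeping is in place the identification with the stated expression is mechanical.
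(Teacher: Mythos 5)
Your proposal is correct and follows essentially the same route as the paper: the same chain-rule split into a Jacobian-difference term (handled via $\Vert\nabla\ell\Vert_2\le 1$) and a loss-gradient-difference term (handled via $1$-smoothness of $\ell$ plus Lemma~\ref{lemma ERM perturbation}), with the Jacobian difference controlled by swapping one factor of $\mathbf{W}_d D_{d-1}\cdots D_1\mathbf{W}_1$ at a time, bounding activation swaps through $\Vert D_i-D_i'\Vert_2\le\Vert\mathbf{z}_i-\mathbf{z}_i'\Vert_2$ and Lemma~\ref{lemma ERM perturbation} on the truncated net. The paper organizes this last step as an induction on the layer index (tracking $\Delta_k$ and the $(1+1/d)^k\le e$ accumulation) rather than an explicit $2d-1$-term telescoping sum, but the two are the same argument unrolled differently.
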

\begin{proof}
Since for a fixed $y$ $\ell$ satisfies the same Lipschitzness and smoothness properties as $\sigma$, then $\Vert \nabla_\mathbf{z}\ell(\mathbf{z} ,y) \Vert_2\le 1$ and applying the chain rule implies: 
\begin{align}
&\bigl\Vert {\nabla}_{\mathbf{x}} \ell\bigl( f_{\mathbf{w}+\mathbf{u}}(\mathbf{x}),y) - {\nabla}_{\mathbf{x}} \ell\bigl(f_\mathbf{w}(\mathbf{x}),y) \bigr{\Vert}_2 \nonumber \\
  =\, &\bigl\Vert \bigl({\nabla}_{\mathbf{x}} f_{\mathbf{w}+\mathbf{u}}(\mathbf{x}) \bigr) (\nabla\ell) \bigl( f_{\mathbf{w}+\mathbf{u}}(\mathbf{x}),y) - \bigl({\nabla}_{\mathbf{x}}f_\mathbf{w}(\mathbf{x})\bigr) (\nabla\ell)\bigl(f_\mathbf{w}(\mathbf{x}),y) \bigr{\Vert}_2 \nonumber \\
\le\,  & \bigl\Vert \bigl({\nabla}_{\mathbf{x}} f_{\mathbf{w}+\mathbf{u}}(\mathbf{x}) \bigr) (\nabla\ell) \bigl( f_{\mathbf{w}+\mathbf{u}}(\mathbf{x}),y) - \bigl({\nabla}_{\mathbf{x}}f_\mathbf{w}(\mathbf{x})\bigr) (\nabla\ell)\bigl(f_{\mathbf{w}+\mathbf{u}}(\mathbf{x}),y) \bigr{\Vert}_2 \nonumber \\
&\: + \bigl\Vert \bigl({\nabla}_{\mathbf{x}} f_{\mathbf{w}}(\mathbf{x}) \bigr) (\nabla\ell) \bigl( f_{\mathbf{w}+\mathbf{u}}(\mathbf{x}),y) - \bigl({\nabla}_{\mathbf{x}}f_\mathbf{w}(\mathbf{x})\bigr) (\nabla\ell)\bigl(f_\mathbf{w}(\mathbf{x}),y) \bigr{\Vert}_2 \nonumber \\
\le\, &  \, \bigl\Vert {\nabla}_{\mathbf{x}} f_{\mathbf{w}+\mathbf{u}}(\mathbf{x})  - {\nabla}_{\mathbf{x}}f_\mathbf{w}(\mathbf{x}) \bigr{\Vert}_2 + \bigl(\prod_{i=1}^d \Vert\mathbf{W}_i{\Vert}_2 \bigr) \bigl\Vert (\nabla\ell) \bigl( f_{\mathbf{w}+\mathbf{u}}(\mathbf{x}),y) - (\nabla\ell)\bigl(f_\mathbf{w}(\mathbf{x}),y) \bigr{\Vert}_2 \nonumber \\
\le\, &  \, \bigl\Vert {\nabla}_{\mathbf{x}} f_{\mathbf{w}+\mathbf{u}}(\mathbf{x})  - {\nabla}_{\mathbf{x}}f_\mathbf{w}(\mathbf{x})\bigr{\Vert}_2   +  \bigl(\prod_{i=1}^d \Vert\mathbf{W}_i{\Vert}_2 \bigr) \bigl\Vert f_{\mathbf{w}+\mathbf{u}}(\mathbf{x}) -f_\mathbf{w}(\mathbf{x}) \bigr{\Vert}_2 \nonumber
\\
\le\, & \bigl\Vert {\nabla}_{\mathbf{x}} f_{\mathbf{w}+\mathbf{u}}(\mathbf{x})  - {\nabla}_{\mathbf{x}}f_\mathbf{w}(\mathbf{x}) \bigr{\Vert}_2    +  e\Vert\mathbf{x}\Vert_2 \bigl( \prod_{i=1}^d \Vert\mathbf{W}_i{\Vert}_2\bigr)^2 \sum_{i=1}^d \frac{ \Vert\mathbf{U}_i{\Vert}_2}{ \Vert\mathbf{W}_i{\Vert}_2}. \label{lemma3 align1}
\end{align}
The above result is a conclusion of Lemma \ref{lemma ERM perturbation} and the lemma's assumptions implying $\bigl\Vert{\nabla}_{\mathbf{x}} f_{\mathbf{w}}(\mathbf{x})\bigr{\Vert}_2 \le \prod_{i=1}^d \Vert\mathbf{W}_i{\Vert}_2$ for every $\mathbf{x}$. Now, we define $\Delta_k = \bigl\Vert {\nabla}_{\mathbf{x}} f^{(k)}_{\mathbf{w}+\mathbf{u}}(\mathbf{x}) - {\nabla}_{\mathbf{x}} f^{(k)}_\mathbf{w}(\mathbf{x}) \bigr{\Vert}_2$ where $f^{(k)}_\mathbf{w}(\mathbf{x}) := \mathbf{W}_k \sigma (\mathbf{W}_{k-1}\cdots \sigma(\mathbf{W}_1\mathbf{x}))\cdots)$ denotes the DNN's output at layer $k$. With (\ref{lemma3 align1}) in mind, we complete this lemma's proof by showing the following inequality via induction:
\begin{equation} \label{lemma 3 induction}
\Delta_{k} \le e (1+\frac{1}{d})^{k}  \bigl( \prod_{i=1}^{k} \Vert\mathbf{W}_i{\Vert}_2\bigr)  \sum_{i=1}^{k} \biggl[ \frac{\Vert\mathbf{U}_i{\Vert}_2}{\Vert\mathbf{W}_i {\Vert}_2} + \Vert\mathbf{x}{\Vert}_2 \bigl(\prod_{j=1}^{i-1} \Vert\mathbf{W}_j{\Vert}_2\bigr) \sum_{j=1}^{i-1} \frac{\Vert\mathbf{U}_j{\Vert}_2}{\Vert\mathbf{W}_j {\Vert}_2} \biggr].
\end{equation}  
The above equation will prove the lemma because for $k\le d$ we have $(1+\frac{1}{d})^{k} \le (1+\frac{1}{d})^{d} \le e$. For $k=0$, $\Delta_0 = 0$ since $f^{(0)}_\mathbf{w}(\mathbf{x}) =\mathbf{x}$ and does not change with $\mathbf{w}$. Given that (\ref{lemma 3 induction}) holds for $k$ we have
\begin{align*}
\Delta_{k+1} &= \bigl\Vert {\nabla}_{\mathbf{x}} f^{(k+1)}_{\mathbf{w}+\mathbf{u}}(\mathbf{x}) - {\nabla}_{\mathbf{x}} f^{(k+1)}_\mathbf{w}(\mathbf{x}) \bigr{\Vert}_2 \\
& =  \bigl\Vert {\nabla}_{\mathbf{x}} (\mathbf{W}_{k+1}+\mathbf{U}_{k+1})\sigma( f^{(k)}_{\mathbf{w}+\mathbf{u}}(\mathbf{x}) ) - {\nabla}_{\mathbf{x}} \mathbf{W}_{k+1}\sigma( f^{(k)}_\mathbf{w}(\mathbf{x}) ) \bigr{\Vert}_2  \\
& =  \bigl\Vert   {\nabla}_{\mathbf{x}} f^{(k)}_{\mathbf{w}+\mathbf{u}}(\mathbf{x}) \sigma '( f^{(k)}_{\mathbf{w}+\mathbf{u}}(\mathbf{x}) ) (\mathbf{W}_{k+1}+\mathbf{U}_{k+1})^T - {\nabla}_{\mathbf{x}} f^{(k)}_{\mathbf{w}}(\mathbf{x}) \sigma '( f^{(k)}_{\mathbf{w}}(\mathbf{x}) ) \mathbf{W}_{k+1}^T \bigr{\Vert}_2 \\
& \le \bigl\Vert {\nabla}_{\mathbf{x}} f^{(k)}_{\mathbf{w}+\mathbf{u}}(\mathbf{x})  \sigma '( f^{(k)}_{\mathbf{w}+\mathbf{u}}(\mathbf{x}) )  (\mathbf{W}_{k+1}+\mathbf{U}_{k+1})^T -  {\nabla}_{\mathbf{x}} f^{(k)}_{\mathbf{w}+\mathbf{u}}(\mathbf{x}) \sigma '( f^{(k)}_{\mathbf{w}}(\mathbf{x}) )  (\mathbf{W}_{k+1}+\mathbf{U}_{k+1})^T \bigr{\Vert}_2 \\
& \;\; + \bigl\Vert   {\nabla}_{\mathbf{x}} f^{(k)}_{\mathbf{w}+\mathbf{u}}(\mathbf{x})   \sigma '( f^{(k)}_{\mathbf{w}}(\mathbf{x}) ) (\mathbf{W}_{k+1}+\mathbf{U}_{k+1})^T -  {\nabla}_{\mathbf{x}} f^{(k)}_{\mathbf{w}+\mathbf{u}}(\mathbf{x})   \sigma '( f^{(k)}_{\mathbf{w}}(\mathbf{x}) ) \mathbf{W}_{k+1}^T \bigr{\Vert}_2 \\
&\;\; + \bigl\Vert {\nabla}_{\mathbf{x}} f^{(k)}_{\mathbf{w}+\mathbf{u}}(\mathbf{x})  \sigma '( f^{(k)}_{\mathbf{w}}(\mathbf{x}) )   \mathbf{W}_{k+1}^T - {\nabla}_{\mathbf{x}} f^{(k)}_{\mathbf{w}}(\mathbf{x})  \sigma '( f^{(k)}_{\mathbf{w}}(\mathbf{x}) )  \mathbf{W}_{k+1}^T \bigr{\Vert}_2  \\
&\le (1+\frac{1}{d})\bigl\Vert  \mathbf{W}_{k+1} {\bigr\Vert}_2   \Vert {\nabla}_{\mathbf{x}} f^{(k)}_{\mathbf{w}+\mathbf{u}}(\mathbf{x}) {\Vert}_2   \bigl\Vert  f^{(k)}_{\mathbf{w}+\mathbf{u}}(\mathbf{x}) -  f^{(k)}_{\mathbf{w}}(\mathbf{x}){\bigr\Vert}_2  \\
&\;\; + \bigl\Vert  \mathbf{U}_{k+1} {\bigr\Vert}_2 \Vert {\nabla}_{\mathbf{x}} f^{(k)}_{\mathbf{w}+\mathbf{u}}(\mathbf{x}) {\Vert}_2 \Vert \sigma '( f^{(k)}_{\mathbf{w}}(\mathbf{x}) ){\Vert}_2 + \bigl\Vert  \mathbf{W}_{k+1} {\bigr\Vert}_2 \Vert \sigma '( f^{(k)}_{\mathbf{w}}(\mathbf{x}) ){\Vert}_2 \Delta_k \\
& \le (1+\frac{1}{d})^{k+1}  \bigl( \prod_{i=1}^{k+1} \Vert \mathbf{W}_i{\Vert}_2\bigr) \biggl( e\Vert\mathbf{x}{\Vert}_2\bigl( \prod_{i=1}^{k} \Vert \mathbf{W}_i{\Vert}_2 \bigr) \sum_{i=1}^k \frac{\Vert \mathbf{U}_i {\Vert}_2}{\Vert \mathbf{W}_i {\Vert}_2}\biggr) \\
&\;\; + (1+\frac{1}{d})^k  \bigl( \prod_{i=1}^{k+1} \Vert \mathbf{W}_i{\Vert}_2\bigr)   \frac{\Vert \mathbf{U}_{k+1} {\Vert}_2}{\Vert \mathbf{W}_{k+1} {\Vert}_2} +  \bigl\Vert  \mathbf{W}_{k+1} {\bigr\Vert}_2  \Delta_k \\
&\le  e(1+\frac{1}{d})^{k+1} \bigl( \prod_{i=1}^{k+1} \Vert \mathbf{W}_i{\Vert}_2\bigr) \biggl(\frac{\Vert \mathbf{U}_{k+1} {\Vert}_2}{\Vert \mathbf{W}_{k+1} {\Vert}_2} + \Vert\mathbf{x}{\Vert}_2  \bigl( \prod_{i=1}^{k} \Vert \mathbf{W}_i{\Vert}_2 \bigr) \sum_{i=1}^{k} \frac{\Vert \mathbf{U}_i {\Vert}_2}{\Vert \mathbf{W}_i {\Vert}_2}\biggr) +  \bigl\Vert  \mathbf{W}_{k+1} {\bigr\Vert}_2  \Delta_k \\
&\le  e (1+\frac{1}{d})^{k+1}  \bigl( \prod_{i=1}^{k+1} \Vert\mathbf{W}_i{\Vert}_2\bigr)  \sum_{i=1}^{k+1} \biggl[ \frac{\Vert\mathbf{U}_i{\Vert}_2}{\Vert\mathbf{W}_i {\Vert}_2} + \Vert\mathbf{x}{\Vert}_2 \bigl(\prod_{j=1}^{i-1} \Vert\mathbf{W}_j{\Vert}_2\bigr) \sum_{j=1}^{i-1} \frac{\Vert\mathbf{U}_j{\Vert}_2}{\Vert\mathbf{W}_j {\Vert}_2} \biggr].
\end{align*}
Therefore, combining (\ref{lemma3 align1}) and  (\ref{lemma 3 induction}) the lemma's proof is complete 
\end{proof}
Before presenting the perturbation bound for FGM attacks, we first prove the following simple lemma.
\begin{lemma}\label{lemma normalized norm diff}
Consider vectors $\mathbf{z}_1,\mathbf{z}_2$ and norm function $\Vert\cdot \Vert$. If $\max \{ \Vert \mathbf{z}_1 \Vert , \Vert \mathbf{z}_2 \Vert \} \ge \kappa$, then
\begin{equation}
\bigl\Vert \frac{\epsilon}{\Vert \mathbf{z}_1 \Vert} \mathbf{z}_1 - \frac{\epsilon}{\Vert \mathbf{z}_2 \Vert} \mathbf{z}_2 \bigr\Vert \le \frac{2\epsilon}{\kappa}\Vert \mathbf{z}_1 - \mathbf{z}_2 \Vert .
\end{equation}
\end{lemma}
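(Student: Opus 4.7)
The plan is a short direct calculation using the reverse triangle inequality. Without loss of generality I would assume $\Vert \mathbf{z}_1 \Vert \ge \Vert \mathbf{z}_2 \Vert$, so that the hypothesis gives $\Vert \mathbf{z}_1 \Vert \ge \kappa$. If either of $\mathbf{z}_1,\mathbf{z}_2$ is the zero vector, I would interpret the corresponding normalized term as $0$; the argument below still goes through with the convention $0/\Vert\mathbf{0}\Vert := 0$, and in fact is only needed for $\mathbf{z}_2 \ne \mathbf{0}$ since $\mathbf{z}_1 \ne \mathbf{0}$ is forced by $\Vert \mathbf{z}_1 \Vert \ge \kappa > 0$.

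The key step is the telescoping decomposition
\begin{align*}
\frac{\epsilon\,\mathbf{z}_1}{\Vert \mathbf{z}_1 \Vert} - \frac{\epsilon\,\mathbf{z}_2}{\Vert \mathbf{z}_2 \Vert}
\;=\; \frac{\epsilon}{\Vert \mathbf{z}_1 \Vert}\bigl(\mathbf{z}_1 - \mathbf{z}_2\bigr)
\;+\; \epsilon\left(\frac{1}{\Vert \mathbf{z}_1 \Vert} - \frac{1}{\Vert \mathbf{z}_2 \Vert}\right)\mathbf{z}_2.
\end{align*}
Applying the triangle inequality and simplifying the second term via $\bigl|\tfrac{1}{\Vert\mathbf{z}_1\Vert}-\tfrac{1}{\Vert\mathbf{z}_2\Vert}\bigr|\,\Vert\mathbf{z}_2\Vert = \bigl|\Vert\mathbf{z}_1\Vert-\Vert\mathbf{z}_2\Vert\bigr|/\Vert\mathbf{z}_1\Vert$, I would then invoke the reverse triangle inequality $\bigl|\Vert\mathbf{z}_1\Vert-\Vert\mathbf{z}_2\Vert\bigr| \le \Vert \mathbf{z}_1 - \mathbf{z}_2 \Vert$ to bound both contributions by $\epsilon\Vert\mathbf{z}_1-\mathbf{z}_2\Vert/\Vert\mathbf{z}_1\Vert$. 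Summing and using $\Vert\mathbf{z}_1\Vert \ge \kappa$ yields the claimed $2\epsilon/\kappa$ constant.

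There is no real obstacle here; the only mild subtlety is that the hypothesis only controls the \emph{larger} of the two norms, so it is essential to perform the telescoping step by dividing through the side whose norm is bounded below (i.e., $\Vert\mathbf{z}_1\Vert$ in the convention above). A symmetric split that divides by $\Vert\mathbf{z}_2\Vert$ would not yield the $\kappa$-denominator and would fail when $\Vert\mathbf{z}_2\Vert$ is tiny.
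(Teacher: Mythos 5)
Your argument is correct and is essentially identical to the paper's own proof: both split the difference as $\frac{\epsilon}{\Vert\mathbf{z}_1\Vert}(\mathbf{z}_1-\mathbf{z}_2)$ plus a term whose norm equals $\epsilon\bigl|\Vert\mathbf{z}_1\Vert-\Vert\mathbf{z}_2\Vert\bigr|/\Vert\mathbf{z}_1\Vert$, apply the reverse triangle inequality, and use the WLOG assumption $\Vert\mathbf{z}_1\Vert\ge\kappa$. Your added remark about dividing by the larger norm (and the zero-vector convention) is a small extra care, not a different route.
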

\begin{proof}
Without loss of generality suppose $\Vert \mathbf{z}_2 \Vert\le \Vert \mathbf{z}_1 \Vert$ and therefore $\kappa\le \Vert \mathbf{z}_1 \Vert$. Then,
\begin{align*}
\bigl\Vert \frac{\epsilon}{\Vert \mathbf{z}_1 \Vert} \mathbf{z}_1 - \frac{\epsilon}{\Vert \mathbf{z}_2 \Vert} \mathbf{z}_2 \bigr\Vert &= \epsilon \, \bigl\Vert \frac{1}{\Vert \mathbf{z}_1 \Vert} (\mathbf{z}_1-\mathbf{z}_2) - \frac{\Vert \mathbf{z}_1 \Vert - \Vert \mathbf{z}_2 \Vert}{\Vert \mathbf{z}_1 \Vert} \frac{1}{\Vert \mathbf{z}_2 \Vert} \mathbf{z}_2 \bigr\Vert \\
& \le \frac{\epsilon}{\Vert \mathbf{z}_1 \Vert} \Vert \mathbf{z}_1 - \mathbf{z}_2\Vert  +\frac{\epsilon }{\Vert \mathbf{z}_1 \Vert} \bigl| \Vert \mathbf{z}_1 \Vert - \Vert \mathbf{z}_2 \Vert\bigr| \\
& \le \frac{2\epsilon}{\Vert \mathbf{z}_1 \Vert} \Vert \mathbf{z}_1 - \mathbf{z}_2\Vert  \\
& \le \frac{2\epsilon}{\kappa} \Vert \mathbf{z}_1 - \mathbf{z}_2\Vert.
\end{align*}
\end{proof}
\begin{lemma} \label{FGM pertubation}
Consider a $d$-layer neural network function $f_\mathbf{w}$ with $1$-Lipschitz, $1$-smooth activation $\sigma$ where $\sigma (0) = 0$. Consider FGM attacks with noise power $\epsilon$ according to Euclidean norm $||\cdot ||_2$. 
% and define FGM-perturbed neural net function $f^{\fgm}_{\mathbf{w}}(\mathbf{x}) = f_{\mathbf{w}} \bigl(\mathbf{x} + \frac{\epsilon}{ \Vert \nabla_{\mathbf{x}} \ell(f_{\mathbf{w}}(\mathbf{x}))\Vert_2} \nabla_{\mathbf{x}} \ell(f_{\mathbf{w}}(\mathbf{x})) \bigr)$.
Suppose $\kappa \le \Vert \nabla_{\mathbf{x}} \ell(f_{\mathbf{w}}(\mathbf{x}),y)\Vert_2$ holds over the $\epsilon$-ball around the support set $\mathcal{X}$. Then, for any norm-bounded perturbation vector $\mathbf{u}$ such that $\Vert \mathbf{U}_i \Vert_2 \le \frac{1}{d} \Vert \mathbf{W}_i \Vert_2.\, \forall i$, we have
\begin{align*}
&\Vert \delta^{\fgm}_{\mathbf{w}+\mathbf{u}}(\mathbf{x}) - \delta^{\fgm}_{\mathbf{w}}(\mathbf{x}) \Vert_2  \le
\frac{2e^2 \epsilon}{\kappa} \bigl( \prod_{i=1}^d \Vert\mathbf{W}_i{\Vert}_2\bigr) \sum_{i=1}^{d} \biggl[ \frac{\Vert\mathbf{U}_i{\Vert}_2}{\Vert\mathbf{W}_i {\Vert}_2}  + \Vert\mathbf{x}{\Vert}_2\bigl(\prod_{j=1}^{i} \Vert\mathbf{W}_j{\Vert}_2\bigr) \sum_{j=1}^{i} \frac{\Vert\mathbf{U}_j{\Vert}_2}{\Vert\mathbf{W}_j {\Vert}_2} \biggr].
%\le eB\bigl( \prod_{i=1}^d \Vert\mathbf{W}_i{\Vert}_2\bigr)  \sum_{i=1}^{d} \frac{\Vert\mathbf{U}_i{\Vert}_2}{\Vert\mathbf{W}_i {\Vert}_2} \\
%& \;\;\;+ 2e^2 \frac{\epsilon}{\kappa}  \bigl( \prod_{i=1}^d \Vert\mathbf{W}_i{\Vert}^2_2\bigr) \sum_{i=1}^{d} \biggl[ \frac{\Vert\mathbf{U}_i{\Vert}_2}{\Vert\mathbf{W}_i {\Vert}_2}  + B\bigl(\prod_{j=1}^{i} \Vert\mathbf{W}_j{\Vert}_2\bigr) \sum_{j=1}^{i} \frac{\Vert\mathbf{U}_j{\Vert}_2}{\Vert\mathbf{W}_j {\Vert}_2} \biggr]. \nonumber
\end{align*}
\end{lemma}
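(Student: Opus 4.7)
The plan is to exploit the closed form of the FGM attack under the Euclidean norm. Maximizing a linear functional $\boldsymbol{\delta}^T \mathbf{z}$ over the $\ell_2$ ball of radius $\epsilon$ yields $\delta^{\fgm}_{\mathbf{w}}(\mathbf{x}) = \epsilon \, \nabla_\mathbf{x}\ell(f_\mathbf{w}(\mathbf{x}),y)/\Vert \nabla_\mathbf{x}\ell(f_\mathbf{w}(\mathbf{x}),y)\Vert_2$, with the analogous expression holding for the perturbed weights $\mathbf{w}+\mathbf{u}$. Thus the quantity to bound is exactly the difference of two $\epsilon$-normalized vectors.

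First I would substitute the closed forms into $\Vert \delta^{\fgm}_{\mathbf{w}+\mathbf{u}}(\mathbf{x}) - \delta^{\fgm}_{\mathbf{w}}(\mathbf{x})\Vert_2$ and apply Lemma \ref{lemma normalized norm diff} with $\mathbf{z}_1 := \nabla_\mathbf{x}\ell(f_{\mathbf{w}+\mathbf{u}}(\mathbf{x}),y)$ and $\mathbf{z}_2 := \nabla_\mathbf{x}\ell(f_\mathbf{w}(\mathbf{x}),y)$. The hypothesis $\kappa \le \Vert \nabla_\mathbf{x}\ell(f_\mathbf{w}(\mathbf{x}),y)\Vert_2$ for every $\mathbf{x}\in\mathcal{B}_{\epsilon,\Vert\cdot\Vert_2}(\mathcal{X})$ supplies the required lower bound on $\max\{\Vert\mathbf{z}_1\Vert_2,\Vert\mathbf{z}_2\Vert_2\}$ (via $\mathbf{z}_2$), so the lemma yields
\[
\Vert \delta^{\fgm}_{\mathbf{w}+\mathbf{u}}(\mathbf{x}) - \delta^{\fgm}_{\mathbf{w}}(\mathbf{x})\Vert_2 \le \frac{2\epsilon}{\kappa}\,\Vert \nabla_\mathbf{x}\ell(f_{\mathbf{w}+\mathbf{u}}(\mathbf{x}),y) - \nabla_\mathbf{x}\ell(f_\mathbf{w}(\mathbf{x}),y)\Vert_2.
\]

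Second, I would invoke Lemma \ref{Lemma fgm perturbation} directly to bound the right-hand side, since the hypotheses on $\sigma$, on $\ell$, and the assumption $\Vert \mathbf{U}_i\Vert_2 \le \tfrac{1}{d}\Vert \mathbf{W}_i\Vert_2$ match exactly what that lemma requires. Multiplying the two bounds together gives the claimed $\tfrac{2e^2\epsilon}{\kappa}$ prefactor and the exact capacity expression appearing in the statement of the lemma.

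There is no real obstacle: both lemmas have already been established, so the proof is essentially a composition of (i) the closed form of the FGM perturbation, (ii) Lemma \ref{lemma normalized norm diff} to reduce differences of normalized vectors to differences of the underlying vectors at a cost of $2/\kappa$, and (iii) the gradient perturbation bound of Lemma \ref{Lemma fgm perturbation}. The only point worth verifying is that the lower bound $\kappa$ on the gradient norm is applied at a point in the $\epsilon$-neighborhood of $\mathcal{X}$, which is consistent with the statement's assumption that $\kappa \le \Vert\nabla_\mathbf{x}\ell\Vert_2$ holds uniformly on $\mathcal{B}_{\epsilon,\Vert\cdot\Vert_2}(\mathcal{X})$.
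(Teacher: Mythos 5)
Your proposal is correct and follows essentially the same route as the paper: the paper's own proof of this lemma is precisely the observation that the Euclidean-norm FGM perturbation is the $\epsilon$-normalized loss gradient, so the claim follows by composing Lemma \ref{lemma normalized norm diff} (contributing the factor $2\epsilon/\kappa$) with the gradient perturbation bound of Lemma \ref{Lemma fgm perturbation}. Your additional check that the $\kappa$ lower bound is supplied by $\mathbf{z}_2 = \nabla_{\mathbf{x}}\ell(f_{\mathbf{w}}(\mathbf{x}),y)$ is a correct and slightly more careful reading than the paper's one-line justification.
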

\begin{proof}
The FGM attack according to Euclidean norm is simply the DNN loss's gradient normalized to have $\epsilon$-Euclidean norm. The lemma is hence a direct result of combining Lemmas \ref{Lemma fgm perturbation} and \ref{lemma normalized norm diff}.
\end{proof}
To prove Theorem \ref{Thm: fgm}, we apply Lemma \ref{lemma Pac-Bayes} together with the result in Lemma \ref{FGM pertubation}. Similar to the proof for Theorem \ref{Thm 1}, given weights $\widetilde{\mathbf{w}}$ we consider a zero-mean multivariate Gaussian perturbation vector $\mathbf{u}$ with diagonal covariance matrix where each element in the $i$th layer $\mathbf{u}_i$ varies with the scaled standard deviation $\xi_i = \frac{\Vert \widetilde{\mathbf{W}}_i\Vert_2}{\beta_{\widetilde{\mathbf{w}}}} \xi$ with $\xi$ properly chosen later in the proof. Consider weights $\mathbf{w}$ for which 
\begin{equation} \label{fgm perturbation assumption}
\forall i:\: \bigl| \Vert \mathbf{W}_i \Vert_2 - \Vert {\widetilde{\mathbf{W}}}_i \Vert_2\bigr| \le \frac{1}{d} \Vert {\widetilde{\mathbf{W}}}_i \Vert_2.
\end{equation}
Since $\mathbf{u}_i \sim \mathcal{N}(0,\xi_i^2 I)$, \citep{tropp2012user} shows the following bound holds
\begin{equation}
\Pr \bigl(\beta_{\widetilde{\mathbf{w}}}\frac{\Vert\mathbf{U}_i{\Vert}_2}{\Vert\widetilde{\mathbf{W}}_i{\Vert}_2} > t\bigr) \le 2h \exp(-\frac{t^2}{2h\xi^2}).
\end{equation}
Then we apply a union bound over all layers for a maximum union probability of $1/2$ implying the normalized $\beta_{\widetilde{\mathbf{w}}}\frac{\Vert\mathbf{U}_i{\Vert}_2}{\Vert\widetilde{\mathbf{W}}_i{\Vert}_2}$ for each layer is upper-bounded by $\xi \sqrt{2h\log(4hd)}$. Now, if the assumptions of Lemma \ref{FGM pertubation} hold for perturbation vector $\mathbf{u}$ given the choice of $\xi$, for the FGM attack with noise power $\epsilon$ according to Euclidean norm $\Vert\cdot\Vert_2$ we have
\begin{align}
&\Vert f_{\mathbf{w}+\mathbf{u}}\bigl(\mathbf{x} + \delta^{\fgm}_{\mathbf{w}+\mathbf{u}}(\mathbf{x})\bigr) - f_{\mathbf{w}}\bigl(\mathbf{x} + \delta^{\fgm}_{\mathbf{w}}(\mathbf{x})\bigr) \Vert_2 \\
\le \;&  \Vert f_{\mathbf{w}+\mathbf{u}}\bigl(\mathbf{x} + \delta^{\fgm}_{\mathbf{w}+\mathbf{u}}(\mathbf{x})\bigr) - f_{\mathbf{w}}\bigl(\mathbf{x} + \delta^{\fgm}_{\mathbf{w+u}}(\mathbf{x})\bigr) \Vert_2 + \Vert f_{\mathbf{w}}\bigl(\mathbf{x} + \delta^{\fgm}_{\mathbf{w}+\mathbf{u}}(\mathbf{x})\bigr) - f_{\mathbf{w}}\bigl(\mathbf{x} + \delta^{\fgm}_{\mathbf{w}}(\mathbf{x})\bigr) \Vert_2 \nonumber\\
\le \;&  \Vert f_{\mathbf{w}+\mathbf{u}}\bigl(\mathbf{x} + \delta^{\fgm}_{\mathbf{w}+\mathbf{u}}(\mathbf{x})\bigr) - f_{\mathbf{w}}\bigl(\mathbf{x} + \delta^{\fgm}_{\mathbf{w+u}}(\mathbf{x})\bigr) \Vert_2 + \bigl(\prod_{i=1}^d \Vert\mathbf{W}_i{\Vert}_2 \bigr) \Vert \delta^{\fgm}_{\mathbf{w}+\mathbf{u}}(\mathbf{x}) - \delta^{\fgm}_{\mathbf{w}}(\mathbf{x})  \Vert_2 \nonumber\\
\le \;&  e(B+\epsilon) \prod_{i=1}^d \Vert\mathbf{W}_i{\Vert}_2  \sum_{i=1}^{d} \frac{\Vert\mathbf{U}_i{\Vert}_2}{\Vert\mathbf{W}_i {\Vert}_2} + 2e^2 \frac{\epsilon}{\kappa}  \prod_{i=1}^d \Vert\mathbf{W}_i{\Vert}^2_2 \sum_{i=1}^{d} \bigl[ \frac{\Vert\mathbf{U}_i{\Vert}_2}{\Vert\mathbf{W}_i {\Vert}_2}  + B\bigl(\prod_{j=1}^{i} \Vert\mathbf{W}_j{\Vert}_2\bigr) \sum_{j=1}^{i} \frac{\Vert\mathbf{U}_j{\Vert}_2}{\Vert\mathbf{W}_j {\Vert}_2} \bigr] \nonumber \\
\le \;&  e^2(B+\epsilon) \prod_{i=1}^d \Vert\widetilde{\mathbf{W}}_i{\Vert}_2  \sum_{i=1}^{d} \frac{\Vert\mathbf{U}_i{\Vert}_2}{\Vert\widetilde{\mathbf{W}}_i {\Vert}_2} + 2e^5 \frac{\epsilon}{\kappa}  \prod_{i=1}^d \Vert\widetilde{\mathbf{W}}_i{\Vert}^2_2 \sum_{i=1}^{d} \bigl[ \frac{\Vert\mathbf{U}_i{\Vert}_2}{\Vert\widetilde{\mathbf{W}}_i {\Vert}_2}  + B\bigl(\prod_{j=1}^{i} \Vert\widetilde{\mathbf{W}}_j{\Vert}_2\bigr) \sum_{j=1}^{i} \frac{\Vert\mathbf{U}_j{\Vert}_2}{\Vert\widetilde{\mathbf{W}}_j{\Vert}_2} \bigr] \nonumber \\
\le \;  & 2 e^5 d(B+\epsilon)\xi \sqrt{2h\log(4hd)} \biggr\{\prod_{i=1}^d \Vert\widetilde{\mathbf{W}}_i{\Vert}_2 +   \frac{\epsilon}{\kappa}  \bigl( \prod_{i=1}^d \Vert\widetilde{\mathbf{W}}_i{\Vert}^2_2\bigr) \bigl( 1/B+ \sum_{i=1}^{d}  \prod_{j=1}^{i} \Vert\widetilde{\mathbf{W}}_j{\Vert}_2\bigl)\biggr\}. \nonumber
\end{align}
Hence we choose
\begin{equation}
\xi = \frac{\gamma}{8 e^5 d(B+\epsilon)\sqrt{2h\log(4hd)} \prod_{i=1}^d \Vert\widetilde{\mathbf{W}}_i{\Vert}_2\bigr(1 +   \frac{\epsilon}{\kappa}   \prod_{i=1}^d \Vert\widetilde{\mathbf{W}}_i{\Vert}_2 ( 1/B + \sum_{i=1}^{d}  \prod_{j=1}^{i} \Vert\widetilde{\mathbf{W}}_j{\Vert}_2)\bigr) } ,
\end{equation}
for which the assumptions of Lemmas \ref{lemma Pac-Bayes} and \ref{FGM pertubation} hold. Assuming $B\ge 1$, similar to Theorem \ref{Thm 1}'s proof we can show for any $\mathbf{w}$ such that $\bigl| \Vert\mathbf{W}_i\Vert_2 - \Vert\widetilde{\mathbf{W}}_i\Vert_2 \bigr| \le \frac{1}{d}  \Vert\widetilde{\mathbf{W}}_i\Vert_2$ we have
\begin{align*}
& KL(P_{\mathbf{w} + \mathbf{u}} \Vert Q) \le \sum_{i=1}^d \frac{\Vert\mathbf{W}_i{\Vert}_F^2}{2\xi_i^2} \\
= \, & \mathcal{O}\biggl( d^2 (B+\epsilon)^2 h\log(hd) \frac{ \prod_{i=1}^d \Vert\widetilde{\mathbf{W}}_i\Vert^2_2 \bigl\{ 1+\frac{\epsilon}{\kappa} \bigl( \prod_{i=1}^d \Vert\widetilde{\mathbf{W}}_i\Vert_2 \bigr) \sum_{i=1}^{d} \prod_{j=1}^{i} \Vert\widetilde{\mathbf{W}}_j{\Vert}_2\bigl) \bigr\}^2}{\gamma^2}\sum_{i=1}^d \frac{\Vert\mathbf{W}_i{\Vert}_F^2}{\Vert\widetilde{\mathbf{W}}_i{\Vert}_2^2} \biggr) \\
\le \, & \mathcal{O}\biggl( d^2 (B+\epsilon)^2 h\log(hd)  \frac{ \prod_{i=1}^d \Vert{\mathbf{W}}_i\Vert^2_2 \bigl\{ 1+\frac{\epsilon}{\kappa} \bigl( \prod_{i=1}^d \Vert{\mathbf{W}}_i\Vert_2 \bigr) \sum_{i=1}^{d} \prod_{j=1}^{i} \Vert{\mathbf{W}}_j{\Vert}_2\bigl) \bigr\}^2}{\gamma^2}\sum_{i=1}^d \frac{\Vert\mathbf{W}_i{\Vert}_F^2}{\Vert{\mathbf{W}}_i{\Vert}_2^2} \biggr)
\end{align*}
Then, applying Lemma \ref{lemma Pac-Bayes} reveals that given any $\eta>0$ with probability at least $1-\eta$ for any $\mathbf{w}$ such that $\bigl| \Vert\mathbf{W}_i\Vert_2 - \Vert\widetilde{\mathbf{W}}_i\Vert_2 \bigr| \le \frac{1}{d}  \Vert\widetilde{\mathbf{W}}_i\Vert_2$ we have 
\begin{align} \label{fgm before covering}
 L^{\fgm}_0(f_{\mathbf{w}}) \, \le \, \widehat{L}^{\fgm}_{\gamma}(f_{\mathbf{w}}) + \mathcal{O}\biggl( \sqrt{\frac{(B+\epsilon)^2d^2h\log(dh) \, \Phi^{\fgm}_{\epsilon , \kappa}(f_\mathbf{w}) + \log \frac{n }{\eta}}{\gamma^2n }} \biggr)
\end{align} 
where $\Phi^{\fgm}_{\epsilon , \kappa}(f_\mathbf{w}) := \bigl\{ \prod_{i=1}^d \Vert\mathbf{W}_i {\Vert}_2(1+\frac{\epsilon}{\kappa}\bigl\{ \bigl( \prod_{i=1}^d \Vert\mathbf{W}_i {\Vert}_2 \bigr) \sum_{i=1}^d\prod_{j=1}^i \Vert\mathbf{W}_j {\Vert}_2)\bigr\}^2 \sum_{i=1}^d\frac{\Vert\mathbf{W}_i {\Vert}_F^2}{\Vert\mathbf{W}_i {\Vert}_2^2 }$. Note that similar to our proof for Theorem \ref{Thm 1} we can find a cover of size $O((d\log M)^d dn^{1/2d})$ for the spectral norms of the weights feasible set, where for any $\Vert\mathbf{W}_i\Vert_2$ we have $a_i$ such that $\bigl| \Vert\mathbf{W}_i\Vert_2 - a_i \bigr| \le  a_i/d$. Applying this covering number bound to (\ref{fgm before covering}) completes the proof. 

\subsection{Proof of Theorem \ref{thm: pgm}}
We use the following two lemmas to extend the proof of Theorem \ref{Thm: fgm} for FGM attacks to show Theorem \ref{thm: pgm} for PGM attacks.
\begin{lemma} \label{lemma PGM pertubation points}
Consider a $d$-layer neural network function $f_\mathbf{w}$ with $1$-Lipschitz, $1$-smooth activation $\sigma$ where $\sigma (0) = 0$. We consider PGM attacks with noise power $\epsilon$ according to Euclidean norm $||\cdot ||_2$, $r$ iterations and stepsize $\alpha$. 
Suppose $\kappa \le \Vert \nabla_{\mathbf{x}} \ell\bigl(f_{\mathbf{w}}(\mathbf{x}),y\bigr)\Vert_2$ holds over the $\epsilon$-ball around the support set $\mathcal{X}$. Then for any perturbation vector $\mathbf{u}$ such that $\Vert \mathbf{U}_i\Vert_2 \le \frac{1}{d} \Vert \mathbf{W}_i\Vert_2$ for every $i$ we have
\begin{align}
& \Vert \delta^{\pgm ,r }_{\mathbf{w}+\mathbf{u}}(\mathbf{x}) -  \delta^{\pgm ,r }_{\mathbf{w}}(\mathbf{x}) \Vert_2 \le  e^2 (2\alpha / \kappa) \frac{1-  (2\alpha / \kappa)^r \lip(\nabla\ell\circ f_{\mathbf{w}})^r}{1-  (2\alpha / \kappa) \lip(\nabla\ell\circ f_{\mathbf{w}})} \\
& \times \bigl( \prod_{i=1}^d \Vert\mathbf{W}_i{\Vert}_2\bigr) \sum_{i=1}^{d} \biggl[ \frac{\Vert\mathbf{U}_i{\Vert}_2}{\Vert\mathbf{W}_i {\Vert}_2}  + (\Vert\mathbf{x}{\Vert}_2+\epsilon)\bigl(\prod_{j=1}^{i} \Vert\mathbf{W}_j{\Vert}_2\bigr) \sum_{j=1}^{i} \frac{\Vert\mathbf{U}_j{\Vert}_2}{\Vert\mathbf{W}_j {\Vert}_2} \biggr]. \nonumber
\end{align}
\end{lemma}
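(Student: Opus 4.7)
The plan is to induct on the PGM iteration index. Set $\Delta_i := \|\delta^{\pgm, i}_{\mathbf{w}+\mathbf{u}}(\mathbf{x}) - \delta^{\pgm, i}_{\mathbf{w}}(\mathbf{x})\|_2$, with base case $\Delta_0 = 0$ since both PGM trajectories initialize at the zero perturbation. The goal is to derive an affine recursion of the form $\Delta_{i+1} \le \rho\,\Delta_i + \mu$ and then unroll it from $\Delta_0 = 0$ into the geometric series $\mu\sum_{k=0}^{r-1}\rho^k$ that appears in the statement.

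The first ingredient is non-expansiveness of Euclidean projection. Because $\mathcal{B}_{\epsilon,\|\cdot\|_2}(\mathbf{0})$ is closed and convex, the projection operator in (\ref{def: pgm attack}) is $1$-Lipschitz, so applying it to the PGM update and then the triangle inequality yields $\Delta_{i+1} \le \Delta_i + \alpha\,\|\nu^{(i)}_{\mathbf{w}+\mathbf{u}} - \nu^{(i)}_{\mathbf{w}}\|_2$. Next I pass from the unit-normalized direction $\nu^{(i)}_{\mathbf{w}}$ back to the raw gradient $g^{(i)}_{\mathbf{w}} := \nabla_{\mathbf{x}}\ell(f_{\mathbf{w}}(\mathbf{x}+\delta^{\pgm,i}_{\mathbf{w}}), y)$. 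Since every intermediate iterate lies in $\mathcal{B}_{\epsilon,\|\cdot\|_2}(\mathcal{X})$, the hypothesis $\|g^{(i)}_{\mathbf{w}}\|_2 \ge \kappa$ (and the same for $\mathbf{w}+\mathbf{u}$) lets me apply Lemma \ref{lemma normalized norm diff} with unit scaling to obtain $\|\nu^{(i)}_{\mathbf{w}+\mathbf{u}} - \nu^{(i)}_{\mathbf{w}}\|_2 \le (2/\kappa)\,\|g^{(i)}_{\mathbf{w}+\mathbf{u}} - g^{(i)}_{\mathbf{w}}\|_2$.

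The heart of the argument is splitting $\|g^{(i)}_{\mathbf{w}+\mathbf{u}} - g^{(i)}_{\mathbf{w}}\|_2$ by inserting and subtracting $\nabla_{\mathbf{x}}\ell(f_{\mathbf{w}}(\mathbf{x}+\delta^{\pgm,i}_{\mathbf{w}+\mathbf{u}}), y)$ into two pieces: (i) a weight-perturbation term evaluated at the common input $\mathbf{x}+\delta^{\pgm,i}_{\mathbf{w}+\mathbf{u}}$, and (ii) an input-perturbation term for fixed weight $\mathbf{w}$ evaluated at two different inputs. Term (i) is bounded by Lemma \ref{Lemma fgm perturbation} after replacing $\|\mathbf{x}\|_2$ by $\|\mathbf{x}\|_2 + \epsilon$ (the PGM iterates stay in the $\epsilon$-ball), which reproduces exactly the bracketed FGM-type factor $C := \bigl(\prod_i \|\mathbf{W}_i\|_2\bigr)\sum_i\bigl[\|\mathbf{U}_i\|_2/\|\mathbf{W}_i\|_2 + (\|\mathbf{x}\|_2+\epsilon)(\prod_{j\le i}\|\mathbf{W}_j\|_2)\sum_{j\le i}\|\mathbf{U}_j\|_2/\|\mathbf{W}_j\|_2\bigr]$ appearing in the statement. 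Term (ii) is bounded by $\overbar{\lip}(\nabla\ell\circ f_{\mathbf{w}}) \cdot \Delta_i$ using the input-Lipschitz upper bound defined in Theorem \ref{thm: pgm}.

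Combining the three bounds yields an affine recursion $\Delta_{i+1} \le \rho\,\Delta_i + (2\alpha/\kappa)\,e^2 C$ with $\rho = (2\alpha/\kappa)\,\overbar{\lip}(\nabla\ell\circ f_{\mathbf{w}})$; unrolling from $\Delta_0 = 0$ packages the result as $\Delta_r \le e^2\,(2\alpha/\kappa)\,\frac{1 - ((2\alpha/\kappa)\,\overbar{\lip})^r}{1 - (2\alpha/\kappa)\,\overbar{\lip}}\,C$, matching the statement. The main obstacle is bookkeeping in the gradient split: Lemma \ref{Lemma fgm perturbation} must be invoked at the enlarged input radius $\|\mathbf{x}\|_2 + \epsilon$ so that the prefactor $C$ is iterate-independent and can be pulled out of the summation, while the input-Lipschitz term must pick up \emph{only} the $\Delta_i$ contribution so the recursion stays affine with the precise rate $\rho$ required to produce the geometric-series factor in the lemma.
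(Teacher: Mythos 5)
Your overall strategy is the same as the paper's: induct on the iteration index, use Lemma \ref{lemma normalized norm diff} to pass from the normalized directions $\nu^{(i)}$ to the raw gradients at cost $2/\kappa$, split the gradient difference into a weight-perturbation piece (Lemma \ref{Lemma fgm perturbation} at the enlarged radius $\Vert\mathbf{x}\Vert_2+\epsilon$) and an input-perturbation piece controlled by the Lipschitz constant of $\nabla\ell\circ f_{\mathbf{w}}$ times $\Delta_i$, and then unroll the resulting affine recursion into the geometric series. However, there is a genuine inconsistency in your recursion. From the update $\delta^{\pgm,i+1}=\prod_{\mathcal{B}_{\epsilon}}\{\delta^{\pgm,i}+\alpha\,\nu^{(i)}\}$, non-expansiveness of the projection and the triangle inequality give, as you correctly write, $\Delta_{i+1}\le \Delta_i+\alpha\Vert\nu^{(i)}_{\mathbf{w}+\mathbf{u}}-\nu^{(i)}_{\mathbf{w}}\Vert_2$. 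Bounding the second term as you describe then yields
\begin{equation*}
\Delta_{i+1}\;\le\;\Bigl(1+(2\alpha/\kappa)\,\overbar{\lip}(\nabla\ell\circ f_{\mathbf{w}})\Bigr)\Delta_i\;+\;(2\alpha/\kappa)\,e^2 C,
\end{equation*}
i.e.\ the contraction factor is $1+\rho$, not $\rho=(2\alpha/\kappa)\overbar{\lip}(\nabla\ell\circ f_{\mathbf{w}})$ as you claim when "combining the three bounds." The leading $\Delta_i$ coming from the carried-over previous perturbation does not cancel, and unrolling $(1+\rho)$ produces $\mu\sum_{k=0}^{r-1}(1+\rho)^k$, which is strictly larger than, and of a different form from, the factor $\frac{1-\rho^r}{1-\rho}$ in the lemma (in particular it does not stay bounded as $r\to\infty$ even when $\rho<1$). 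As written, your argument therefore proves a weaker bound than the one stated.

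For comparison, the paper's induction sidesteps this by bounding $\Vert\delta^{\pgm,k+1}_{\mathbf{w}+\mathbf{u}}-\delta^{\pgm,k+1}_{\mathbf{w}}\Vert_2$ \emph{directly} by $\frac{2\alpha}{\kappa}\Vert\nabla_{\mathbf{x}}\ell(f_{\mathbf{w}+\mathbf{u}}(\mathbf{x}+\delta^{\pgm,k}_{\mathbf{w}+\mathbf{u}}))-\nabla_{\mathbf{x}}\ell(f_{\mathbf{w}}(\mathbf{x}+\delta^{\pgm,k}_{\mathbf{w}}))\Vert_2$, without the additive $\Delta_k$ from the pre-projection sum $\delta^{\pgm,k}+\alpha\nu^{(k)}$; whether that step is itself fully justified is a separate question, but it is what produces the ratio $(2\alpha/\kappa)\lip(\nabla\ell\circ f_{\mathbf{w}})$ in the stated geometric series. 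Two smaller points: the lemma is stated with the actual Lipschitz constant $\lip(\nabla\ell\circ f_{\mathbf{w}})$ (the upper bound $\overbar{\lip}$ is only substituted later in the proof of Theorem \ref{thm: pgm}), whereas you work with $\overbar{\lip}$ throughout; and you should note explicitly that the base case $r=1$ is exactly Lemma \ref{FGM pertubation}, which is how the paper anchors the induction.
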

Here $\lip(\nabla\ell\circ f_{\mathbf{w}})$ denotes the actual Lipschitz constant of $\nabla_{\mathbf{x}} \ell(f_{\mathbf{w}}(\mathbf{x}),y)$. 
\begin{proof}
We use induction to show this lemma for different $r$ values. The result for case $r=1$ is a direct consequence of Lemma \ref{FGM pertubation}. Suppose that the result is true for $r=k$. Then, Lemmas \ref{Lemma fgm perturbation} and \ref{lemma normalized norm diff} imply
\begin{align*}
& \bigl\Vert \delta^{\pgm ,k+1 }_{\mathbf{w}+\mathbf{u}}(\mathbf{x}) -  \delta^{\pgm ,k+1 }_{\mathbf{w}}(\mathbf{x})  \bigr\Vert_2\\
\le\: & \frac{2\alpha}{\kappa} \bigl\Vert {\nabla}_{\mathbf{x}} \ell(f_{\mathbf{w+u}}(\mathbf{x}+ \delta^{\pgm ,k }_{\mathbf{w+u}} (\mathbf{x}))) - \nabla_{\mathbf{x}} \ell(f_{\mathbf{w}}(\mathbf{x}+ \delta^{\pgm ,k }_{\mathbf{w}} (\mathbf{x}))) \bigr\Vert_2 \\
\le \: &  \frac{2\alpha}{\kappa} \bigl\Vert \nabla_{\mathbf{x}} \ell(f_\mathbf{w+u}(\mathbf{x}+ \delta^{\pgm ,k }_{\mathbf{w+u}} (\mathbf{x}))) - \nabla_{\mathbf{x}} \ell(f_\mathbf{w}(\mathbf{x}+ \delta^{\pgm ,k }_{\mathbf{w+u}} (\mathbf{x}))) \bigr\Vert_2 \\
&+ \frac{2\alpha}{\kappa} \bigl\Vert \nabla_{\mathbf{x}} \ell(f_{\mathbf{w}}(\mathbf{x}+ \delta^{\pgm ,k }_{\mathbf{w+u}} (\mathbf{x}))) - \nabla_{\mathbf{x}} \ell(f_\mathbf{w}(\mathbf{x}+ \delta^{\pgm ,k }_{\mathbf{w}} (\mathbf{x}))) \bigr\Vert_2 \\
\le &\frac{2\alpha}{\kappa}   e^2  \bigl( \prod_{i=1}^d \Vert\mathbf{W}_i{\Vert}_2\bigr) \sum_{i=1}^{d} \biggl[ \frac{\Vert\mathbf{U}_i{\Vert}_2}{\Vert\mathbf{W}_i {\Vert}_2}  + (\Vert\mathbf{x}{\Vert}_2+\epsilon)\bigl(\prod_{j=1}^{i} \Vert\mathbf{W}_j{\Vert}_2\bigr) \sum_{j=1}^{i} \frac{\Vert\mathbf{U}_j{\Vert}_2}{\Vert\mathbf{W}_j {\Vert}_2} \biggr]\\
&+ \frac{2\alpha}{\kappa} \lip(\nabla\ell\circ f_\mathbf{w}) \bigl\Vert \nabla_{\mathbf{x}} \delta^{\pgm ,k }_{\mathbf{w+u}} (\mathbf{x})- \nabla_{\mathbf{x}} \delta^{\pgm ,k }_{\mathbf{w}} (\mathbf{x}) \bigr\Vert_2 \\
\le \: &\frac{2\alpha}{\kappa}   e^2  \bigl( \prod_{i=1}^d \Vert\mathbf{W}_i{\Vert}_2\bigr) \sum_{i=1}^{d} \biggl[ \frac{\Vert\mathbf{U}_i{\Vert}_2}{\Vert\mathbf{W}_i {\Vert}_2}  + (\Vert\mathbf{x}{\Vert}_2+\epsilon)\bigl(\prod_{j=1}^{i} \Vert\mathbf{W}_j{\Vert}_2\bigr) \sum_{j=1}^{i} \frac{\Vert\mathbf{U}_j{\Vert}_2}{\Vert\mathbf{W}_j {\Vert}_2} \biggr]  \\
 + & \frac{2\alpha}{\kappa} \lip(\nabla\ell\circ f_\mathbf{w}) e^2 (2\alpha / \kappa) \frac{1-  (2\alpha / \kappa)^k \lip(\nabla\ell\circ f_{\mathbf{w}})^k}{1-  (2\alpha / \kappa) \lip(\nabla\ell\circ f_{\mathbf{w}})}  \bigl( \prod_{i=1}^d \Vert\mathbf{W}_i{\Vert}_2\bigr) \sum_{i=1}^{d} \biggl[ \\
 &\;\;\frac{\Vert\mathbf{U}_i{\Vert}_2}{\Vert\mathbf{W}_i {\Vert}_2}  + (\Vert\mathbf{x}{\Vert}_2+\epsilon)\bigl(\prod_{j=1}^{i} \Vert\mathbf{W}_j{\Vert}_2\bigr) \sum_{j=1}^{i} \frac{\Vert\mathbf{U}_j{\Vert}_2}{\Vert\mathbf{W}_j {\Vert}_2} \biggr] \\
= \: &  e^2 (2\alpha / \kappa) \frac{1-  (2\alpha / \kappa)^{k+1} \lip(\nabla\ell\circ f_{\mathbf{w}})^{k+1}}{1-  (2\alpha / \kappa) \lip(\nabla\ell\circ f_{\mathbf{w}})} \\
&\times \bigl( \prod_{i=1}^d \Vert\mathbf{W}_i{\Vert}_2\bigr) \sum_{i=1}^{d} \biggl[ \frac{\Vert\mathbf{U}_i{\Vert}_2}{\Vert\mathbf{W}_i {\Vert}_2}  + (\Vert\mathbf{x}{\Vert}_2+\epsilon)\bigl(\prod_{j=1}^{i} \Vert\mathbf{W}_j{\Vert}_2\bigr) \sum_{j=1}^{i} \frac{\Vert\mathbf{U}_j{\Vert}_2}{\Vert\mathbf{W}_j {\Vert}_2} \biggr],
\end{align*}
where the last line follows from the equality $\sum_{i=0}^k s^i = \frac{1-s^{k+1}}{1-s}$. Therefore, by induction the lemma holds for every value $r\ge 1$. 
\end{proof}
\begin{lemma} \label{lemma, Lipschitz constant for gradient of nn}
Consider a $d$-layer neural network function $f_\mathbf{w}$ with $1$-Lipschitz, $1$-smooth activation $\sigma$ where $\sigma (0) = 0$. Also, assume that training loss $\ell$ is $1$-Lipschitz and $1$-smooth. Then,
\begin{equation}
\lip \bigl(\nabla_\mathbf{x} \ell\bigl(f_{\mathbf{w}}(\mathbf{x}),y\bigr)\bigr) \le \overbar{\lip} (\nabla \ell \circ f_{\mathbf{w}})  := \bigl(\prod_{i=1}^d\Vert \mathbf{W}_i\Vert_2 \bigr) \sum_{i=1}^d \prod_{j=1}^i \Vert \mathbf{W}_j\Vert_2.
\end{equation}
\end{lemma}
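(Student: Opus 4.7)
The plan is to bound the Lipschitz constant of $\nabla_{\mathbf{x}}\ell(f_\mathbf{w}(\mathbf{x}),y)$ by decomposing it via the chain rule and then using the assumed $1$-Lipschitz, $1$-smooth properties of $\ell$ and $\sigma$. Writing $\nabla_\mathbf{x}\ell\bigl(f_\mathbf{w}(\mathbf{x}),y\bigr) = \bigl(\nabla_\mathbf{x} f_\mathbf{w}(\mathbf{x})\bigr)^T\nabla\ell\bigl(f_\mathbf{w}(\mathbf{x}),y\bigr)$, for two inputs $\mathbf{x}_1,\mathbf{x}_2$ I would add and subtract the mixed term $\bigl(\nabla_\mathbf{x} f_\mathbf{w}(\mathbf{x}_2)\bigr)^T \nabla\ell\bigl(f_\mathbf{w}(\mathbf{x}_1),y\bigr)$ and apply the triangle inequality. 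This splits the difference into (i) a ``Jacobian change'' term controlled by the Lipschitz constant of $\mathbf{x}\mapsto\nabla_\mathbf{x} f_\mathbf{w}(\mathbf{x})$ multiplied by $\Vert\nabla\ell\Vert_2\le 1$, and (ii) a ``loss-gradient change'' term controlled by $\Vert\nabla_\mathbf{x} f_\mathbf{w}(\mathbf{x}_2)\Vert_2\le \prod_{i=1}^d\Vert\mathbf{W}_i\Vert_2$ times $\Vert\nabla\ell(f_\mathbf{w}(\mathbf{x}_1),y)-\nabla\ell(f_\mathbf{w}(\mathbf{x}_2),y)\Vert_2$. The latter is immediately bounded by $\bigl(\prod_{i=1}^d\Vert\mathbf{W}_i\Vert_2\bigr)^2\Vert\mathbf{x}_1-\mathbf{x}_2\Vert_2$ using $1$-smoothness of $\ell$ and $1$-Lipschitzness of $f_\mathbf{w}$.

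The main work is term (i), i.e., bounding the Lipschitz constant $L_k$ of $G_k(\mathbf{x}):=\nabla_\mathbf{x} f^{(k)}_\mathbf{w}(\mathbf{x})$ by induction on $k$. Using the recursion $G_k(\mathbf{x}) = G_{k-1}(\mathbf{x})\,\mathrm{diag}\bigl(\sigma'(f^{(k-1)}_\mathbf{w}(\mathbf{x}))\bigr)\mathbf{W}_k^T$, I would add and subtract terms to get
\begin{align*}
\Vert G_k(\mathbf{x}_1)-G_k(\mathbf{x}_2)\Vert_2 &\le \Vert\mathbf{W}_k\Vert_2\, L_{k-1}\Vert\mathbf{x}_1-\mathbf{x}_2\Vert_2 \\
&\quad + \Vert\mathbf{W}_k\Vert_2\,\bigl(\prod_{i=1}^{k-1}\Vert\mathbf{W}_i\Vert_2\bigr)\bigl(\prod_{i=1}^{k-1}\Vert\mathbf{W}_i\Vert_2\bigr)\Vert\mathbf{x}_1-\mathbf{x}_2\Vert_2,
\end{align*}
using $\Vert\sigma'\Vert_\infty\le 1$, the $1$-Lipschitzness of $\sigma'$, and the bounds $\Vert G_{k-1}\Vert_2\le\prod_{i=1}^{k-1}\Vert\mathbf{W}_i\Vert_2$ and $\Vert f^{(k-1)}_\mathbf{w}(\mathbf{x}_1)-f^{(k-1)}_\mathbf{w}(\mathbf{x}_2)\Vert_2\le\prod_{i=1}^{k-1}\Vert\mathbf{W}_i\Vert_2\Vert\mathbf{x}_1-\mathbf{x}_2\Vert_2$. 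Starting from $L_1=0$ (since $G_1(\mathbf{x})=\mathbf{W}_1$ is input-independent), the recursion $L_k\le\Vert\mathbf{W}_k\Vert_2 L_{k-1}+\Vert\mathbf{W}_k\Vert_2\bigl(\prod_{i=1}^{k-1}\Vert\mathbf{W}_i\Vert_2\bigr)^2$ unrolls into the closed form $L_d\le \bigl(\prod_{i=1}^d\Vert\mathbf{W}_i\Vert_2\bigr)\sum_{i=1}^{d-1}\prod_{j=1}^{i}\Vert\mathbf{W}_j\Vert_2$, which I would verify by an easy induction.

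Finally I would add the two contributions: term (i) gives $\bigl(\prod_{i=1}^d\Vert\mathbf{W}_i\Vert_2\bigr)\sum_{i=1}^{d-1}\prod_{j=1}^i\Vert\mathbf{W}_j\Vert_2$ and term (ii) contributes exactly the missing $i=d$ summand $\bigl(\prod_{i=1}^d\Vert\mathbf{W}_i\Vert_2\bigr)\prod_{j=1}^d\Vert\mathbf{W}_j\Vert_2$, so that the sum telescopes to $\bigl(\prod_{i=1}^d\Vert\mathbf{W}_i\Vert_2\bigr)\sum_{i=1}^d\prod_{j=1}^i\Vert\mathbf{W}_j\Vert_2=\overbar{\lip}(\nabla\ell\circ f_\mathbf{w})$, proving the claim. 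The main obstacle is purely bookkeeping in the induction: keeping track of which product index is squared at each layer and ensuring the telescoping at the end produces precisely the stated closed form. No deeper ideas are required beyond the chain rule and the $1$-Lipschitz/$1$-smoothness assumptions already invoked throughout the paper.
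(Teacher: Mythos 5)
Your proposal is correct and follows essentially the same route as the paper: a chain-rule split of $\nabla_\mathbf{x}\ell(f_\mathbf{w}(\mathbf{x}),y)$ into a Jacobian-Lipschitz term and a $\lip(f_\mathbf{w})^2$ term, followed by a layer-by-layer induction on the Lipschitz constant of $\nabla_\mathbf{x} f^{(k)}_\mathbf{w}$. Your bookkeeping is in fact slightly tighter, since starting the induction at $L_1=0$ makes the two contributions telescope exactly to $\bigl(\prod_{i=1}^d\Vert\mathbf{W}_i\Vert_2\bigr)\sum_{i=1}^d\prod_{j=1}^i\Vert\mathbf{W}_j\Vert_2$, whereas the paper's intermediate bound carries an extra additive $\prod_{i=1}^d\Vert\mathbf{W}_i\Vert_2$ from the $i=1$ (empty-product) summand.
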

\begin{proof}
First of all note that according to the chain rule
\begin{align*}
\lip \biggl(\nabla_\mathbf{x} \ell\bigl(f_{\mathbf{w}}(\mathbf{x}),y\bigr)\biggr) & = \lip \biggl(\nabla_\mathbf{x} f_{\mathbf{w}}(\mathbf{x})\, (\nabla\ell)\bigl(f_{\mathbf{w}}(\mathbf{x}),y\bigr)\biggr) \\
& \le \lip \bigl(\nabla_\mathbf{x} f_{\mathbf{w}}(\mathbf{x})\bigr) + \lip( f_{\mathbf{w}})^2 \\
&\le \lip \bigl(\nabla_\mathbf{x} f_{\mathbf{w}}(\mathbf{x})\bigr) + \prod_{i=1}^d\Vert\mathbf{W}_i\Vert^2_2.
\end{align*}
Considering the above result, we complete the proof by inductively proving $\lip \bigl(\nabla_\mathbf{x} f_{\mathbf{w}}(\mathbf{x})\bigr) \le (\prod_{i=1}^{d}\Vert \mathbf{W}_i\Vert_2) \sum_{i=1}^d\prod_{j=1}^{i-1}\Vert \mathbf{W}_j\Vert_2$. For $d=1$, $\nabla_\mathbf{x} f_{\mathbf{w}}(\mathbf{x})$ is constant and hence the result holds. Assume the statement holds for $d=k$. Due to the chain rule,
\begin{align*}
\nabla_\mathbf{x} f^{(k+1)}_{\mathbf{w}}(\mathbf{x})  =  \nabla_\mathbf{x} \mathbf{W}_{k+1}\sigma\bigl(f^{(k)}_{\mathbf{w}}(\mathbf{x})\bigr)  = \nabla_\mathbf{x} f^{(k)}_{\mathbf{w}}(\mathbf{x})\,\sigma'\bigl(f^{(k)}_{\mathbf{w}}(\mathbf{x})\bigr) \mathbf{W}_{k+1}^T 
\end{align*}
and therefore for any $\mathbf{x}$ and $\mathbf{v}$
\begin{align*}
&\Vert \nabla_\mathbf{x} f^{(k+1)}_{\mathbf{w}}(\mathbf{x+v}) - \nabla_\mathbf{x} f^{(k+1)}_{\mathbf{w}}(\mathbf{x}) \Vert_2 \\
\le \: & \big\Vert \nabla_\mathbf{x} f^{(k)}_{\mathbf{w}}(\mathbf{x+v})\,\sigma'\bigl(f^{(k)}_{\mathbf{w}}(\mathbf{x+v})\bigr) \mathbf{W}_{k+1}^T  - \nabla_\mathbf{x} f^{(k)}_{\mathbf{w}}(\mathbf{x+v})\,\sigma'\bigl(f^{(k)}_{\mathbf{w}}(\mathbf{x+v})\bigr) \mathbf{W}_{k+1}^T \big\Vert_2 \\
\le \: & \Vert\mathbf{W}_{k+1} \Vert_2\, \big  \Vert \nabla_\mathbf{x} f^{(k)}_{\mathbf{w}}(\mathbf{x+v})\,\sigma'\bigl(f^{(k)}_{\mathbf{w}}(\mathbf{x+v})\bigr)  - \nabla_\mathbf{x} f^{(k)}_{\mathbf{w}}(\mathbf{x})\,\sigma'\bigl(f^{(k)}_{\mathbf{w}}(\mathbf{x})\bigr) \big\Vert_2 \\
\le \: & \Vert\mathbf{W}_{k+1} \Vert_2  \big\Vert \nabla_\mathbf{x} f^{(k)}_{\mathbf{w}}(\mathbf{x+v})\,\sigma'\bigl(f^{(k)}_{\mathbf{w}}(\mathbf{x+v})\bigr)  - \nabla_\mathbf{x} f^{(k)}_{\mathbf{w}}(\mathbf{x})\,\sigma'\bigl(f^{(k)}_{\mathbf{w}}(\mathbf{x+v})\bigr) \big\Vert_2 \\
& + \Vert\mathbf{W}_{k+1} \Vert_2   \big\Vert \nabla_\mathbf{x} f^{(k)}_{\mathbf{w}}(\mathbf{x})\,\sigma'\bigl(f^{(k)}_{\mathbf{w}}(\mathbf{x+v})\bigr)  - \nabla_\mathbf{x} f^{(k)}_{\mathbf{w}}(\mathbf{x})\,\sigma'\bigl(f^{(k)}_{\mathbf{w}}(\mathbf{x})\bigr) \big\Vert_2 \\
\le \: & \Vert\mathbf{W}_{k+1} \Vert_2 \biggl\{  \big\Vert \nabla_\mathbf{x} f^{(k)}_{\mathbf{w}}(\mathbf{x+v}) - \nabla_\mathbf{x} f^{(k)}_{\mathbf{w}}(\mathbf{x}) \big\Vert_2  +  \big\Vert \nabla_\mathbf{x} f^{(k)}_{\mathbf{w}}(\mathbf{x})\big\Vert_2\,\big\Vert\sigma'\bigl(f^{(k)}_{\mathbf{w}}(\mathbf{x+v})\bigr)  - \sigma'\bigl(f^{(k)}_{\mathbf{w}}(\mathbf{x})\bigr) \big\Vert_2 \biggr\} \\
\le \: & \Vert\mathbf{W}_{k+1} \Vert_2   \biggl\{\lip \bigl(\nabla_\mathbf{x} f^{(k)}_{\mathbf{w}}(\mathbf{x})\bigr)+ \lip \bigl(f^{(k)}_{\mathbf{w}}(\mathbf{x})\bigr) \biggr\}  \Vert \mathbf{v} \Vert_2 \\
\le \:  &(\prod_{i=1}^{k+1}\Vert \mathbf{W}_i\Vert_2) \sum_{i=1}^{k+1}\prod_{j=1}^{i-1}\Vert \mathbf{W}_j\Vert_2 \Vert \mathbf{v} \Vert_2,
\end{align*}
which shows the statement holds for $d=k+1$ and therefore completes the proof via induction.
\end{proof}
In order to prove Theorem \ref{thm: pgm}, we note that for any norm-bounded $\Vert \mathbf{x}\Vert_2\le B$ and perturbation vector $\mathbf{u}$ such that $\forall i,\, \Vert \mathbf{U}_i \Vert_2 \le \frac{1}{d} \Vert \mathbf{W}_i \Vert_2$ we have
\begin{align*}
&\Vert f_{\mathbf{w}+\mathbf{u}}(\mathbf{x}+\delta^{\pgm ,r}_\mathbf{w+u}(\mathbf{x})) - f_{\mathbf{w}}(\mathbf{x}+\delta^{\pgm ,r}_\mathbf{w}(\mathbf{x})) \Vert_2 \\
\le \; &  \Vert f_{\mathbf{w}+\mathbf{u}}(\mathbf{x}+\delta^{\pgm ,r}_\mathbf{w+u}(\mathbf{x})) - f_{\mathbf{w}}(\mathbf{x}+\delta^{\pgm ,r}_\mathbf{w+u}(\mathbf{x})) \Vert_2 \\
\; + &\Vert f_{\mathbf{w}}(\mathbf{x}+\delta^{\pgm ,r}_\mathbf{w+u}(\mathbf{x})) - f_{\mathbf{w}}(\mathbf{x}+\delta^{\pgm ,r}_\mathbf{w}(\mathbf{x})) \Vert_2 \\
\le \; & e (B+\epsilon) \bigl( \prod_{i=1}^d \Vert\mathbf{W}_i{\Vert}_2\bigr)  \sum_{i=1}^{d}  \frac{\Vert\mathbf{U}_i{\Vert}_2}{\Vert\mathbf{W}_i {\Vert}_2} + e^2 (2\alpha / \kappa) \frac{1-  (2\alpha / \kappa)^r \lip(\nabla\ell\circ f_{\mathbf{w}})^r}{1-  (2\alpha / \kappa) \lip(\nabla\ell\circ f_{\mathbf{w}})} \\
&\; \times  \bigl( \prod_{i=1}^d \Vert\mathbf{W}_i{\Vert}^2_2\bigr) \sum_{i=1}^{d} \biggl[ \frac{\Vert\mathbf{U}_i{\Vert}_2}{\Vert\mathbf{W}_i {\Vert}_2}  + (B+\epsilon)\bigl(\prod_{j=1}^{i} \Vert\mathbf{W}_j{\Vert}_2\bigr) \sum_{j=1}^{i} \frac{\Vert\mathbf{U}_j{\Vert}_2}{\Vert\mathbf{W}_j {\Vert}_2} \biggr]
\\
\le \; & e (B+\epsilon) \bigl( \prod_{i=1}^d \Vert\mathbf{W}_i{\Vert}_2\bigr)  \sum_{i=1}^{d}  \frac{\Vert\mathbf{U}_i{\Vert}_2}{\Vert\mathbf{W}_i {\Vert}_2} + e^2 (2\alpha / \kappa) \frac{1-  (2\alpha / \kappa)^r \overbar{\lip}(\nabla\ell\circ f_{\mathbf{w}})^r}{1-  (2\alpha / \kappa) \overbar{\lip}(\nabla\ell\circ f_{\mathbf{w}})} \\
&\; \times  \bigl( \prod_{i=1}^d \Vert\mathbf{W}_i{\Vert}^2_2\bigr) \sum_{i=1}^{d} \biggl[ \frac{\Vert\mathbf{U}_i{\Vert}_2}{\Vert\mathbf{W}_i {\Vert}_2}  + (B+\epsilon)\bigl(\prod_{j=1}^{i} \Vert\mathbf{W}_j{\Vert}_2\bigr) \sum_{j=1}^{i} \frac{\Vert\mathbf{U}_j{\Vert}_2}{\Vert\mathbf{W}_j {\Vert}_2} \biggr]
\end{align*}
The last inequality holds since as shown in Lemma \ref{lemma, Lipschitz constant for gradient of nn} $\lip \bigl(\nabla_{\mathbf{x}} \ell( f_\mathbf{w}(\mathbf{x}),y) \bigr) \le \overbar{\lip} (\nabla \ell \circ f_\mathbf{w}) := \bigl(\prod_{i=1}^d\Vert \mathbf{W}_i\Vert_2 \bigr) \sum_{i=1}^d \prod_{j=1}^i \Vert \mathbf{W}_j\Vert_2$. Here the upper-bound $\overbar{\lip} (\nabla \ell \circ f_{\widetilde{\mathbf{w}}})$ for $\widetilde{\mathbf{w}}$ changes by a factor at most $e^{2/r}$ for $\mathbf{w}$ such that $\bigl| \Vert \mathbf{W}_i \Vert_2 - \Vert \widetilde{\mathbf{W}}_i \Vert_2 \bigr| \le \frac{1}{rd} \Vert \widetilde{\mathbf{W}}_i \Vert_2$. Therefore, given $\widetilde{\mathbf{w}}$ if similar to the proof for Theorem \ref{Thm: fgm} we choose a zero-mean multivariate Gaussian distribution $Q$ for $\mathbf{u}$ with the $i$th layer $\mathbf{u}_i$'s standard deviation to be $\xi_i = \frac{\Vert \widetilde{\mathbf{W}}_i \Vert_2}{\beta_{\tilde{\mathbf{w}}}} \xi$ where 
\begin{align*}
\xi = \frac{\gamma}{ 8d(B+\epsilon)\sqrt{2h\log(4hd)} e^4 (\alpha/\kappa)\frac{1- e^2 (2\alpha / \kappa)^r \overbar{\lip}(\nabla\ell\circ f_{\widetilde{\mathbf{w}}})^r}{1-  e^{2/r} (2\alpha / \kappa) \overbar{\lip}(\nabla\ell\circ f_{\widetilde{\mathbf{w}}})}  ( \prod_{i=1}^d \Vert\widetilde{\mathbf{W}}_i{\Vert}_2) \bigl( 1+\sum_{i=1}^{d} \prod_{j=1}^{i} \Vert\widetilde{\mathbf{W}}_j{\Vert}_2 \bigr) }.
\end{align*}
Then for any $\mathbf{w}$ satisfying  $\bigl| \Vert \mathbf{W}_i \Vert_2 - \Vert \widetilde{\mathbf{W}}_i \Vert_2 \bigr| \le \frac{1}{rd} \Vert \widetilde{\mathbf{W}}_i \Vert_2$, applying union bound shows that the assumption of Lemma \ref{lemma Pac-Bayes}  $\Pr_{\mathbf{u}}\bigl( \max_{\mathbf{x}\in\mathcal{X}} \Vert f_{\mathbf{w+u}}(\mathbf{x}+\delta^{\pgm ,r}_{\mathbf{w+u}}(\mathbf{x})) - f_{\mathbf{w}}(\mathbf{x}+\delta^{\pgm ,r}_{\mathbf{w}}(\mathbf{x})) {\Vert}_{\infty} \le \frac{\gamma}{4} \bigr) \ge \frac{1}{2} $ holds for $Q$, and further we have
\begin{align*}
&\; KL(P_{\mathbf{w} + \mathbf{u}} \Vert Q) \le \sum_{i=1}^d \frac{\Vert\mathbf{W}_i{\Vert}_F^2}{2\xi_i^2} \\
& = \mathcal{O}\biggl( d^2 (B+\epsilon)^2 h\log(hd) \times  \\
& \; \frac{ \prod_{i=1}^d \Vert\widetilde{\mathbf{W}}_i\Vert^2_2 \frac{1-  e^2(2\alpha / \kappa)^r \overbar{\lip}(\nabla\ell\circ f_{\widetilde{\mathbf{w}}})^r}{1-  e^{2/r}(2\alpha / \kappa) \overbar{\lip}(\nabla\ell\circ f_{\widetilde{\mathbf{w}}})}  \bigl\{ 1+\frac{\alpha}{\kappa} \bigl( \prod_{i=1}^d \Vert\widetilde{\mathbf{W}}_i\Vert_2 \bigr) \sum_{i=1}^{d} \prod_{j=1}^{i} \Vert\widetilde{\mathbf{W}}_j{\Vert}_2 \bigr\}^2}{\gamma^2}\sum_{i=1}^d \frac{\Vert\mathbf{W}_i{\Vert}_F^2}{\Vert\widetilde{\mathbf{W}}_i{\Vert}_2^2} \biggr) \\
&\le  \mathcal{O}\biggl( d^2 (B+\epsilon)^2 h\log(hd)  \times \\
&\;\frac{ \prod_{i=1}^d \Vert{\mathbf{W}}_i\Vert^2_2   \frac{1- (2\alpha / \kappa)^r \overbar{\lip}(\nabla\ell\circ f_{{\mathbf{w}}})^r}{1-  (2\alpha / \kappa) \overbar{\lip}(\nabla\ell\circ f_{{\mathbf{w}}})} \bigl\{ 1+\frac{\alpha}{\kappa} \bigl( \prod_{i=1}^d \Vert{\mathbf{W}}_i\Vert_2 \bigr) \sum_{i=1}^{d} \prod_{j=1}^{i} \Vert{\mathbf{W}}_j{\Vert}_2 \bigr\}^2}{\gamma^2}\sum_{i=1}^d \frac{\Vert\mathbf{W}_i{\Vert}_F^2}{\Vert{\mathbf{W}}_i{\Vert}_2^2} \biggr)
\end{align*}
Applying the above bound to Lemma \ref{lemma Pac-Bayes} shows that for any $\eta>0$ the following holds with probability $1-\eta$ for any $\mathbf{w}$ where $\bigl| \Vert \mathbf{W}_i \Vert_2 - \Vert \widetilde{\mathbf{W}}_i \Vert_2 \bigr| \le \frac{1}{rd}\Vert \widetilde{\mathbf{W}}_i \Vert_2 $:
\begin{align*}
 L^{\pgm}_0(f_{\mathbf{w}}) \, \le \, \widehat{L}^{\pgm}_{\gamma}(f_{\mathbf{w}}) + \mathcal{O}\biggl( \sqrt{\frac{(B+\epsilon)^2d^2h\log(dh) \, \Phi^{\pgm}_{\epsilon , \kappa, r,\alpha}(f_\mathbf{w}) + \log \frac{n }{\eta}}{\gamma^2n }} \biggr),
\end{align*} 
where we consider $\Phi^{\pgm}_{\epsilon , \kappa, r,\alpha}(f_\mathbf{w})$ as the following expression
\begin{align*}
\biggl\{ \prod_{i=1}^d \Vert\mathbf{W}_i {\Vert}_2\,\bigl(1 +(\alpha/\kappa) \frac{1-(2\alpha/\kappa)^r\overbar{\lip}(\nabla \ell \circ f_\mathbf{w})^r }{1-(2\alpha/\kappa)\overbar{\lip}(\nabla \ell \circ f_\mathbf{w})}\bigl(\prod_{i=1}^d \Vert\mathbf{W}_i {\Vert}_2\bigr)\sum_{i=1}^d\prod_{j=1}^i \Vert\mathbf{W}_j {\Vert}_2\bigr)\biggr\}^2 \sum_{i=1}^d\frac{\Vert\mathbf{W}_i {\Vert}_F^2}{\Vert\mathbf{W}_i {\Vert}_2^2 }.
\end{align*}
Using a similar argument to our proof of Theorem \ref{Thm: fgm}, we can properly cover the spectral norms for each $\mathbf{W}_i$ with $2rd\log M$ points, such that for any feasible $\Vert\mathbf{W}_i\Vert_2$ value, satisfying the assumptions, we have value $a_i$ in our cover where $\bigl| \Vert\mathbf{W}_i\Vert_2 - a_i \bigr| \le \frac{1}{rd}a_i$. Therefore, we can cover all feasible combinations of spectral norms with $(2rd\log M)^d d n^{1/2d}$, which combined with the above discussion completes the proof.
\subsection{Proof of Theorem \ref{Thm WRM}}
We first show the following lemma providing a perturbation bound for WRM attacks. 
\begin{lemma}
Consider a $d$-layer neural net $f_{\mathbf{w}}$ satisfying the assumptions of Lemma \ref{Lemma fgm perturbation}. Then, for any weight perturbation $\mathbf{u}$ such that $\Vert \mathbf{U}_i \Vert_2 \le \frac{1}{d} \Vert \mathbf{W}_i \Vert_2$ we have
\begin{align*}
& \Vert \delta^{\wrm}_\mathbf{w+u} (\mathbf{x}) - \delta^{\wrm}_\mathbf{w} (\mathbf{x})  \Vert_2 \le \frac{e^2}{\lambda - \lip (\nabla \ell \circ f_{\mathbf{w}})}   \\
 & \; \times \bigl( \prod_{i=1}^d \Vert\mathbf{W}_i{\Vert}_2\bigr) \sum_{i=1}^{d} \biggl[ \frac{\Vert\mathbf{U}_i{\Vert}_2}{\Vert\mathbf{W}_i {\Vert}_2}  + \bigl(\Vert\mathbf{x}{\Vert}_2 + \frac{\prod_{j=1}^d \Vert\mathbf{W}_j {\Vert}_2}{\lambda}\bigr)\bigl(\prod_{j=1}^{i} \Vert\mathbf{W}_j{\Vert}_2\bigr) \sum_{j=1}^{i} \frac{\Vert\mathbf{U}_j{\Vert}_2}{\Vert\mathbf{W}_j {\Vert}_2} \biggr].
\end{align*}
In the above inequality, $\lip (\nabla \ell \circ f_{\mathbf{w}})$ denotes the Lipschitz constant of $\nabla_{\mathbf{x}} \ell(f_{\mathbf{w}}(\mathbf{x}),y)$.
\end{lemma}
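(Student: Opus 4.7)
The plan is to exploit the first-order optimality condition for the WRM objective. Since the WRM attack maximizes a $(\lambda-\lip(\nabla\ell\circ f_\mathbf{w}))$-strongly concave function when $\lip(\nabla\ell\circ f_\mathbf{w})<\lambda$, the maximizer is unique and satisfies the fixed-point relation
\begin{equation*}
\delta^{\wrm}_{\mathbf{w}}(\mathbf{x}) \;=\; \frac{1}{\lambda}\,\nabla_{\mathbf{x}}\ell\bigl(f_{\mathbf{w}}(\mathbf{x}+\delta^{\wrm}_{\mathbf{w}}(\mathbf{x})),\,y\bigr).
\end{equation*}
Subtracting this identity written at $\mathbf{w+u}$ and at $\mathbf{w}$, and inserting the intermediate quantity $\nabla_{\mathbf{x}}\ell(f_{\mathbf{w}}(\mathbf{x}+\delta^{\wrm}_{\mathbf{w+u}}(\mathbf{x})),y)$, I would split the difference via the triangle inequality into a pure weight-perturbation piece (with input held fixed at $\mathbf{x}' := \mathbf{x}+\delta^{\wrm}_{\mathbf{w+u}}(\mathbf{x})$) and a pure input-perturbation piece (with weights held fixed at $\mathbf{w}$).

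For the weight-perturbation piece, I would invoke Lemma \ref{Lemma fgm perturbation} at the shifted input $\mathbf{x}'$. To control $\Vert\mathbf{x}'\Vert_2$ cleanly, I will use that $\ell$ is $1$-Lipschitz and the Jacobian of $f_{\mathbf{w+u}}$ has spectral norm at most $\prod_i\Vert\mathbf{W}_i+\mathbf{U}_i\Vert_2$; combined with the optimality identity this yields $\Vert\delta^{\wrm}_{\mathbf{w+u}}(\mathbf{x})\Vert_2 \le \frac{1}{\lambda}\prod_i\Vert\mathbf{W}_i+\mathbf{U}_i\Vert_2 \le \frac{e}{\lambda}\prod_i\Vert\mathbf{W}_i\Vert_2$ under the hypothesis $\Vert\mathbf{U}_i\Vert_2 \le \Vert\mathbf{W}_i\Vert_2/d$. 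The extra factor $e$ then gets absorbed into the target bound's leading $e^2$, so that the $\Vert\mathbf{x}\Vert_2$ in Lemma \ref{Lemma fgm perturbation} is upgraded to $\Vert\mathbf{x}\Vert_2 + \prod_{j=1}^d\Vert\mathbf{W}_j\Vert_2/\lambda$.

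For the input-perturbation piece, I would use the Lipschitz constant of $\nabla_{\mathbf{x}}\ell(f_{\mathbf{w}}(\mathbf{x}),y)$ directly to obtain
\begin{equation*}
\tfrac{1}{\lambda}\,\bigl\Vert \nabla_{\mathbf{x}}\ell\bigl(f_{\mathbf{w}}(\mathbf{x}+\delta^{\wrm}_{\mathbf{w+u}}),y\bigr) - \nabla_{\mathbf{x}}\ell\bigl(f_{\mathbf{w}}(\mathbf{x}+\delta^{\wrm}_{\mathbf{w}}),y\bigr) \bigr\Vert_2 \,\le\, \tfrac{\lip(\nabla\ell\circ f_{\mathbf{w}})}{\lambda}\,\Vert\delta^{\wrm}_{\mathbf{w+u}}(\mathbf{x})-\delta^{\wrm}_{\mathbf{w}}(\mathbf{x})\Vert_2.
\end{equation*}
Moving this contractive term from the right-hand side to the left-hand side produces the factor $(1 - \lip(\nabla\ell\circ f_{\mathbf{w}})/\lambda)$ that, after dividing through, becomes the claimed $1/(\lambda - \lip(\nabla\ell\circ f_{\mathbf{w}}))$.

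The main obstacle is bookkeeping: one must verify that the $(1+1/d)^d \le e$ slack accumulated from bounding $\prod_i\Vert\mathbf{W}_i+\mathbf{U}_i\Vert_2$, the norm bound on $\mathbf{x}'$, and the application of Lemma \ref{Lemma fgm perturbation} all fold into the single $e^2$ prefactor of the target inequality without producing extra multiplicative constants. A secondary subtlety is that the implicit equation for $\delta^{\wrm}_{\mathbf{w}}(\mathbf{x})$ uses the true Lipschitz constant $\lip(\nabla\ell\circ f_{\mathbf{w}})$ rather than the geometric upper bound $\overbar{\lip}(\nabla\ell\circ f_{\mathbf{w}})$ of Lemma \ref{lemma, Lipschitz constant for gradient of nn}; the upgrade to $\overbar{\lip}$ in Theorem \ref{Thm WRM}'s statement is then obtained separately when applying this lemma inside the PAC-Bayes argument.
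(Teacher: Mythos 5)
Your proposal follows essentially the same route as the paper's proof: the fixed-point identity $\delta^{\wrm}_{\mathbf{w}}(\mathbf{x})=\frac{1}{\lambda}\nabla_{\mathbf{x}}\ell(f_{\mathbf{w}}(\mathbf{x}+\delta^{\wrm}_{\mathbf{w}}(\mathbf{x})),y)$ from strong concavity, the triangle-inequality split through $\nabla_{\mathbf{x}}\ell(f_{\mathbf{w}}(\mathbf{x}+\delta^{\wrm}_{\mathbf{w+u}}(\mathbf{x})),y)$, Lemma \ref{Lemma fgm perturbation} at the shifted input with the $\prod_j\Vert\mathbf{W}_j\Vert_2/\lambda$ norm bound on the perturbation, and the contraction-plus-rearrangement step producing $1/(\lambda-\lip(\nabla\ell\circ f_{\mathbf{w}}))$. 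Your bookkeeping of the extra factor $e$ from $\prod_i\Vert\mathbf{W}_i+\mathbf{U}_i\Vert_2$ is in fact slightly more careful than the paper, which silently writes $\prod_j\Vert\mathbf{W}_j\Vert_2/\lambda$ for the perturbed-weight attack norm; this only affects the absolute constant and is harmless.
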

\begin{proof}
First of all note that for any $\mathbf{x}$ we have $\Vert {\delta}^{\wrm}_\mathbf{w} (\mathbf{x})\Vert_2 \le (\prod_{i=1}^d\Vert \mathbf{W}_i\Vert_2) / \lambda$, because we assume $\lip (\nabla \ell \circ f_{\mathbf{w}}) < {\lambda}$ implying WRM's optimization is a convex optimization problem with the global solution ${\delta}^{\wrm}_{\mathbf{w}} (\mathbf{x})$ satisfying ${\delta}^{\wrm}_{\mathbf{w}} (\mathbf{x}) = \frac{1}{\lambda} \nabla \ell \circ f_{\mathbf{w}}(\mathbf{x} + \delta^{\wrm}_\mathbf{w} (\mathbf{x}))$ which is norm-bounded by $\frac{\lip (\ell \circ f_{\mathbf{w}})}{\lambda} \le (\prod_{i=1}^d\Vert \mathbf{W}_i\Vert_2) / \lambda$. Moreover, applying Lemma \ref{Lemma fgm perturbation} we have
\begin{align*}
& \big\Vert \delta^{\wrm}_\mathbf{w+u} (\mathbf{x}) - \delta^{\wrm}_\mathbf{w} (\mathbf{x})  \big\Vert_2 \\
=\: & \big\Vert \frac{1}{\lambda} \nabla_{\mathbf{x}} \ell ( f_{\mathbf{w+u}}(\mathbf{x + \delta^{\wrm}_\mathbf{w+u} (\mathbf{x})}) - \frac{1}{\lambda} \nabla_{\mathbf{x}} \ell ( f_{\mathbf{w}}(\mathbf{x + \delta^{\wrm}_\mathbf{w} (\mathbf{x})}) \big\Vert_2 \\
\le\: &  \big\Vert \frac{1}{\lambda} \nabla_{\mathbf{x}} \ell ( f_{\mathbf{w+u}}(\mathbf{x + \delta^{\wrm}_\mathbf{w+u} (\mathbf{x})}) - \frac{1}{\lambda} \nabla_{\mathbf{x}} \ell ( f_{\mathbf{w}}(\mathbf{x + \delta^{\wrm}_\mathbf{w+u} (\mathbf{x})})\big\Vert_2\\
&\; +  \big\Vert \frac{1}{\lambda} \nabla_{\mathbf{x}} \ell ( f_{\mathbf{w}}(\mathbf{x + \delta^{\wrm}_\mathbf{w+u} (\mathbf{x})}) - \frac{1}{\lambda} \nabla_{\mathbf{x}} \ell ( f_{\mathbf{w}}(\mathbf{x + \delta^{\wrm}_\mathbf{w} (\mathbf{x})})\big\Vert_2 \\
\le &\: \frac{e^2}{\lambda} \bigl( \prod_{i=1}^d \Vert\mathbf{W}_i{\Vert}_2\bigr) \sum_{i=1}^{d} \biggl[ \frac{\Vert\mathbf{U}_i{\Vert}_2}{\Vert\mathbf{W}_i {\Vert}_2}  + \bigl(\Vert\mathbf{x}{\Vert}_2 + \frac{\prod_{j=1}^d \Vert\mathbf{W}_j {\Vert}_2}{\lambda}\bigr)\bigl(\prod_{j=1}^{i} \Vert\mathbf{W}_j{\Vert}_2\bigr) \sum_{j=1}^{i} \frac{\Vert\mathbf{U}_j{\Vert}_2}{\Vert\mathbf{W}_j {\Vert}_2} \biggr] \\
 &\; + \frac{\lip (\nabla \ell \circ f_{\mathbf{w}})}{\lambda}  \big\Vert \delta^{\wrm}_\mathbf{w+u} (\mathbf{x}) - \delta^{\wrm}_\mathbf{w} (\mathbf{x})  \big\Vert_2 
\end{align*}
which shows the following inequality and hence completes the proof:
\begin{align*}
& \bigr(1- \frac{\lip (\nabla \ell \circ f_{\mathbf{w}})}{\lambda} \bigl)\,\big\Vert \delta^{\wrm}_\mathbf{w+u} (\mathbf{x}) - \delta^{\wrm}_\mathbf{w} (\mathbf{x})  \big\Vert_2 \\
 \le\;  & \frac{e^2}{\lambda} \bigl( \prod_{i=1}^d \Vert\mathbf{W}_i{\Vert}_2\bigr) \sum_{i=1}^{d} \biggl[ \frac{\Vert\mathbf{U}_i{\Vert}_2}{\Vert\mathbf{W}_i {\Vert}_2}  + \bigl(\Vert\mathbf{x}{\Vert}_2 + \frac{\prod_{j=1}^d \Vert\mathbf{W}_j {\Vert}_2}{\lambda}\bigr)\bigl(\prod_{j=1}^{i} \Vert\mathbf{W}_j{\Vert}_2\bigr) \sum_{j=1}^{i} \frac{\Vert\mathbf{U}_j{\Vert}_2}{\Vert\mathbf{W}_j {\Vert}_2} \biggr].
\end{align*}
\end{proof}
Combining the above lemma with Lemma \ref{lemma ERM perturbation}, for any norm-bounded $\Vert \mathbf{x} \Vert_2 \le B$ and perturbation vector $\mathbf{u}$ where $\Vert \mathbf{U}_i\Vert_2\le \frac{1}{d}\Vert \mathbf{W}_i\Vert_2$,
\begin{align*}
&\big\Vert f_{\mathbf{w}+\mathbf{u}}(\mathbf{x}+\delta^{\wrm}_\mathbf{w+u}(\mathbf{x})) - f_{\mathbf{w}}(\mathbf{x}+\delta^{\wrm}_\mathbf{w}(\mathbf{x})) \big\Vert_2 \\
\le \; &  \big\Vert f_{\mathbf{w}+\mathbf{u}}(\mathbf{x}+\delta^{\wrm}_\mathbf{w+u}(\mathbf{x})) - f_{\mathbf{w}}(\mathbf{x}+\delta^{\wrm}_\mathbf{w+u}(\mathbf{x})) \big\Vert_2 + \big\Vert f_{\mathbf{w}}(\mathbf{x}+\delta^{\wrm}_\mathbf{w+u}(\mathbf{x})) - f_{\mathbf{w}}(\mathbf{x}+\delta^{\wrm}_\mathbf{w}(\mathbf{x})) \big\Vert_2 \\
\le \; & e (B+ \frac{\prod_{j=1}^d \Vert\mathbf{W}_j {\Vert}_2}{\lambda}) \bigl( \prod_{i=1}^d \Vert\mathbf{W}_i{\Vert}_2\bigr)  \sum_{i=1}^{d}  \frac{\Vert\mathbf{U}_i{\Vert}_2}{\Vert\mathbf{W}_i {\Vert}_2} + \bigl( \prod_{i=1}^d \Vert\mathbf{W}_i{\Vert}_2\bigr) \\
&\; \times \frac{e^2 }{\lambda - \lip (\nabla \ell \circ f_{\mathbf{w}})}  \sum_{i=1}^{d} \biggl[ \frac{\Vert\mathbf{U}_i{\Vert}_2}{\Vert\mathbf{W}_i {\Vert}_2}  + \bigl(B+ \frac{\prod_{j=1}^d \Vert\mathbf{W}_j {\Vert}_2}{\lambda}\bigr)\bigl(\prod_{j=1}^{i} \Vert\mathbf{W}_j{\Vert}_2\bigr) \sum_{j=1}^{i} \frac{\Vert\mathbf{U}_j{\Vert}_2}{\Vert\mathbf{W}_j {\Vert}_2} \biggr] \\
\le \; & e (B+ \frac{\prod_{j=1}^d \Vert\mathbf{W}_j {\Vert}_2}{\lambda}) \bigl( \prod_{i=1}^d \Vert\mathbf{W}_i{\Vert}_2\bigr)  \sum_{i=1}^{d}  \frac{\Vert\mathbf{U}_i{\Vert}_2}{\Vert\mathbf{W}_i {\Vert}_2} + \bigl( \prod_{i=1}^d \Vert\mathbf{W}_i{\Vert}_2\bigr) \\
&\; \times  \frac{e^2}{\lambda - \overbar{\lip} (\nabla \ell \circ f_{\mathbf{w}})}  \sum_{i=1}^{d} \biggl[ \frac{\Vert\mathbf{U}_i{\Vert}_2}{\Vert\mathbf{W}_i {\Vert}_2}  + \bigl(B+ \frac{\prod_{j=1}^d \Vert\mathbf{W}_j {\Vert}_2}{\lambda}\bigr)\bigl(\prod_{j=1}^{i} \Vert\mathbf{W}_j{\Vert}_2\bigr) \sum_{j=1}^{i} \frac{\Vert\mathbf{U}_j{\Vert}_2}{\Vert\mathbf{W}_j {\Vert}_2} \biggr].
\end{align*}
Similar to the proofs of Theorems \ref{Thm: fgm},\ref{thm: pgm}, given $\widetilde{\mathbf{w}}$ we choose a zero-mean multivariate Gaussian distribution $Q$ with diagonal covariance matrix for random perturbation $\mathbf{u}$, with the $i$th layer $\mathbf{u}_i$'s standard deviation parameter $\xi_i = \frac{\Vert \widetilde{\mathbf{W}}_i \Vert_2}{\beta_{\tilde{\mathbf{w}}}} \xi$ where 
\begin{align*}
\xi = \frac{\gamma}{ 8e^5d\sqrt{2h\log(4hd)} (B+{\prod_{j=1}^d \Vert\widetilde{\mathbf{W}}_j {\Vert}_2}/{\lambda}) \bigl( \prod_{i=1}^d \Vert\widetilde{\mathbf{W}}_i{\Vert}_2\bigr)  \bigl( 1 + \frac{1}{\lambda - \overbar{\lip} (\nabla \ell \circ f_{\widetilde{\mathbf{w}}})} \sum_{i=1}^{d}\prod_{j=1}^{i} \Vert\widetilde{\mathbf{W}}_j{\Vert}_2 \bigr)   }.
\end{align*}
Using a union bound suggests the assumption of Lemma \ref{lemma Pac-Bayes}  $\Pr_{\mathbf{u}}\bigl( \max_{\mathbf{x}\in\mathcal{X}} \Vert f_{\mathbf{w+u}}(\mathbf{x}+\delta^{\wrm}_{\mathbf{w+u}}(\mathbf{x})) - f_{\mathbf{w}}(\mathbf{x}+\delta^{\wrm}_{\mathbf{w}}(\mathbf{x})) {\Vert}_{\infty} \le \frac{\gamma}{4} \bigr) \ge \frac{1}{2} $ holds for $Q$. Then, for any $\mathbf{w}$ satisfying  $\bigl| \Vert \mathbf{W}_i \Vert_2 - \Vert \widetilde{\mathbf{W}}_i \Vert_2 \bigr| \le \frac{1}{4d/\tau}  \Vert \widetilde{\mathbf{W}}_i \Vert_2$ we have $\overbar{\lip}(\ell\circ f_{\mathbf{w}}) \le (e^{\tau/4})^2 \overbar{\lip}(\ell\circ f_{\widetilde{\mathbf{w}}}) \le (e^{\tau/4})^2 \lambda (1-\tau) \le \frac{1-\tau}{1-\tau/2} \lambda \le (1-\frac{\tau}{2})\lambda$ which implies the guard-band $\tau$ for $\widetilde{\mathbf{w}}$ applies to ${\mathbf{w}}$ after being modified by a factor $2$. Hence,
\begin{align*} 
&\; KL(P_{\mathbf{w} + \mathbf{u}} \Vert Q) \le \sum_{i=1}^d \frac{\Vert\mathbf{W}_i{\Vert}_F^2}{2\xi_i^2} \\
& \le \mathcal{O}\biggl( d^2 \bigl(B+\prod_{j=1}^d \Vert\widetilde{\mathbf{W}}_j {\Vert}_2/{\lambda}\bigr)^2 h\log(hd)\\
&\; \times  \frac{ \bigl( \prod_{i=1}^d \Vert\widetilde{\mathbf{W}}_i{\Vert}^2_2\bigr)  \bigl( 1 + \frac{1}{\lambda - \overbar{\lip} (\nabla \ell \circ f_{\widetilde{\mathbf{w}})}} \sum_{i=1}^{d}\prod_{j=1}^{i} \Vert\widetilde{\mathbf{W}}_j{\Vert}_2 \bigr)^2 }  {\gamma^2}\sum_{i=1}^d \frac{\Vert\mathbf{W}_i{\Vert}_F^2}{\Vert\widetilde{\mathbf{W}}_i{\Vert}_2^2} \biggr) \\
& \le \mathcal{O}\biggl( d^2 \bigl(B+\prod_{j=1}^d \Vert\mathbf{W}_j {\Vert}_2/{\lambda}\bigr)\bigr)^2 h\log(hd)\\
&\; \times \frac{ \bigl( \prod_{i=1}^d \Vert\mathbf{W}_i{\Vert}^2_2\bigr)  \bigl( 1 + \frac{1}{\lambda - \overbar{\lip} (\nabla \ell \circ f_{\mathbf{w}})} \sum_{i=1}^{d}\prod_{j=1}^{i} \Vert\mathbf{W}_j{\Vert}_2 \bigr)^2 }{\gamma^2}\sum_{i=1}^d \frac{\Vert\mathbf{W}_i{\Vert}_F^2}{\Vert\widetilde{\mathbf{W}}_i{\Vert}_2^2} \biggr) \\ 
\end{align*}
Using this bound in Lemma \ref{lemma Pac-Bayes} implies that for any $\eta>0$ the following bound will hold with probability $1-\eta$ for any $\mathbf{w}$ where $\bigl| \Vert \mathbf{W}_i \Vert_2 - \Vert \widetilde{\mathbf{W}}_i \Vert_2 \bigr| \le \frac{1}{4d/\tau}\Vert \widetilde{\mathbf{W}}_i \Vert_2 $:
\begin{align*}
 L^{\wrm}_0(f_{\mathbf{w}}) \, \le \, \widehat{L}^{\wrm}_{\gamma}(f_{\mathbf{w}}) + \mathcal{O}\biggl( \sqrt{\frac{\bigl(B+\prod_{i=1}^d \Vert\mathbf{W}_i {\Vert}_2/{\lambda}\bigr)^2d^2h\log(dh) \, \Phi^{\wrm}_{\lambda}(f_\mathbf{w}) + d\log \frac{n }{\eta}}{\gamma^2n }} \biggr).
\end{align*} 
where we define $\Phi^{\wrm}_{\lambda}(f_\mathbf{w})$ to be $$\bigl\{ \prod_{i=1}^d \Vert\mathbf{W}_i {\Vert}_2\bigl(1+\frac{1}{\lambda - \overbar{\lip}(\nabla \ell \circ f_\mathbf{w})} (\prod_{i=1}^d \Vert\mathbf{W}_i\Vert_2) \sum_{i=1}^d\prod_{j=1}^i \Vert\mathbf{W}_j {\Vert}_2\bigr)\bigr\}^2 \sum_{i=1}^d\frac{\Vert\mathbf{W}_i {\Vert}_F^2}{\Vert\mathbf{W}_i {\Vert}_2^2 }.$$
Using a similar argument to our proofs of Theorems \ref{Thm: fgm} and \ref{thm: pgm}, we can cover the possible spectral norms for each $\mathbf{W}_i$ with $O((8d/\tau)\log M)$ points, such that for any feasible $\Vert\mathbf{W}_i\Vert_2$ value satisfying the theorem's assumptions, we have value $a_i$ in our cover where $\bigl| \Vert\mathbf{W}_i\Vert_2 - a_i \bigr|\le \frac{1}{4d/\tau}a_i$. Therefore, we can cover all feasible combinations of spectral norms with $O(((8d/\tau)\log M )^d d n^{1/2d})$, which combined with the above discussion finishes the proof.

\end{appendices}

\end{document}